\declaretheoremstyle[
  notefont = \bfseries,
  bodyfont=\normalfont\itshape,
  spaceabove = \parskip,
  spacebelow = 0pt
  ]{theoremstyle}
\declaretheorem[style=theoremstyle]{theorem}
\declaretheorem[style=theoremstyle]{lemma}
\declaretheorem[style=theoremstyle]{corollary}
\declaretheoremstyle[
  postheadspace = \newline,
  notefont = \bfseries,
  headpunct = {},
  bodyfont = \normalfont,
  spaceabove = \parskip,
  spacebelow = 0pt
  ]{examplestyle}
\declaretheoremstyle[
  notefont = \bfseries,
  bodyfont = \normalfont,
  spaceabove = \parskip,
  spacebelow = 0pt
  ]{definitionstyle}
\declaretheorem[style=definitionstyle]{definition}
\declaretheorem[
  style=definitionstyle,
  shaded={rulecolor=black, rulewidth=1pt, bgcolor=white},
  name=Definition,
  refname={Definition,Definitions},
  Refname={Definition,Definitions},
  sibling=definition
  ]{definitionboxed}
\declaretheoremstyle[
  headfont = \normalfont\itshape,
  notefont = \normalfont\itshape,
  bodyfont = \normalfont,
  spaceabove = \parskip,
  spacebelow = 0pt
  ]{remarkstyle}
\declaretheorem[style=remarkstyle]{remark}
\DeclareMathOperator{\Prob}{\mathbb{P}}
\DeclareMathOperator{\Exp}{\mathbb{E}}
\DeclareMathOperator{\Var}{\mathbb{V}}
\DeclareMathOperator{\Ber}{Ber}
\DeclareMathOperator*{\esssup}{ess\,sup}
\DeclareMathOperator{\argmax}{arg\,max}
\title{Calibration tests beyond classification}
\author{%
  David Widmann\\
  Department of Information Technology\\
  Uppsala University, Sweden\\
  \texttt{david.widmann@it.uu.se} \\
  \And
  Fredrik Lindsten\\
  Division of Statistics and Machine Learning\\
  Linköping University, Sweden\\
  \texttt{fredrik.lindsten@liu.se} \\
  \And
  Dave Zachariah\\
  Department of Information Technology\\
  Uppsala University, Sweden\\
  \texttt{dave.zachariah@it.uu.se}\\
}
\begin{document}

\maketitle

\begin{abstract}
Most supervised machine learning tasks are subject to irreducible prediction
errors. Probabilistic predictive models address this limitation by providing
probability distributions that represent a belief over plausible targets,
rather than point estimates. Such models can be a valuable tool in
decision-making under uncertainty, provided that the model output is
meaningful and interpretable. Calibrated models guarantee that the probabilistic
predictions are neither over- nor under-confident. In the machine learning literature,
different measures and statistical tests have been proposed and studied
for evaluating the calibration of classification models. For
regression problems, however, research has been focused on a weaker
condition of calibration based on predicted quantiles for real-valued targets.
In this paper, we propose the first framework that unifies calibration evaluation and
tests for general probabilistic predictive models. It applies to any such model, including
classification and regression models of arbitrary dimension. Furthermore,
the framework generalizes existing measures and provides a more intuitive
reformulation of a recently proposed framework for calibration in
multi-class classification. In particular, we reformulate and generalize the
kernel calibration error, its estimators, and hypothesis tests using scalar-valued
kernels, and evaluate the calibration of real-valued regression
problems.\footnote{The source code of the experiments is available at
\url{https://github.com/devmotion/Calibration_ICLR2021}.}
\end{abstract}

\section{Introduction}

We consider the general problem of modelling the relationship
between a feature $X$ and a target $Y$ in a probabilistic setting, i.e., we
focus on models that approximate the conditional probability
distribution $\Prob(Y | X)$ of target $Y$ for given feature $X$. The use of
probabilistic models that output a probability distribution instead
of a point estimate demands guarantees on the predictions beyond accuracy,
enabling meaningful and interpretable predicted uncertainties. One such statistical
guarantee is calibration, which has been studied extensively in metereological
and statistical literature~\citep{DeGroot1983,Murphy1977}.

A calibrated model ensures that almost every prediction matches the
conditional distribution of targets given this prediction.
Loosely speaking, in a classification setting a predicted
distribution of the model is called calibrated (or reliable), if
the empirically observed frequencies of the different classes
match the predictions in the long run, if the same
class probabilities would be predicted repeatedly.
A classical example is a weather forecaster who predicts each day if it is going to
rain on the next day. If she predicts rain with probability 60\% for
a long series of days, her forecasting model is calibrated \emph{for predictions
of 60\%} if it actually rains on 60\% of these days.

If this property holds for almost every probability distribution that
the model outputs, then the model is considered to be calibrated.
Calibration is an appealing property of a probabilistic model since it provides
safety guarantees on the predicted distributions even in the common case
when the model does not predict the true distributions $\Prob(Y | X)$.
Calibration, however, does not guarantee accuracy (or refinement)---a model
that always predicts the marginal probabilities of each class
is calibrated but probably inaccurate and of limited use. On the other
hand, accuracy does not imply calibration either since the predictions
of an accurate model can be too over-confident and hence miscalibrated,
as observed, e.g., for deep neural networks~\citep{Guo2017}.

In the field of machine learning, calibration has been
studied mainly for classification problems~\citep{Guo2017,Widmann2019,Vaicenavicius2019,Platt2000,Zadrozny2002,Broecker2009,Kull2017,Kumar2018,Kull2019}
and for quantiles and confidence intervals of models for regression problems
with real-valued targets \citep{Ho2005,Fasiolo2020,Rueda2006,Taillardat2016,Kuleshov2018}.
In our work, however, we do not restrict ourselves to these problem
settings but instead consider calibration for arbitrary predictive
models.
Thus, we generalize the common notion of calibration as:

\begin{definition}\label{def:calibration}
    Consider a model $P_X \coloneqq P(Y|X)$ of a conditional probability distribution
    $\Prob(Y|X)$. Then model $P$ is said to be calibrated
    if and only if
    \begin{equation}\label{eq:calibration_definition}
        \Prob(Y|P_X) = P_X \qquad \text{almost surely}.
    \end{equation}
\end{definition}

If $P$ is a classification model, \cref{def:calibration} coincides
with the notion of (multi-class) calibration by
\citet{Broecker2009,Vaicenavicius2019,Kull2019}. Alternatively, in classification
some authors~\citep{Naeini2015,Guo2017,Kumar2018} study the
strictly weaker property of confidence calibration~\citep{Kull2019},
which only requires
\begin{equation}\label{eq:calibration_confidence}
    \Prob{(Y = \argmax{P_X} | \max P_X)} = \max P_X \qquad \text{almost surely}.
\end{equation}
This notion of calibration corresponds to calibration according to \cref{def:calibration}
for a reduced problem with binary targets
$\widetilde{Y} \coloneqq \mathbbm{1}(Y = \argmax{P_X})$ and Bernoulli
distributions $\widetilde{P}_X \coloneqq \Ber(\max P_X)$ as probabilistic models.

For real-valued targets, \cref{def:calibration} coincides with the
so-called distribution-level calibration by \citet{Song2019}.
Distribution-level calibration
implies that the predicted quantiles are calibrated, i.e., the outcomes for all
real-valued predictions of the, e.g., 75\% quantile are actually below the
predicted quantile with 75\% probability~\citep[Theorem~1]{Song2019}. Conversely,
although quantile-based calibration is a
common approach for real-valued regression
problems~\citep{Ho2005,Fasiolo2020,Rueda2006,Taillardat2016,Kuleshov2018},
it provides weaker guarantees on the predictions. For instance, the
linear regression model in \cref{fig:ols_motivation} empirically shows quantiles that
appear close to being calibrated albeit being uncalibrated according to \cref{def:calibration}.

\begin{figure}[!htb]
    \centering
    \begin{subfigure}[t]{0.62\textwidth}
        \centering
        \textbf{a}
        \includegraphics[valign=t,scale=0.23]{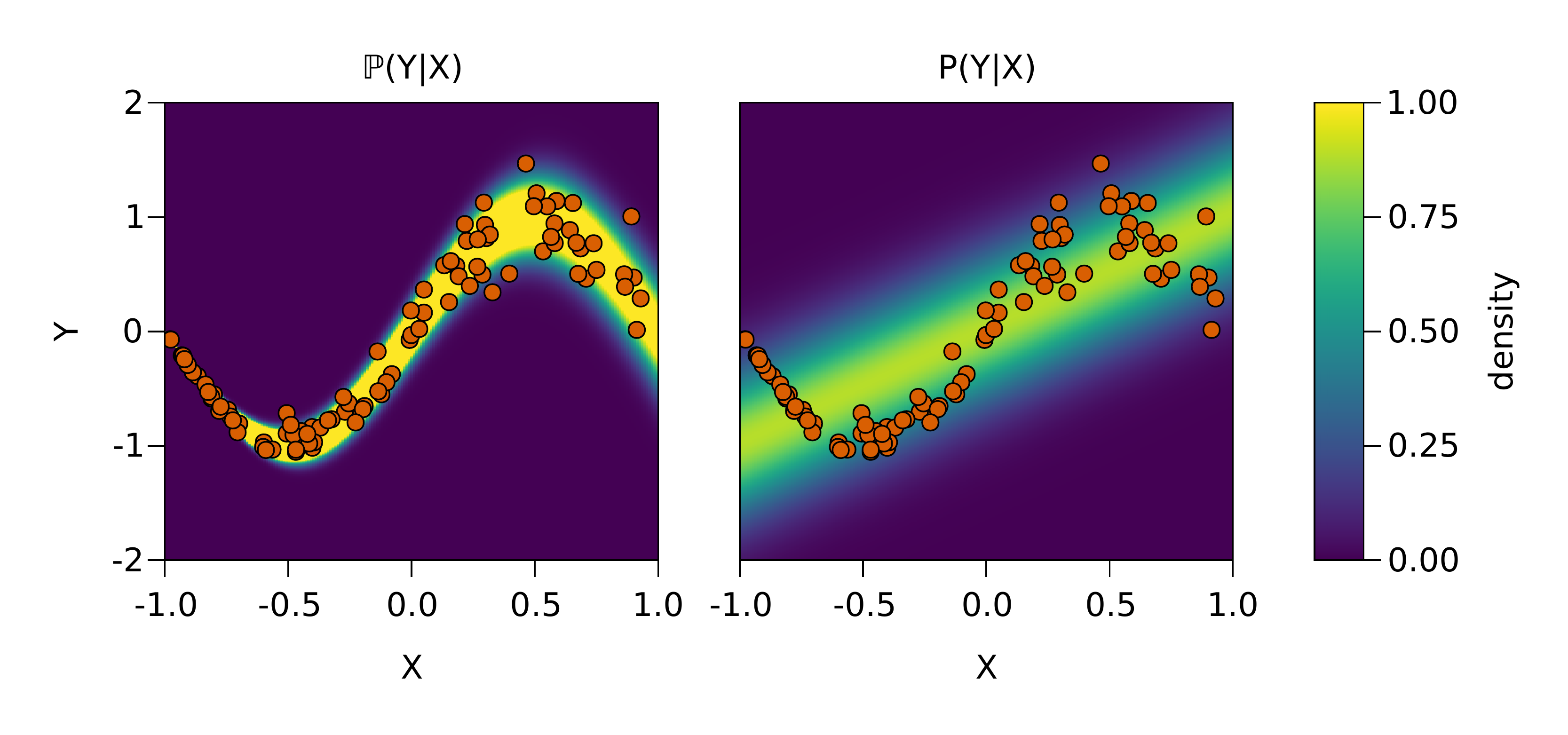}
    \end{subfigure}%
    \hfill%
    \begin{subfigure}[t]{0.38\textwidth}
        \centering
        \textbf{b}
        \includegraphics[valign=t,scale=0.23]{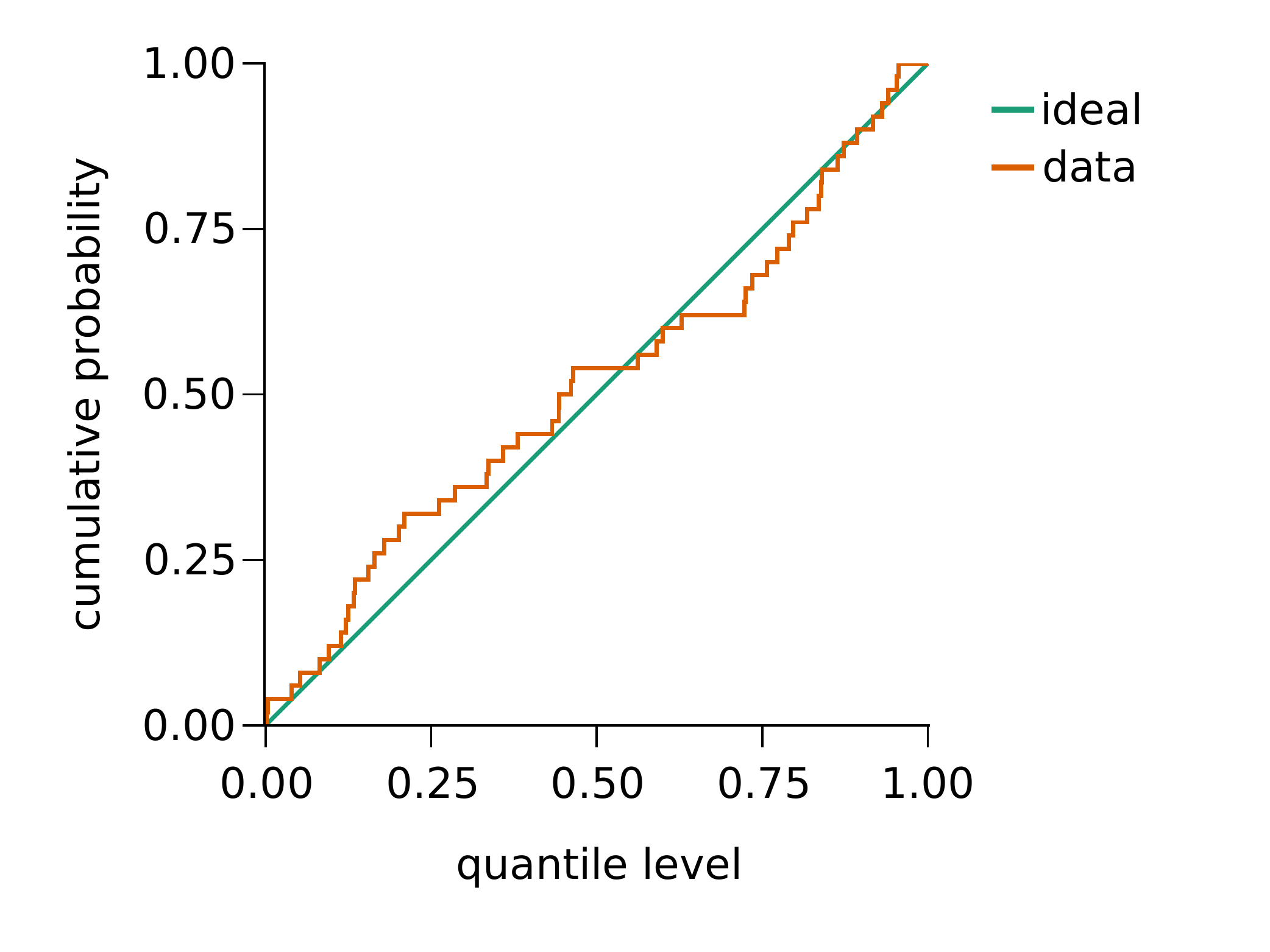}
    \end{subfigure}
    \caption{Illustration of a conditional distribution $\Prob(Y|X)$ with scalar
    feature and target. We consider a Gaussian predictive model $P$, obtained by
    ordinary least squares regression with 100 training data points (orange dots).
    Empirically the predicted quantiles on 50 validation data points appear close
    to being calibrated, although model $P$ is uncalibrated according to
    \cref{def:calibration}. Using the framework in this
    paper, on the same validation data a statistical test allows us to reject
    the null hypothesis that model $P$ is calibrated at a significance level of
    $\alpha = 0.05$ ($p < 0.05$). See~\cref{app:ols} for details.}%
    \label{fig:ols_motivation}
\end{figure}

\Cref{fig:ols_motivation} also raises the question of how to assess
calibration for general target spaces in the sense of \cref{def:calibration},
without having to rely on visual inspection. In classification, measures of
calibration such as the commonly used expected calibration error~(ECE)~\citep{Naeini2015,Guo2017,Vaicenavicius2019,Kull2019}
and the maximum calibration error~(MCE)~\citep{Naeini2015} try to capture
the average and maximal discrepancy between the distributions on the
left hand side and the right hand side of \cref{eq:calibration_definition}
or \cref{eq:calibration_confidence}, respectively.
These measures can be generalized to other target spaces (see~\cref{def:ece_mce}),
but unfortunately estimating these calibration errors from observations
of features and corresponding targets is problematic. Typically, the
predictions are different for (almost) all observations, and hence
estimation of the conditional probability $\Prob{(Y|P_X)}$, which is 
needed in the estimation of ECE and MCE, is challenging even for low-dimensional
target spaces and usually leads to biased and inconsistent
estimators~\citep{Vaicenavicius2019}.

Kernel-based calibration errors such as the maximum mean calibration
error~(MMCE)~\citep{Kumar2018} and the kernel calibration
error~(KCE)~\citep{Widmann2019} for confidence and multi-class
calibration, respectively, can be estimated without first estimating
the conditional probability and hence avoid this issue. They are
defined as the expected value of a weighted sum of the differences of the
left and right hand side of \cref{eq:calibration_definition} for each
class, where the weights are given as a function of the predictions
(of all classes) and chosen such that the calibration error is maximized.
A reformulation with matrix-valued kernels~\citep{Widmann2019} yields
unbiased and differentiable estimators without explicit dependence on $\Prob(Y|P_X)$,
which simplifies the estimation and allows to explicitly account
for calibration in the training objective~\citep{Kumar2018}. Additionally, the
kernel-based framework allows the derivation of reliable statistical hypothesis
tests for calibration in multi-class classification~\citep{Widmann2019}.

However, both the construction as a weighted difference of the class-wise
distributions in \cref{eq:calibration_definition} and the reformulation with
matrix-valued kernels require finite target spaces and hence cannot be applied
to regression problems. To be able to deal with general target spaces, we present
a new and more general framework of calibration errors without these limitations.

Our framework can be used to reason about and test for calibration of \emph{any
probabilistic predictive model}. As explained above, this is in stark contrast with
existing methods
that are restricted to simple output distributions, such as classification
and \emph{scalar-valued} regression problems. 
A \emph{key contribution} of this paper is a new framework that
is applicable to \emph{multivariate} regression, as well as situations when the output is of a
different (e.g., discrete ordinal) or more complex (e.g., graph-structured) type, with clear practical implications.

Within this framework a KCE for general target spaces is obtained. We want to highlight
that for multi-class classification problems its formulation is more intuitive and
simpler to use than the measure proposed by \citet{Widmann2019} based on matrix-valued
kernels.
To ease the application of the KCE we derive several
estimators of the KCE with subquadratic sample complexity and their asymptotic properties
in tests for calibrated models, which improve on existing estimators and tests in the
two-sample test literature by exploiting the special structure of the calibration
framework. Using the proposed framework, we numerically evaluate the calibration of neural
network models and ensembles of such models.

\section{Calibration error: A general framework}

In classification, the distributions on the left and right hand side of
\cref{eq:calibration_definition} can be interpreted as vectors in the probability
simplex. Hence ultimately the distance measure for ECE and MCE
(see~\cref{def:ece_mce}) can be chosen as a distance measure of real-valued vectors.
The total variation, Euclidean, and squared Euclidean distances
are common choices~\citep{Guo2017,Kull2019,Vaicenavicius2019}.
However, in a general setting measuring the discrepancy between
$\Prob(Y|P_X)$ and $P_X$ cannot necessarily be reduced to measuring
distances between vectors. The conditional distribution $\Prob(Y|P_X)$ can
be arbitrarily complex, even if the predicted distributions are restricted to a
simple class of distributions that can be represented as real-valued
vectors. Hence in general we have to resort to dedicated distance measures
of probability distributions.

Additionally, the estimation of 
conditional distributions $\Prob(Y|P_X)$ is challenging, even more so than in the
restricted case of classification, since in general these distributions can be
arbitrarily complex. To circumvent this problem, we propose to use the
following construction:
We define a random variable $Z_X \sim P_X$ obtained from the predictive model and study
 the discrepancy between the \emph{joint}
distributions of the two pairs of random variables $(P_X, Y)$ and $(P_X, Z_X)$, respectively,
instead of the discrepancy between the \emph{conditional} distributions
$\Prob(Y|P_X)$ and $P_X$. Since
\begin{equation*}
    (P_X, Y) \stackrel{d}{=} (P_X, Z_X) \qquad \text{if and only if}
    \qquad \Prob(Y|P_X) = P_X \quad \text{almost surely},
\end{equation*}
model $P$ is calibrated if and only if the distributions of $(P_X, Y)$
and $(P_X, Z_X)$ are equal.

The random variable pairs $(P_X, Y)$ and $(P_X, Z_X)$
take values in the product space $\mathcal{P} \times \mathcal{Y}$, where $\mathcal{P}$
is the space of predicted distributions $P_X$ and $\mathcal{Y}$ is the space of
targets $Y$. For instance, in classification, $\mathcal{P}$ could be the probability
simplex and $\mathcal{Y}$ the set of all class labels, whereas in the case of
Gaussian predictive models for scalar targets $\mathcal{P}$ could be the space of
normal distributions and $\mathcal{Y}$ be $\mathbb{R}$.

The study of the joint distributions of $(P_X, Y)$ and $(P_X, Z_X)$ motivates the
definition of a generally applicable calibration error as an integral probability
metric~\citep{Sriperumbudur2009,Sriperumbudur2012,Mueller1997} between these
distributions. In contrast to common $f$-divergences such as the Kullback-Leibler
divergence, integral probability metrics do not require that one distribution is
absolutely continuous with respect to the other, which cannot be guaranteed in general.

\begin{definitionboxed}\label{def:ce}
    Let $\mathcal{Y}$ denote the space of targets $Y$, and $\mathcal{P}$ the
    space of predicted distributions $P_X$.
    We define the calibration error with respect to a space of functions
    $\mathcal{F}$ of the form $f \colon \mathcal{P} \times \mathcal{Y} \to \mathbb{R}$
    as
    \begin{equation}\label{eq:calibrationerror}
        \mathrm{CE}_\mathcal{F} \coloneqq \sup_{f \in \mathcal{F}} \big| \mathbb{E}_{P_X, Y}{f(P_X, Y)} - \Exp_{P_X, Z_X} f(P_X, Z_X) \big|.
    \end{equation}
\end{definitionboxed}

By construction, if model $P$ is calibrated, then $\mathrm{CE}_{\mathcal{F}} = 0$
regardless of the choice of $\mathcal{F}$. However, the converse statement is not true
for arbitrary function spaces $\mathcal{F}$. From the theory of integral
probability metrics~\citep[see, e.g.,][]{Mueller1997,Sriperumbudur2009,Sriperumbudur2012},
we know that for certain choices of $\mathcal{F}$ the calibration error in
\cref{eq:calibrationerror} is a well-known metric on the product space
$\mathcal{P} \times \mathcal{Y}$, which implies that $\mathrm{CE}_\mathcal{F} = 0$
if and only if model $P$ is calibrated. Prominent examples include the maximum mean
discrepancy\footnote{As we discuss in \cref{sec:kce}, the MMD is a metric if and only if
the employed kernel is characteristic.}~(MMD)~\citep{Gretton2007},
the total variation distance, the Kantorovich distance,
and the Dudley metric~\citep[p.~310]{Dudley1989}.

As pointed out above, \cref{def:ce} is a generalization of the definition
for multi-class classification proposed by \citet{Widmann2019}---which is based on vector-valued functions and only applicable to finite
target spaces---to \emph{any probabilistic predictive model}.
In \cref{app:classification} we show this explicitly and discuss
the special case of classification problems in more detail. Previous results~\citep{Widmann2019}
imply that in classification $\mathrm{MMCE}$ and, for common distance measures
$d(\cdot, \cdot)$ such as the total variation and squared Euclidean distance,
$\mathrm{ECE}_d$ and $\mathrm{MCE}_d$ are special cases of $\mathrm{CE}_{\mathcal{F}}$.
In \cref{app:ece_infinite} we show that our framework also covers
natural extensions of $\mathrm{ECE}_d$ and $\mathrm{MCE}_d$ to countably
infinite discrete target spaces, which to our knowledge have not been studied
before and occur, e.g., in Poisson regression.

The literature of integral
probability metrics suggests that we can resort to estimating
$\mathrm{CE}_{\mathcal{F}}$ from i.i.d.\ samples from the distributions
of $(P_X, Y)$ and $(P_X, Z_X)$. For the MMD, the Kantorovich distance, and
the Dudley metric tractable strongly consistent empirical estimators
exist~\citep{Sriperumbudur2012}. Here the empirical
estimator for the MMD is particularly appealing since compared with the
other estimators \textquote[\cite{Sriperumbudur2012}]{it is
computationally cheaper, the empirical estimate converges at a faster
rate to the population value, and the rate of convergence is independent
of the dimension $d$ of the space (for $S = \mathbb{R}^d$)}.

Our specific design of $(P_X, Z_X)$ can be exploited to improve
on these estimators. If $\mathbb{E}_{Z_x \sim P_x} f(P_x, Z_x)$
can be evaluated analytically for a fixed prediction $P_x$, then
$\mathrm{CE}_{\mathcal{F}}$ can be estimated empirically with reduced
variance by marginalizing out $Z_X$. Otherwise
$\mathbb{E}_{Z_x \sim P_x} f(P_x, Z_x)$ has to be estimated, but in
contrast to the common estimators of the integral probability metrics
discussed above 
the artificial construction of $Z_X$ allows us to
approximate it by numerical integration methods such as (quasi) Monte
Carlo integration or quadrature rules with arbitrarily small error and
variance. Monte Carlo integration preserves statistical properties of
the estimators such as unbiasedness and consistency.

\section{Kernel calibration error}
\label{sec:kce}

For the remaining parts of the paper we focus on the MMD formulation of
$\mathrm{CE}_{\mathcal{F}}$ due to the appealing properties of the common
empirical estimator mentioned above. We derive calibration-specific
analogues of results for the MMD that exploit the special structure of the
distribution of $(P_X, Z_X)$ to improve on existing estimators and tests
in the MMD literature. To the best of our knowledge these variance-reduced
estimators and tests have not been discussed in the MMD literature.

Let $k \colon (\mathcal{P} \times \mathcal{Y}) \times (\mathcal{P} \times \mathcal{Y}) \to \mathbb{R}$
be a measurable kernel with corresponding reproducing kernel Hilbert space (RKHS)
$\mathcal{H}$, and assume
that
\begin{equation*}
\Exp_{P_X,Y} k^{1/2}\big((P_X, Y), (P_X, Y)\big) < \infty
\quad \text{and} \quad
\Exp_{P_X,Z_X} k^{1/2}\big((P_X, Z_X), (P_X, Z_X)\big) < \infty.
\end{equation*}
We discuss how such kernels can be constructed in a generic way in \cref{sec:kernel_choice}
below.

\begin{definition}\label{def:kce}
Let $\mathcal{F}_k$ denote the unit ball in $\mathcal{H}$, i.e.,
$\mathcal{F} \coloneqq \{f \in \mathcal{H} | \|f\|_{\mathcal{H}} \leq 1 \}$.
Then the kernel calibration error~(KCE) with respect to kernel $k$ is
defined as
\begin{equation*}
    \mathrm{KCE}_k \coloneqq \mathrm{CE}_{\mathcal{F}_k} = \sup_{f \in \mathcal{F}_k} \big| \mathbb{E}_{P_X, Y}{f(P_X, Y)} - \Exp_{P_X, Z_X} f(P_X, Z_X) \big|.
\end{equation*}
\end{definition}

As known from the MMD literature, a more explicit formulation can be given for
the squared kernel calibration error $\mathrm{SKCE}_k \coloneqq \mathrm{KCE}^2_k$ (see \cref{lemma:skce}).
A similar explicit expression for $\mathrm{SKCE}_k$ was obtained by \citet{Widmann2019} for
the special case of classification problems. However, their expression relies on 
$\mathcal{Y}$ being finite and is based on matrix-valued kernels over the
finite-dimensional probability simplex $\mathcal{P}$. A key difference to the
expression in \cref{lemma:skce} is that we instead propose to use real-valued kernels defined
on the product space of predictions and targets. This construction is applicable to
arbitrary target spaces and does not require $\mathcal{Y}$ to be finite.

\subsection{Choice of kernel}
\label{sec:kernel_choice}

The construction of the product space $\mathcal{P} \times \mathcal{Y}$ suggests
the use of tensor product kernels $k = k_{\mathcal{P}} \otimes k_{\mathcal{Y}}$, where
$k_{\mathcal{P}} \colon \mathcal{P} \times \mathcal{P} \to \mathbb{R}$ and
$k_{\mathcal{Y}} \colon \mathcal{Y} \times \mathcal{Y} \to \mathbb{R}$ are
kernels on the spaces of predicted distributions and targets, respectively.\footnote{As
mentioned above, our framework rephrases and generalizes the construction used
by \citet{Widmann2019}. The matrix-valued kernels that they employ can be
recovered by setting $k_{\mathcal{P}}$ to a Laplacian kernel on the probability simplex and
$k_{\mathcal{Y}}(y, y') = \delta_{y,y'}$.}

So-called characteristic kernels guarantee that $\mathrm{KCE} = 0$ if and only if the distributions of $(P_X,Y)$ and $(P_X,Z_X)$ are equal, i.e., if model $P$ is calibrated~\citep{Fukumizu2004,Fukumizu2008}.
Many common kernels such as the Gaussian and Laplacian kernel on $\mathbb{R}^d$ are characteristic~\citep{Fukumizu2008}.%
\footnote{For a general discussion about characteristic kernels and their relation to universal kernels we refer to the paper by \citet{Sriperumbudur2011}.}
For kernels $k_{\mathcal{P}}$ and $k_{\mathcal{Y}}$ their characteristic property is necessary but generally not sufficient for the tensor product kernel $k_{\mathcal{P}} \otimes k_{\mathcal{Y}}$ to be characteristic~\citep[Example~1]{Szabo2018}.
However, if $k_{\mathcal{P}}$ and $k_{\mathcal{Y}}$ are characteristic, continuous, bounded, and translation-invariant kernels on $\mathbb{R}^{d_i}$ ($i=1,2$), then $k_{\mathcal{P}} \otimes k_{\mathcal{Y}}$ is characteristic~\citep[Theorem~4]{Szabo2018}.
We use this property to construct characteristic tensor product kernels for regression problems.
More generally, if $k_{\mathcal{P}}$ and $k_{\mathcal{Y}}$ are universal kernels on locally compact Polish spaces, then $k_{\mathcal{P}} \otimes k_{\mathcal{Y}}$ is universal and hence characteristic~\citep[Theorem~5]{Szabo2018}.
In classification even the reverse implication holds, and $k_{\mathcal{P}} \otimes k_{\mathcal{Y}}$ is characteristic if and only if $k_{\mathcal{P}}$ and $k_{\mathcal{Y}}$ are universal~\citep[Corollary~3.15]{Steinwart2021}.%

It is suggestive to construct kernels on general spaces of predicted distributions as
\begin{equation}\label{eq:kernelp}
k_{\mathcal{P}}(p, p') = \exp{\big(- \lambda d^\nu_{\mathcal{P}}(p, p') \big)},
\end{equation}
where $d_{\mathcal{P}}(\cdot, \cdot)$ is a metric on $\mathcal{P}$ and $\nu, \lambda > 0$ 
are kernel hyperparameters. The Wasserstein distance is a widely used metric for
distributions from optimal transport theory that allows to lift a ground
metric on the target space and possesses many important
properties~\citep[see, e.g.,][Chapter~2.4]{Peyre2018}. In general, however,
it does not lead to valid kernels $k_{\mathcal{P}}$, apart from the notable exception
of elliptically contoured distributions such as normal and Laplace
distributions~\citep[Chapter~8.3]{Peyre2018}.

In machine learning, common
probabilistic predictive models output parameters of distributions such as
mean and variance of normal distributions. Naturally these
parameterizations give rise to injective mappings
$\phi \colon \mathcal{P} \to \mathbb{R}^d$ that can be used to define a
Hilbertian metric
\begin{equation*}\label{eq:hilbertian}
    d_{\mathcal{P}}(p, p') = {\|\phi(p) - \phi(p')\|}_2.
\end{equation*}
For such metrics, $k_{\mathcal{P}}$ in \cref{eq:kernelp}
is a valid kernel for all $\lambda > 0$ and
$\nu \in (0, 2]$~\citep[Corollary~3.3.3, Proposition~3.2.7]{Berg1984}.
In~\cref{app:mixture} we show that for many mixture models, and hence model ensembles,
Hilbertian metrics between model components
can be lifted to Hilbertian metrics between mixture models. This construction
is a generalization of the Wasserstein-like distance for Gaussian mixture models proposed
by \citet{Delon2019,Chen2019,Chen2020}.

\subsection{Estimation}

Let $(X_1, Y_1), \ldots, (X_n, Y_n)$ be a data set of features and targets which are i.i.d.\
according to the law of $(X, Y)$. Moreover, for notational brevity, for
$(p, y), (p', y') \in \mathcal{P} \times \mathcal{Y}$ we let
\begin{multline*}
    h\big((p, y), (p', y')\big) \coloneqq k\big((p, y), (p', y')\big) - \Exp_{Z \sim p} k\big((p, Z), (p', y')\big) \\
    - \Exp_{Z' \sim p'} k\big((p, y), (p', Z')\big) + \Exp_{Z \sim p, Z' \sim p'} k\big((p, Z), (p', Z')\big).
\end{multline*}
Note that in contrast to the regular MMD we marginalize out $Z$ and $Z'$. Similar to the MMD, there exist
consistent estimators of the SKCE, both biased and unbiased.

\begin{restatable}{lemma}{lemmaskceb}\label{lemma:skceb}
The plug-in estimator of $\mathrm{SKCE}_k$ is non-negatively biased. It is given by
\begin{equation*}
    \widehat{\mathrm{SKCE}}_k =
    \frac{1}{n^2} \sum_{i,j=1}^n h\big((P_{X_i}, Y_i), (P_{X_j}, Y_j)\big).
\end{equation*}
\end{restatable}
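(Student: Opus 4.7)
The plan is to interpret $\mathrm{SKCE}_k$ as the squared distance between kernel mean embeddings of the two joint distributions, construct natural empirical estimators of these mean embeddings that exploit the known conditional $Z_X \mid P_X \sim P_X$, and then show that the plug-in estimator is precisely the squared norm of the difference of the empirical embeddings. Both the claimed form and the non-negative bias then drop out of standard Hilbert-space arguments.

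More concretely, I would first invoke \cref{lemma:skce} (the explicit formula for $\mathrm{SKCE}_k$) and rewrite
\[
\mathrm{SKCE}_k = \|\mu_1 - \mu_2\|_{\mathcal{H}}^2, \qquad
\mu_1 \coloneqq \Exp_{P_X, Y} k\big((P_X, Y), \cdot\big), \quad
\mu_2 \coloneqq \Exp_{P_X} \Exp_{Z \sim P_X} k\big((P_X, Z), \cdot\big),
\]
using the integrability assumption on $k$ together with the reproducing property to guarantee that $\mu_1, \mu_2 \in \mathcal{H}$. Next, I would define the empirical embeddings
\[
\hat{\mu}_1 \coloneqq \frac{1}{n} \sum_{i=1}^n k\big((P_{X_i}, Y_i), \cdot\big), \qquad
\hat{\mu}_2 \coloneqq \frac{1}{n} \sum_{i=1}^n \Exp_{Z \sim P_{X_i}} k\big((P_{X_i}, Z), \cdot\big).
\]
Note that by i.i.d.\ sampling and tower property, $\Exp \hat{\mu}_1 = \mu_1$ and $\Exp \hat{\mu}_2 = \mu_2$, so $\hat{\mu}_1 - \hat{\mu}_2$ is an unbiased estimator of $\mu_1 - \mu_2$ in $\mathcal{H}$.

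For the formula, I would expand $\|\hat{\mu}_1 - \hat{\mu}_2\|_{\mathcal{H}}^2$ as the sum of three inner products, apply the reproducing property to evaluate each, and pull the expectations over $Z, Z'$ outside using linearity. Collecting the four resulting double sums term by term recovers exactly
\[
\|\hat{\mu}_1 - \hat{\mu}_2\|_{\mathcal{H}}^2 = \frac{1}{n^2} \sum_{i,j=1}^n h\big((P_{X_i}, Y_i), (P_{X_j}, Y_j)\big),
\]
by the definition of $h$. This identifies the plug-in estimator with $\widehat{\mathrm{SKCE}}_k$.

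For the bias, writing $\varepsilon \coloneqq (\hat{\mu}_1 - \hat{\mu}_2) - (\mu_1 - \mu_2)$, one has $\Exp \varepsilon = 0$, and expanding $\|(\mu_1 - \mu_2) + \varepsilon\|_{\mathcal{H}}^2$ gives
\[
\Exp \widehat{\mathrm{SKCE}}_k = \|\mu_1 - \mu_2\|_{\mathcal{H}}^2 + \Exp \|\varepsilon\|_{\mathcal{H}}^2 \geq \mathrm{SKCE}_k,
\]
since the cross term vanishes by unbiasedness. I do not expect any substantial obstacle: the only step that requires care is a measurability/integrability check justifying that Bochner expectations commute with inner products so the expansion above is rigorous, which follows from the assumed square-integrability of the kernel along $(P_X, Y)$ and $(P_X, Z_X)$.
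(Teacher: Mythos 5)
Your proposal is correct, and the derivation of the explicit formula is essentially the paper's: you construct the Riesz representers of the marginalized evaluation functionals (your $\hat{\mu}_2$ is $\tfrac{1}{n}\sum_i \rho_i$ in the paper's notation), identify the plug-in estimator with $\|\hat{\mu}_1 - \hat{\mu}_2\|_{\mathcal{H}}^2$ via duality, and expand the inner products. Where you diverge is the bias argument. The paper splits the double sum into off-diagonal terms, whose expectation is $(n-1)n^{-1}\,\mathrm{SKCE}_k$, and diagonal terms $\Exp_{P_X,Y} h\big((P_X,Y),(P_X,Y)\big) = \Exp\|k(\cdot,(P_X,Y))-\rho\|_{\mathcal{H}}^2$, which it then bounds below by $\mathrm{SKCE}_k$ via Cauchy--Schwarz. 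You instead use the orthogonal (bias--variance) decomposition $\Exp\|(\mu_1-\mu_2)+\varepsilon\|_{\mathcal{H}}^2 = \|\mu_1-\mu_2\|_{\mathcal{H}}^2 + \Exp\|\varepsilon\|_{\mathcal{H}}^2$ with the cross term killed by $\Exp\varepsilon = 0$. The two routes compute the same quantity --- the bias equals $n^{-1}\big(\Exp\|\xi\|_{\mathcal{H}}^2 - \|\Exp\xi\|_{\mathcal{H}}^2\big)$ for $\xi = k(\cdot,(P_X,Y)) - \rho$ either way --- but yours makes the non-negativity structurally transparent (it is literally a variance in $\mathcal{H}$) and additionally identifies the bias exactly rather than merely bounding it, at the cost of needing the Bochner-integral interchange $\Exp\langle \mu_1-\mu_2,\varepsilon\rangle_{\mathcal{H}} = \langle \mu_1-\mu_2,\Exp\varepsilon\rangle_{\mathcal{H}}$, which you correctly flag and which assumption (K2) justifies since it gives $\Exp\|\hat{\mu}_1\|_{\mathcal{H}}, \Exp\|\hat{\mu}_2\|_{\mathcal{H}} < \infty$. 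The paper's Cauchy--Schwarz route avoids any explicit appeal to vanishing cross terms but obscures that the slack is exactly a variance.
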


Inspired by the block tests for the regular MMD~\citep{Zaremba2013}, we define the
following class of unbiased estimators. Note that in contrast to $\widehat{\mathrm{SKCE}}_k$
they do not include terms of the form $h\big((P_{X_i}, Y_i), (P_{X_i}, Y_i)\big)$.

\begin{restatable}{lemma}{lemmaskceblock}\label{lemma:skceblock}
The block estimator of $\mathrm{SKCE}_k$ with
block size $B \in \{2,\ldots,n\}$, given by
\begin{equation*}
    \widehat{\mathrm{SKCE}}_{k,B} \coloneqq \bigg\lfloor \frac{n}{B} \bigg\rfloor^{-1} \sum_{b=1}^{\lfloor n / B \rfloor} \binom{B}{2}^{-1} \sum_{(b - 1) B < i < j \leq bB} h\big((P_{X_{i}}, Y_i), (P_{X_j}, Y_j)\big),
\end{equation*}
is an unbiased estimator of $\mathrm{SKCE}_k$.
\end{restatable}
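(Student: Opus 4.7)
The plan is to reduce the claim, via linearity of expectation, to a single identity, namely that
\begin{equation*}
\Exp h\big((P_{X_i}, Y_i), (P_{X_j}, Y_j)\big) = \mathrm{SKCE}_k \qquad \text{for every } i \neq j.
\end{equation*}
By construction $\widehat{\mathrm{SKCE}}_{k,B}$ is an equally weighted average of $\lfloor n/B \rfloor$ within-block means, each of which is itself an equally weighted average of $h$-values indexed by pairs $i < j$ inside a common block. All such pairs satisfy $i \neq j$, so once the displayed identity is known, unbiasedness follows immediately by pulling expectation through the finite double sum.

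To establish the identity, I would first observe that for $i \neq j$ the pairs $(X_i, Y_i)$ and $(X_j, Y_j)$ are i.i.d.\ copies of $(X, Y)$, so the expectation depends only on the law of $(X, Y)$ and not on the choice of indices. Then I would expand $h$ into its four defining terms and compute each expectation separately. Using Fubini (justified by the integrability of $k^{1/2}$ on the diagonal assumed just before \cref{def:kce}, combined with Cauchy--Schwarz in the RKHS), the inner expectations $\Exp_{Z \sim P_{X_i}}$ and $\Exp_{Z' \sim P_{X_j}}$ can be absorbed into the outer expectation by realizing auxiliary variables $Z_{X_i} \sim P_{X_i}$ and $Z_{X_j} \sim P_{X_j}$, conditionally independent of the data and of each other given $X_i, X_j$. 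The resulting four terms then form exactly the standard MMD$^2$ expansion between the joint distributions of $(P_X, Y)$ and $(P_X, Z_X)$, which is the expression for $\mathrm{SKCE}_k$ provided by \cref{lemma:skce}.

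There is no real obstacle beyond careful bookkeeping: the symmetry of $k$, together with the i.i.d.\ structure, guarantees that the two cross-terms contribute equally and reproduce the $-2\Exp k((P_X, Y), (P_{X'}, Z_{X'}))$ term of the MMD$^2$ expansion. The only point that needs genuine attention is the interchange of expectations, which has to be justified once and then used in all four terms; after that the proof is a short symbolic matching with the formula in \cref{lemma:skce}.
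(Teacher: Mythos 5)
Your proposal is correct and follows essentially the same route as the paper: both reduce unbiasedness to the single identity $\Exp\, h\big((P_X,Y),(P_{X'},Y')\big)=\mathrm{SKCE}_k$ for independent copies and then conclude by linearity, the paper phrasing this as each block estimator being a U-statistic of $\mathrm{SKCE}_k$. The only cosmetic difference is how that identity is verified — you expand $h$ term by term and match the MMD$^2$ expression of \cref{lemma:skce} via Fubini, whereas the paper obtains it from the RKHS inner-product representation $h=\langle k(\cdot)-\rho,\,k(\cdot)-\rho'\rangle_{\mathcal{H}}$ established in the proof of \cref{lemma:skceb} — and both verifications are sound.
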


The extremal estimator with $B = n$ is a so-called U-statistic of
$\mathrm{SKCE}_k$~\citep{Hoeffding1948,Vaart1998}, and hence it is
the minimum variance unbiased estimator. All presented estimators are
consistent, i.e., they converge to $\mathrm{SKCE}_k$ almost surely as
the number $n$ of data points goes to infinity. The sample complexity
of $\widehat{\mathrm{SKCE}}_k$ and $\widehat{\mathrm{SKCE}}_{k,B}$ is
$O(n^2)$ and $O(Bn)$, respectively.

\subsection{Calibration tests}

A fundamental issue with calibration errors in general, including ECE,
is that their empirical estimates do not provide an answer to the
question if a model is actually calibrated. Even if the measure is
guaranteed to be zero if and only if the model is calibrated,
usually the estimates of calibrated models are non-zero due to
randomness in the data and (possibly) the estimation procedure.
In classification, statistical hypothesis tests of the null hypothesis
\begin{equation*}
    H_0 \colon \text{model } P \text{ is calibrated},
\end{equation*}
so-called calibration tests, have been proposed as a tool for checking
rigorously if $P$ is calibrated~\citep{Broecker2007,Vaicenavicius2019,Widmann2019}.
For multi-class classification, \citet{Widmann2019} suggested 
calibration tests based on the asymptotic distributions of estimators
of the previously formulated KCE. Although for finite data sets the
asymptotic distributions are only approximations of the actual
distributions of these estimators, in their experiments with 10
classes the resulting $p$-value approximations seemed reliable whereas
$p$-values obtained by so-called consistency
resampling~\citep{Broecker2007,Vaicenavicius2019} underestimated
the $p$-value and hence rejected the null hypothesis too
often~\citep{Widmann2019}.

For fixed block sizes
$    \sqrt{\lfloor n / B \rfloor} \big(\widehat{\mathrm{SKCE}}_{k,B} - \mathrm{SKCE}_k\big) \xrightarrow{d} \mathcal{N}\big(0, \sigma^2_B\big)$ as $n \to \infty$,
and, under $H_0$,
$n \widehat{\mathrm{SKCE}}_{k,n} \xrightarrow{d} \sum_{i=1}^\infty \lambda_i (Z_i - 1)$ as $n \to \infty$,
where $Z_i$ are independent $\chi_1^2$ distributed random variables. See~\cref{app:theory} for details and definitions of the involved constants.
From these results one can derive
calibration tests that extend and generalize the existing tests
for classification problems, as explained in
\cref{remark:skceb_fixed,remark:skceb_quadratic}.
Our formulation illustrates also the close connection of these tests
to different two-sample tests~\citep{Gretton2007,Zaremba2013}.

\section{Alternative approaches}

For two-sample tests, \citet{Chwialkowski2015} suggested the use
of the so-called unnormalized mean embedding~(UME) to overcome
the quadratic sample complexity of the minimum variance unbiased
estimator and its intractable asymptotic distribution. As we show
in \cref{app:ucme}, there exists an analogous measure of calibration,
termed unnormalized calibration mean embedding~(UCME), with a corresponding
calibration mean embedding~(CME) test.

As an alternative to our construction based on the
joint distributions of $(P_X, Y)$ and $(P_X, Z_X)$, one could try
to directly compare the conditional distributions $\Prob(Y|P_X)$
and $\Prob(Z_X|P_X) = P_X$. For instance, \citet{Ren2016} proposed the
conditional MMD based on the so-called conditional kernel mean
embedding~\citep{Song2009,Song2013}. However, as noted by
\citet{Park2020}, its common definition as operator
between two RKHS is based on very restrictive assumptions, which are violated
in many situations~\citep[see, e.g.,][Footnote~4]{Fukumizu2013}
and typically require regularized estimates. Hence, even theoretically,
often the conditional MMD is
\textquote[\cite{Park2020}]{not an exact measure of discrepancy between conditional distributions}.
In contrast, the maximum conditional mean discrepancy~(MCMD) proposed in a
concurrent work by \citet{Park2020} is a random variable derived from much
weaker measure-theoretical assumptions.
The MCMD provides a local discrepancy conditional on random predictions
whereas KCE is a global real-valued summary of these local
discrepancies.\footnote{In our calibration setting, the $\mathrm{MCMD}$ is
almost surely equal to
$
    \sup_{f \in \mathcal{F}_\mathcal{Y}}
    \big| \Exp_{Y|P_X}\big(f(Y)|P_X\big) - \Exp_{Z_X|P_X}\big(f(Z_X)|P_X\big) \big|,
$
where $\mathcal{F}_{\mathcal{Y}} \coloneqq \{ f \colon \mathcal{Y} \to \mathbb{R} | \|f\|_{\mathcal{H}_{\mathcal{Y}}} \leq 1 \}$
for an RKHS $\mathcal{H}_{\mathcal{Y}}$ with kernel
$k_{\mathcal{Y}} \colon \mathcal{Y} \times \mathcal{Y} \to \mathbb{R}$.
If kernel $k_{\mathcal{Y}}$ is characteristic, $\mathrm{MCMD} = 0$ almost
surely if and only if model $P$ is calibrated~\citep[Theorem~3.7]{Park2020}. Although the definition of MCMD
only requires
a kernel $k_{\mathcal{Y}}$ on the target space, a kernel $k_{\mathcal{P}}$ on the space of predictions
has to be specified for the evaluation of its regularized estimates.}

\section{Experiments}

In our experiments we evaluate the computational efficiency and empirical properties
of the proposed calibration error estimators and calibration tests on both calibrated
and uncalibrated models. By means of a classic regression problem from statistics
literature, we demonstrate that the estimators and tests can be used for the
evaluation of calibration of neural network models and ensembles of such models. This
section contains only an high-level overview of these experiments to conserve space but
all experimental details are provided in \cref{app:experiments}.

\subsection{Empirical properties and computational efficiency}

We evaluate error, variance, and computation time of calibration error
estimators for calibrated and uncalibrated Gaussian predictive models in synthetic
regression problems. The results empirically confirm the
consistency of the estimators and the computational efficiency of the estimator with
block size $B = 2$ which, however, comes at the cost of increased error and variance.

Additionally, we evaluate empirical test errors of calibration tests
at a fixed significance level $\alpha = 0.05$. The evaluations, visualized in
\cref{fig:synthetic_tests_10_main} for models with ten-dimensional targets, demonstrate
empirically that the percentage of incorrect rejections of $H_0$ converges to the set
significance level as the number of samples increases.
Moreover, the results highlight the computational burden of the calibration test
that estimates quantiles of the intractable asymptotic distribution of
$n\widehat{\mathrm{SKCE}}_{k,n}$ by bootstrapping. As expected, due to the larger
variance of $\widehat{\mathrm{SKCE}}_{k,2}$ the test with fixed block size $B = 2$
shows a decreased test power although being computationally much more efficient.

\begin{figure}[!hbtp]
    \centering
    \includegraphics[scale=0.35]{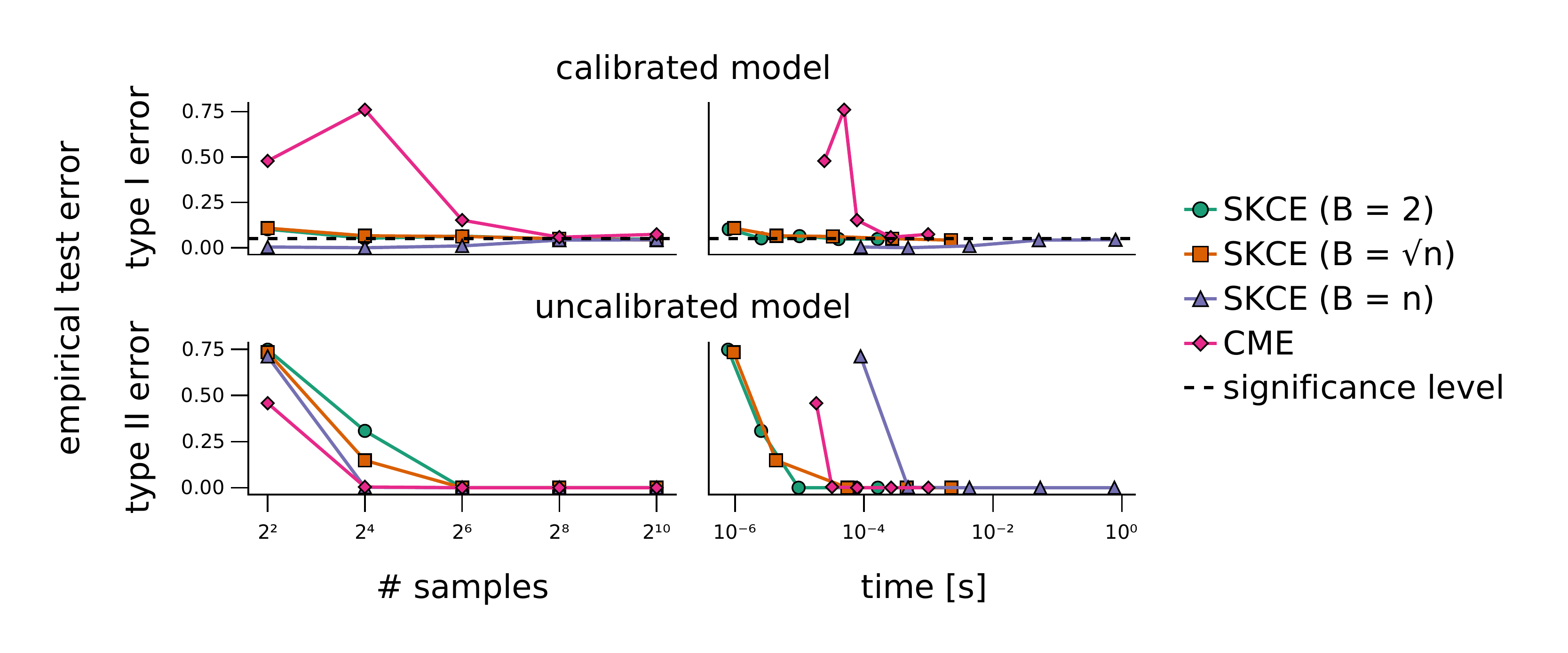}
    \caption{Empirical test errors for 500 data sets of $n \in \{4, 16, 64, 256, 1024\}$
    samples from models with targets of dimension $d = 10$. The dashed black line
    indicates the set signficance level $\alpha = 0.05$.}%
    \label{fig:synthetic_tests_10_main}
\end{figure}

\subsection{Friedman 1 regression problem}

The Friedman~1 regression problem~\citep{Friedman1979,Friedman1983,Friedman1991}
is a classic non-linear regression problem with ten-dimensional features and
real-valued targets with Gaussian noise. We train a Gaussian predictive model
whose mean is modelled by a shallow neural network and
a single scalar
variance
parameter (consistent with the data-generating model) ten times
with different initial parameters. \Cref{fig:friedman1_zoom} shows estimates of the
mean squared error (MSE),
the average negative log-likelihood (NLL), $\mathrm{SKCE}_k$, and a $p$-value
approximation for these models and their ensemble on the training and a separate
test data set.
All estimates indicate consistently
that the models are overfit after 1500 training iterations. The estimations of
$\mathrm{SKCE}_k$ and the $p$-values allow to focus on calibration specifically,
whereas MSE indicates accuracy only and NLL, as any proper scoring rule~\citep{Broecker2009}, provides a summary of calibration and accuracy. The estimation
of $\mathrm{SKCE}_k$ in addition to NLL could serve as another source of information
for early stopping and model selection.

\begin{figure}[!hbtp]
    \centering
    \includegraphics[scale=0.35]{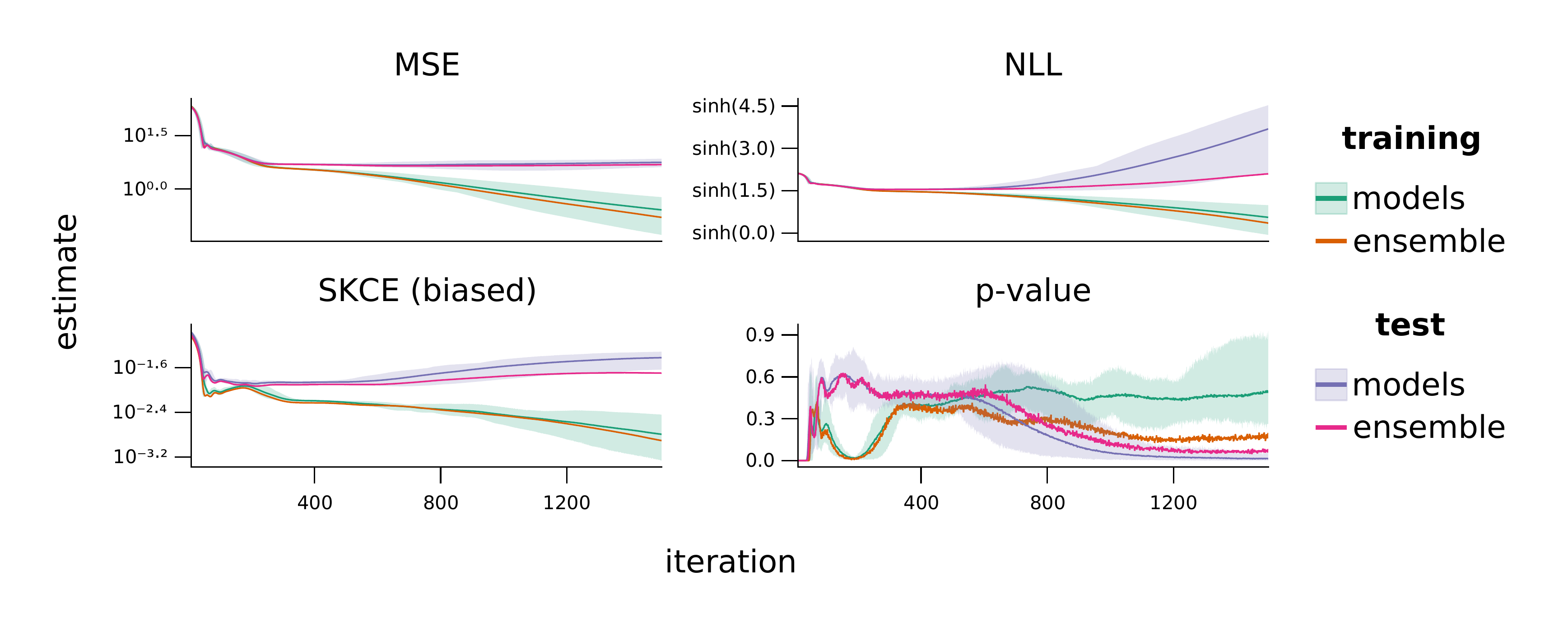}
    \caption{Mean squared error (MSE), average negative log-likelihood (NLL),
    $\widehat{\mathrm{SKCE}}_k$ (SKCE (biased)), and $p$-value approximation ($p$-value) of
    ten Gaussian predictive models for the Friedman~1 regression problem versus
    the number of training iterations.
    Evaluations on the training data set (100 samples) are displayed in green
    and orange, and on the test data set (50 samples) in blue and purple.
    The green and blue line and their surrounding bands represent
    the mean and the range of the evaluations of the ten models. The orange and
    purple lines visualize the evaluations of their ensemble.}%
    \label{fig:friedman1_zoom}
\end{figure}

\section{Conclusion}

We presented a framework of calibration estimators and tests for any
probabilistic model that captures both classification and regression
problems of arbitrary dimension
as well as other predictive models.
We successfully applied it
for measuring calibration of (ensembles of) neural network models.

Our framework highlights connections of calibration to two-sample
tests and optimal transport theory which we expect
to be fruitful for future research. For instance, the power of calibration
tests could be improved by heuristics and theoretical results about
suitable kernel choices or hyperparameters~\citep[cf.][]{Jitkrittum2016}.
It would also be interesting to investigate alternatives to $\mathrm{KCE}$
captured by our framework, e.g., by exploiting recent advances in
optimal transport theory~\citep[cf.][]{Genevay2016}.

Since the presented estimators of $\mathrm{SKCE}_k$ are differentiable, we
imagine that our framework could be helpful for improving calibration of
predictive models, during training~\citep[cf.][]{Kumar2018} or post-hoc.
Currently, many calibration
methods~\citep[see, e.g.,][]{Guo2017,Kull2019,Song2019} are based on
optimizing the log-likelihood since it is a strictly proper scoring rule and
thus encourages \emph{both} accurate and reliable predictions. However, as
for any proper scoring rule, \textquote[\cite{Broecker2009}]{Per se, it is impossible to say
how the score will rank unreliable forecast schemes \textelp{}.
The lack of reliability of one forecast scheme might be outbalanced
by the lack of resolution of the other}. In other words, if one does not
use a calibration method such as temperature scaling~\citep{Guo2017} that
keeps accuracy invariant\footnote{Temperature scaling can be defined
and applied for general probabilistic predictive models, see~\cref{app:temperature}.},
it is unclear if the resulting model is
trading off calibration for accuracy when using log-likelihood for
re-calibration. Thus hypothetically flexible calibration methods might
benefit from using the presented calibration error estimators.

\subsubsection*{Acknowledgments}

We thank the reviewers for all the constructive feedback on our paper.
This research is financially supported by the Swedish Research Council via the
projects \emph{Learning of Large-Scale Probabilistic Dynamical Models} (contract
number: 2016-04278), \emph{Counterfactual Prediction Methods for Heterogeneous
Populations} (contract number: 2018-05040), and
\emph{Handling Uncertainty in Machine Learning Systems} (contract number: 2020-04122),
by the Swedish Foundation for Strategic Research via the project
\emph{Probabilistic Modeling and Inference for Machine Learning} (contract number: ICA16-0015),
by the Wallenberg AI, Autonomous Systems and Software Program (WASP) funded by the Knut and Alice
Wallenberg Foundation, and by ELLIIT.

\bibliography{references}
\bibliographystyle{iclr2021_conference}

\clearpage

\appendix
\numberwithin{equation}{section}
\renewcommand*{\thetheorem}{\thesection.\arabic{theorem}}
\renewcommand*{\thelemma}{\thesection.\arabic{lemma}}
\renewcommand*{\theproposition}{\thesection.\arabic{proposition}}
\renewcommand*{\thecorollary}{\thesection.\arabic{corollary}}
\renewcommand*{\thedefinition}{\thesection.\arabic{definition}}
\renewcommand*{\theexample}{\thesection.\arabic{example}}
\renewcommand*{\theremark}{\thesection.\arabic{remark}}
\makeatletter
\@addtoreset{theorem}{section}
\@addtoreset{lemma}{section}
\@addtoreset{proposition}{section}
\@addtoreset{corollary}{section}
\@addtoreset{definition}{section}
\@addtoreset{example}{section}
\@addtoreset{remark}{section}
\makeatother

\section{Experiments}
\label{app:experiments}

The source code of the experiments and instructions for reproducing the
results are available at
\url{https://github.com/devmotion/Calibration_ICLR2021}. Additional
material such as automatically generated HTML output and Jupyter
notebooks is available at
\url{https://devmotion.github.io/Calibration_ICLR2021/}.

\subsection{Ordinary least squares}\label{app:ols}

We consider a regression problem with scalar feature $X$ and scalar target $Y$ with input-dependent Gaussian noise
that is inspired by a problem by \citet{Gustafsson2019}. Feature $X$ is
distributed uniformly at random in $[-1, 1]$, and target $Y$ is distributed according to
\begin{equation*}
    Y \sim \sin(\pi X) + | 1 + X | \epsilon,
\end{equation*}
where $\epsilon \sim \mathcal{N}(0, 0.15^2)$. We train a linear regression model $P$ with
homoscedastic variance using ordinary least squares and a data set of 100 i.i.d.\
pairs of feature $X$ and target $Y$ (see~\cref{fig:ols_heatmap}).

\begin{figure}[hpt]
    \begin{center}
        \includegraphics[scale=0.35]{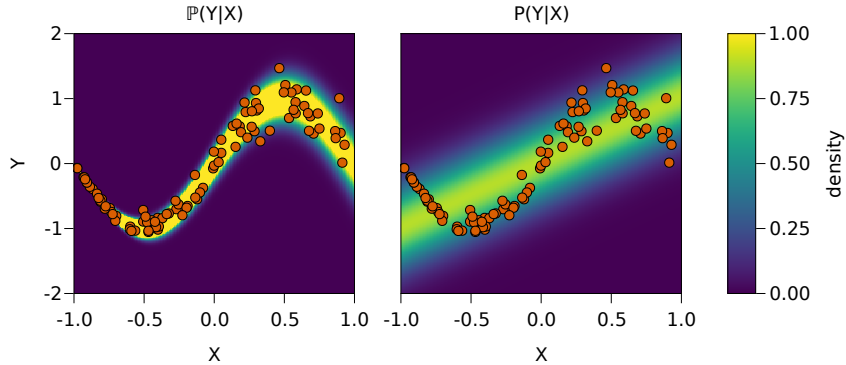}
        \caption{Data generating distribution $\Prob(Y|X)$ and predicted distribution $P(Y|X)$ of the linear regression model.
        Training data is indicated by orange dots.}
        \label{fig:ols_heatmap}
    \end{center}
\end{figure}

A validation data set of $n = 50$ i.i.d.\ pairs of $X$ and $Y$ is used to
evaluate the empirical cumulative probability
\begin{equation*}
    n^{-1} \sum_{i=1}^n \mathbbm{1}_{[0,\tau]}\big(P(Y \leq Y_i | X = X_i)\big)
\end{equation*}
of model $P$ for quantile levels $\tau \in [0, 1]$. Model $P$ would be quantile
calibrated~\citep{Song2019} if
\begin{equation*}
    \tau = \Prob_{X',Y'}\big(P(Y \leq Y' | X = X') \leq \tau \big)
\end{equation*}
for all $\tau \in [0,1]$, where $(X, Y)$ and $(X', Y')$ are independent identically
distributed pairs of random variables (see~\cref{fig:ols_quantiles}).

\begin{figure}[hpt]
    \begin{center}
        \includegraphics[scale=0.35]{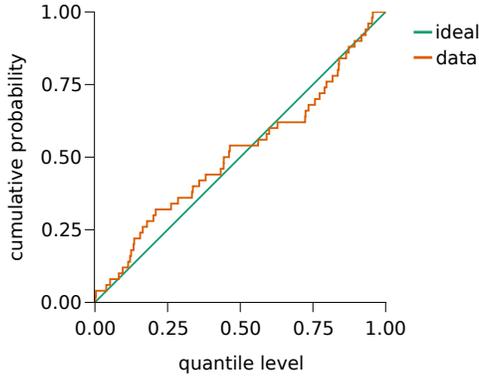}
        \caption{Cumulative probability versus quantile level for the linear regression
        model on the validation data (orange curve). The green curve indicates the theoretical ideal
        for a quantile-calibrated model.}
        \label{fig:ols_quantiles}
    \end{center}
\end{figure}

Additionally, we compute a $p$-value estimate of the null hypothesis $H_0$ that model $P$
is calibrated using an estimation of the quantile of the asymptotic distribution
of $n \widehat{\mathrm{SKCE}}_{k,n}$ with 100000 bootstrap samples
on the validation data set (see~\cref{remark:skceb_quadratic}). Kernel $k$ is chosen
as the tensor product kernel
\begin{equation*}
    \begin{split}
        k\big((p, y), (p', y')\big) &= \exp{\big(-W_2(p, p')\big)} \exp{\big(-(y - y')^2/2 \big)} \\
        &= \exp{\bigg(-\sqrt{(m_p - m_{p'})^2 + (\sigma_p - \sigma_{p'})^2}\bigg)} \exp{\big(- (y - y')^2/2\big)},
    \end{split}
\end{equation*}
where $W_2$ is the 2-Wasserstein distance and $m_p, m_{p'}$ and $\sigma_p, \sigma_{p'}$ denote
the mean and the standard deviation of the normal distributions $p$ and $p'$ (see~\cref{app:normal}).
We obtain $p < 0.05$ in our experiment, and hence the calibration test rejects $H_0$ at the
significance level $\alpha = 0.05$.

\subsection{Empirical properties and computational efficiency}\label{app:efficiency}

We study two setups with $d$-dimensional targets $Y$ and normal distributions $P_X$
of the form $\mathcal{N}(c \mathbf{1}_d, 0.1^2 \mathbf{I}_d)$ as predictions,
where $c \sim \mathrm{U}(0, 1)$.
Since calibration analysis is only based on the targets and predicted distributions,
we neglect features $X$ in these experiments and specify only the distributions of
$Y$ and $P_X$.

In the first setup we simulate a calibrated model. We achieve this by sampling
targets from the predicted distributions, i.e., by defining the conditional
distribution of $Y$ given $P_X$ as
\begin{equation*}
    Y \,|\, P_X = \mathcal{N}(\mu, \Sigma) \sim \mathcal{N}(\mu, \Sigma).
\end{equation*}

In the second setup we simulate an uncalibrated model of the form
\begin{equation*}
    Y \,|\, P_X = \mathcal{N}(\mu, \Sigma) \sim \mathcal{N}([0.1, \mu_2, \ldots, \mu_d]^{\mathsf{T}}, \Sigma).
\end{equation*}

We perform an evaluation of the convergence and computation time of the biased estimator
$\widehat{\mathrm{SKCE}}_k$ and the unbiased estimator $\widehat{\mathrm{SKCE}}_{k,B}$ with
blocks of size $B \in \{2, \sqrt{n}, n\}$. We use the tensor product kernel
\begin{equation*}
\begin{split}
    k\big((p, y), (p', y')\big) &= \exp{\big(- W_2(p, p')\big)} \exp{\big(-(y - y')^2/2\big)} \\
    &= \exp{\bigg(-\sqrt{(m_p - m_{p'})^2 + (\sigma_p - \sigma_{p'})^2}\bigg)} \exp{\big( - (y - y')^2/2\big)},
\end{split}
\end{equation*}
where $W_2$ is the 2-Wasserstein distance and $m_p, m_{p'}$ and $\sigma_p, \sigma_{p'}$
denote the mean and the standard deviation of the normal distributions $p$ and $p'$.

\Cref{fig:synthetic_estimators_calibrated_1,fig:synthetic_estimators_calibrated_10,fig:synthetic_estimators_uncalibrated_1,fig:synthetic_estimators_uncalibrated_10} visualize the mean absolute error and the variance
of the resulting estimates for the calibrated and the uncalibrated model with dimensions
$d = 1$ and $d = 10$ for $500$ independently drawn data sets of $n \in \{4, 16, 64, 256, 1024\}$
samples of $(P_X, Y)$. Computation time indicates the minimum time in the $500$ evaluations
on a computer with a 3.6\,GHz processor. The ground truth values of the uncalibrated models were
estimated by averaging the estimates of $\widehat{\mathrm{SKCE}}_{k,1000}$
for $1000$ independently drawn data sets of $1000$ samples of $(P_X, Y)$ (independent from
the data sets used for the evaluation of the estimates). \Cref{fig:synthetic_estimators_calibrated_1,fig:synthetic_estimators_calibrated_10} illustrate that
the computational efficiency of $\widehat{\mathrm{SKCE}}_{k,2}$ in comparison with the
other estimators comes at the cost of increased error and variance for the calibrated models
for fixed numbers of samples.

\begin{figure}[hpt]
    \begin{center}
        \includegraphics[scale=0.35]{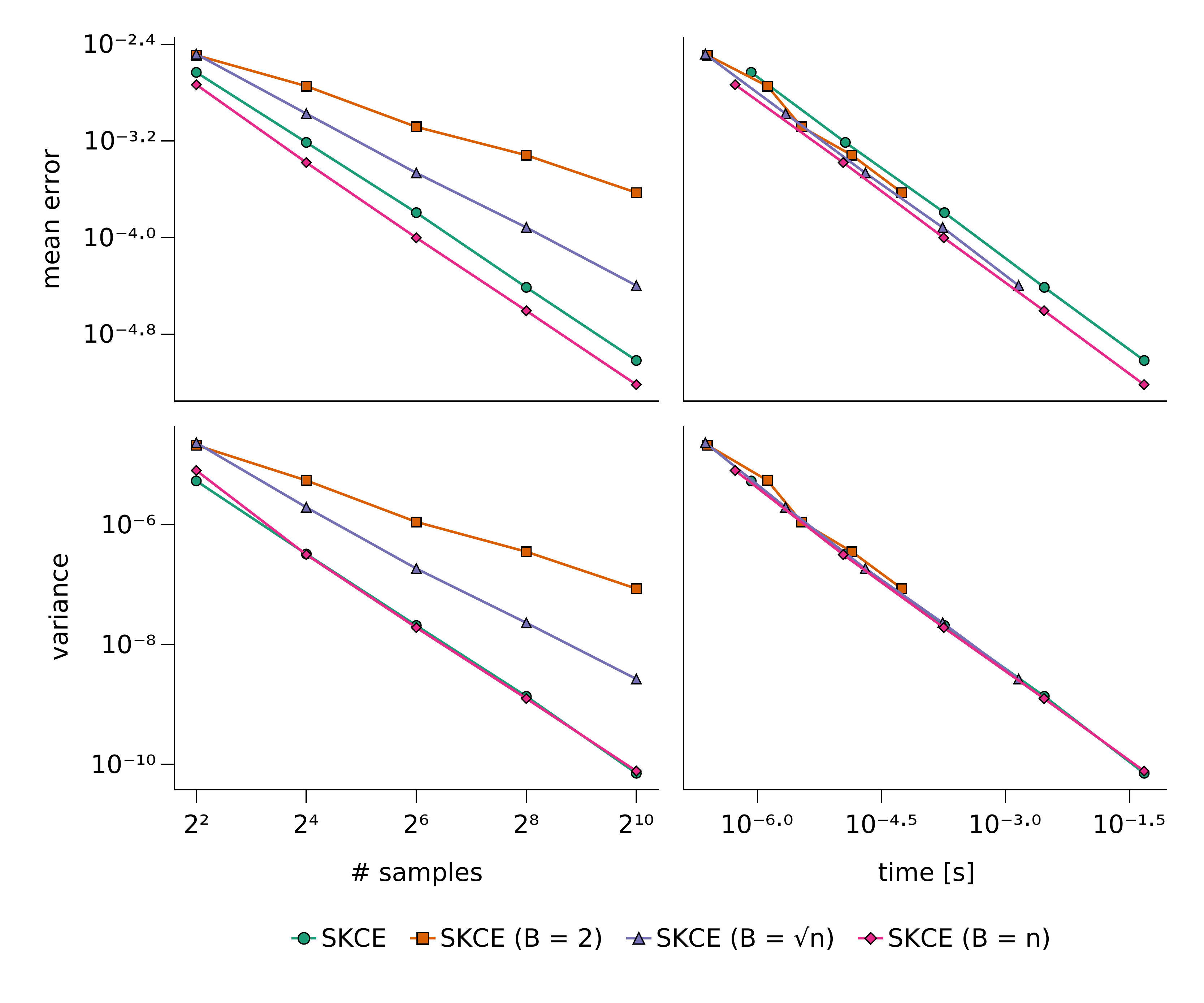}
        \caption{Mean absolute error and variance of 500 calibration error estimates for
        data sets of $n \in \{4, 16, 64, 256, 1024\}$ samples from the calibrated model
        of dimension $d = 1$.}
        \label{fig:synthetic_estimators_calibrated_1}
    \end{center}
\end{figure}

\begin{figure}[hpt]
    \begin{center}
        \includegraphics[scale=0.35]{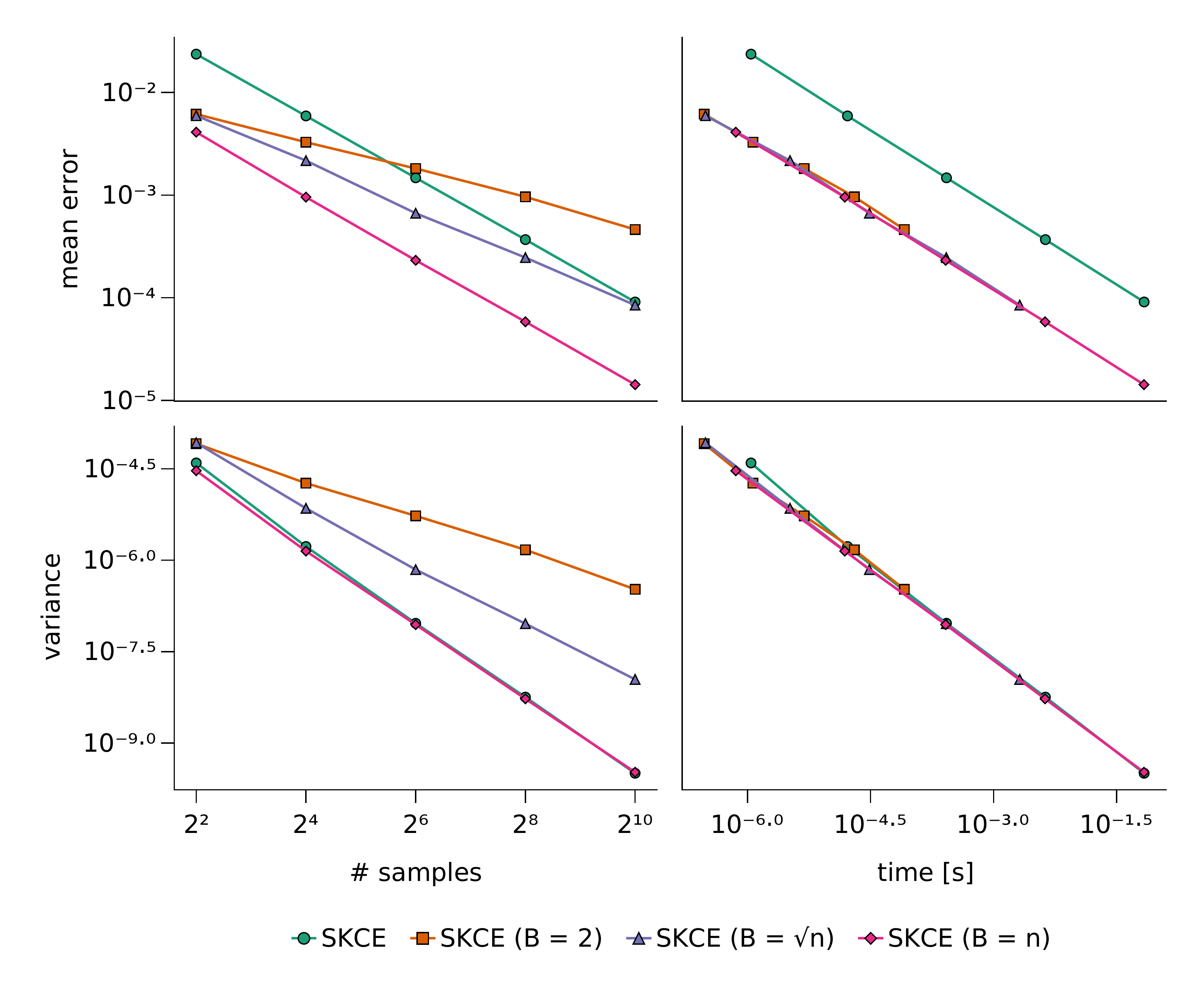}
        \caption{Mean absolute error and variance of 500 calibration error estimates for
        data sets of $n \in \{4, 16, 64, 256, 1024\}$ samples from the calibrated model
        of dimension $d = 10$.}
        \label{fig:synthetic_estimators_calibrated_10}
    \end{center}
\end{figure}

\begin{figure}[hpt]
    \begin{center}
        \includegraphics[scale=0.35]{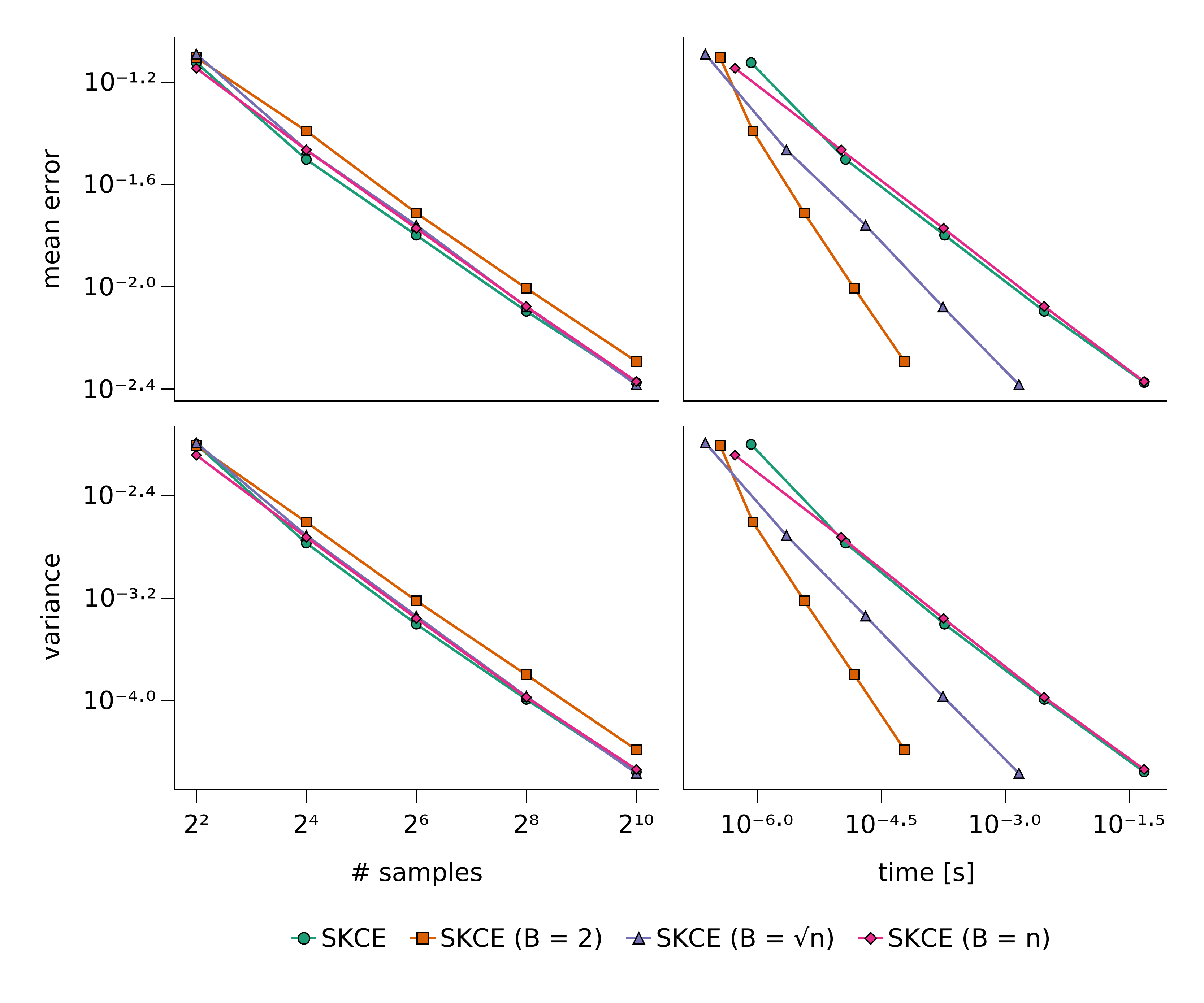}
        \caption{Mean absolute error and variance of 500 calibration error estimates for
        data sets of $n \in \{4, 16, 64, 256, 1024\}$ samples from the uncalibrated model
        of dimension $d = 1$.}
        \label{fig:synthetic_estimators_uncalibrated_1}
    \end{center}
\end{figure}

\begin{figure}[hpt]
    \begin{center}
        \includegraphics[scale=0.35]{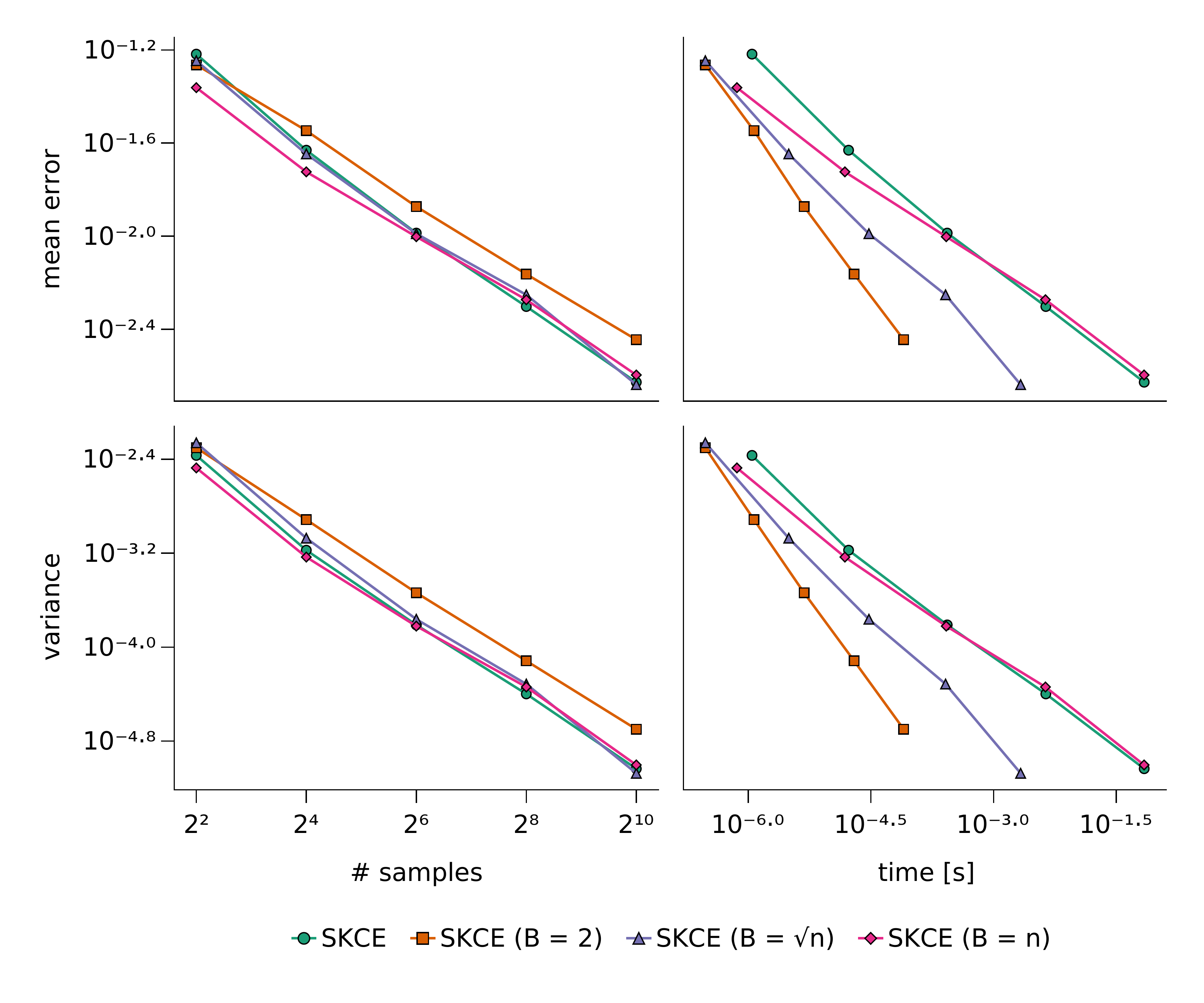}
        \caption{Mean absolute error and variance of 500 calibration error estimates for
        data sets of $n \in \{4, 16, 64, 256, 1024\}$ samples from the uncalibrated model
        of dimension $d = 10$.}
        \label{fig:synthetic_estimators_uncalibrated_10}
    \end{center}
\end{figure}

We compare calibration tests based on the (tractable) asymptotic distribution of
$\sqrt{\lfloor n / B \rfloor} \widehat{\mathrm{SKCE}}_{k,B}$ with fixed block
size $B \in \{2, \sqrt{n}\}$
(see~\cref{remark:skceb_fixed}), the (intractable) asymptotic distribution of
$n\widehat{\mathrm{SKCE}}_{k,n}$ which is approximated with 1000 bootstrap
samples (see~\cref{remark:skceb_quadratic}), and a Hotelling's $T^2$-statistic for
$\mathrm{UCME}_{k,10}$ with 10 test locations (see~\cref{app:ucme}). We compute
the empirical test errors (percentage of false rejections of the null hypothesis $H_0$
that model $P$ is calibrated if $P$ is calibrated, and percentage of false non-rejections
of $H_0$ if $P$ is not calibrated) at a fixed significance level $\alpha = 0.05$ and
the minimal computation time for the calibrated and the uncalibrated model with
dimensions $d = 1$ and $d = 10$ for $500$ independently drawn data sets of
$n \in \{4, 16, 64, 256, 1024\}$ samples of $(P_X, Y)$. The 10 test
predictions of the $\mathrm{CME}$ test are of the form $\mathcal{N}(m, 0.1^2 \mathbf{I}_d)$
where $m$ is distributed uniformly at random in the $d$-dimensional unit hypercube $[0,1]^d$,
the corresponding 10 test targets are i.i.d.\ according to
$\mathcal{N}(\mathbf{0}, 0.1^2 \mathbf{I}_d)$.

\Cref{fig:synthetic_tests_1,fig:synthetic_tests_10} show that all tests adhere to the set
significance level asymptotically as the number of samples increases. The convergence of
the $\mathrm{CME}$ test with 10 test locations is found to be much slower than the convergence
of all other tests.
The tests based on the tractable asymptotic distribution of
$\sqrt{\lfloor n / B \rfloor} \widehat{\mathrm{SKCE}}_{k,B}$ for
fixed block size $B$ are orders of magnitudes faster than the test based on the intractable
asymptotic distribution of $n \widehat{\mathrm{SKCE}}_{k,n}$, approximated with 1000 bootstrap
samples. We see that the efficiency gain comes at the cost of decreased test power for smaller
number of samples, explained by the increasing variance of $\widehat{\mathrm{SKCE}}_{k,B}$ for
decreasing block sizes $B$.
However, in our examples the test based on $\widehat{\mathrm{SKCE}}_{k,\sqrt{n}}$
still achieves good test power for reasonably large number of samples (> 30).

\begin{figure}[hpt]
    \begin{center}
        \includegraphics[scale=0.35]{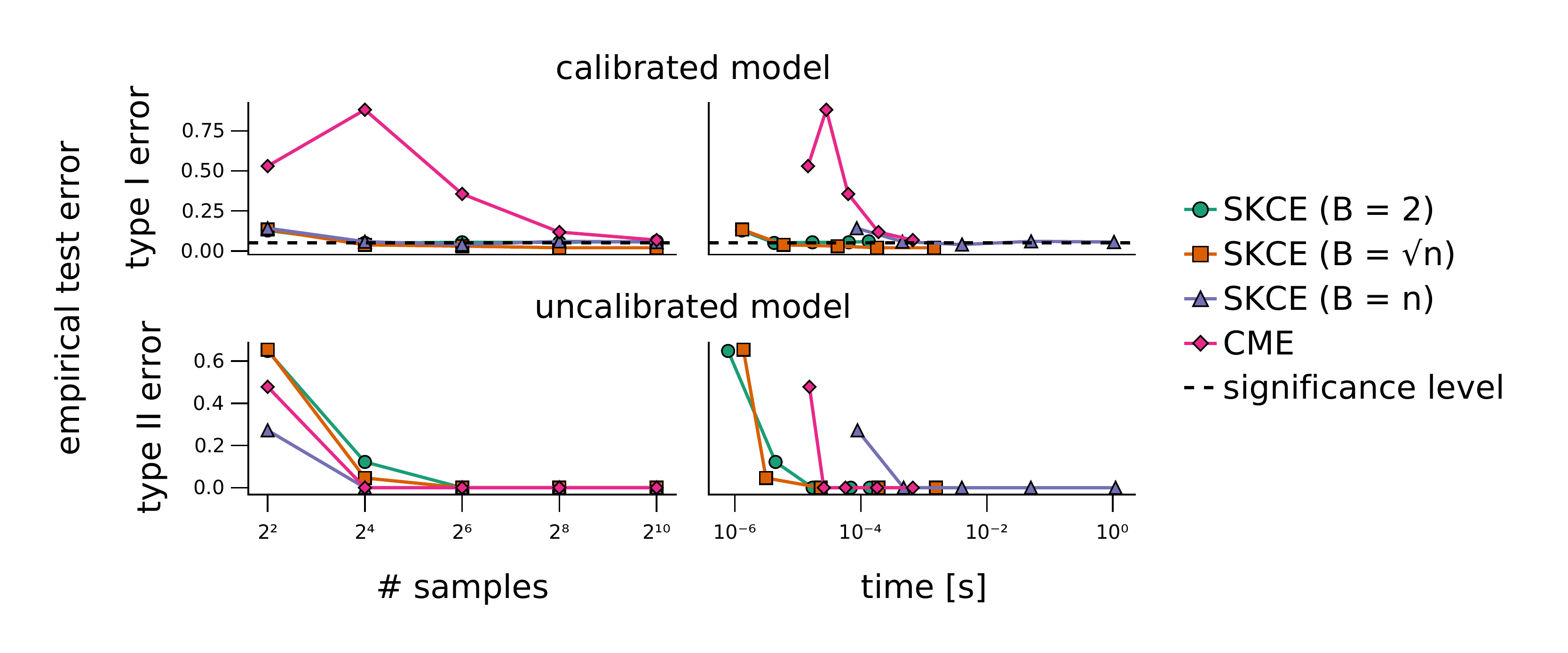}
        \caption{Empirical test errors for 500 data sets of $n \in \{4, 16, 64, 256, 1024\}$
        samples from models with targets of dimension $d = 1$. The dashed black line
        indicates the set signficance level $\alpha = 0.05$.}
        \label{fig:synthetic_tests_1}
    \end{center}
\end{figure}

\begin{figure}[hpt]
    \begin{center}
        \includegraphics[scale=0.35]{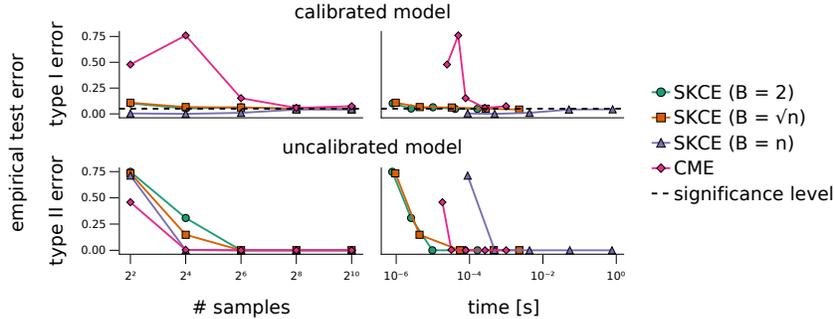}
        \caption{Empirical test errors for 500 data sets of $n \in \{4, 16, 64, 256, 1024\}$
        samples from models with targets of dimension $d = 10$. The dashed black line
        indicates the set signficance level $\alpha = 0.05$.}
        \label{fig:synthetic_tests_10}
    \end{center}
\end{figure}

\clearpage

\subsection{Friedman 1 regression problem}\label{app:friedman}

We study the so-called Friedman~1 regression problem, which was initially
described for 200 inputs in the six-dimensional unit
hypercube~\citep{Friedman1979,Friedman1983} and later modified to 100 inputs in
the 10-dimensional unit hypercube~\citep{Friedman1991}. In this regression
problem real-valued target $Y$ depends on input $X$ via
\begin{equation*}
    Y = 10 \sin{(\pi X_1 X_2)} + 20{(X_3 - 0.5)}^2 + 10 X_4 + 5 X_5 + \epsilon,
\end{equation*}
where noise $\epsilon$ is typically chosen to be independently standard
normally distributed. We generate a training data set of 100 inputs distributed
uniformly at random in the 10-dimensional unit hypercube and corresponding targets
with identically and independently distributed noise following a standard normal
distribution.

We consider models $P^{(\theta,\sigma^2)}$ of normal distributions with fixed variance
$\sigma^2$
\begin{equation*}
    P^{(\theta,\sigma^2)}_x = \mathcal{N}(f_{\theta}(x), \sigma^2),
\end{equation*}
where $f_{\theta}(x)$, the model of the mean of the distribution $\Prob(Y|X = x)$,
is given by a fully connected neural network with two hidden layers with 200 and 50
hidden units and ReLU activation functions. The parameters of the neural network
are denoted by $\theta$.

We use a maximum likelihood approach and train the parameters $\theta$ of the model
for 5000 iterations by minimizing the mean squared error on the training data set
using ADAM~\citep{Kingma2015} (default settings in the machine learning
framework Flux.jl~\citep{Innes2018a,Innes2018b}). In each iteration, the
variance $\sigma^2$ is set to the maximizer of the likelihood of the training data
set.

We train 10 models with different initializations of parameters $\theta$. The initial values
of the weight matrices of the neural networks are sampled from the uniform Glorot
initialization~\citep{Glorot2010} and the offset vectors are initialized with zeros.
In \cref{fig:friedman1}, we visualize estimates of accuracy and calibration measures
on the training and test data set with 100 and 50 samples, respectively, for
5000 training iterations. The pinball loss is a common measure and
training objective for calibration of quantiles~\citep{Song2019}. It is defined
as
\begin{equation*}
    \Exp_{X,Y} L_{\tau}\big(Y, \mathrm{quantile}(P_X, \tau)\big),
\end{equation*}
where $L_{\tau}(y, \tilde{y}) = (1 - \tau) (\tilde{y} - y)_{+} + \tau (y - \tilde{y})_{+}$
and $\mathrm{quantile}(P_x, \tau) = \inf_{y} \{P_x(Y \leq y) \geq \tau\}$ for
quantile level $\tau \in [0, 1]$.
In \cref{fig:friedman1} we plot the average pinball loss (pinball) for
quantile levels $\tau \in \{0.05, 0.1, \ldots, 0.95\}$. We evaluate
$\widehat{\mathrm{SKCE}}_{k,n}$ (SKCE (unbiased)) and $\widehat{\mathrm{SKCE}}_{k}$
(SKCE (biased)) for the tensor product kernel
\begin{equation*}
    \begin{split}
        k\big((p, y), (p', y')\big) &= \exp{\big(-W_2(p, p')\big)} \exp{\big(-(y - y')^2/2 \big)} \\
        &= \exp{\bigg(-\sqrt{(m_p - m_{p'})^2 + (\sigma_p - \sigma_{p'})^2}\bigg)} \exp{\big(- (y - y')^2/2\big)},
    \end{split}
\end{equation*}
where $W_2$ is the 2-Wasserstein distance and $m_p, m_{p'}$ and $\sigma_p, \sigma_{p'}$ denote
the mean and the standard deviation of the normal distributions $p$ and $p'$ (see~\cref{app:normal}).
The $p$-value estimate ($p$-value) is computed by estimating the quantile of the asymptotic distribution
of $n \widehat{\mathrm{SKCE}}_{k,n}$ with 1000 bootstrap samples
(see~\cref{remark:skceb_quadratic}). The estimates of the mean squared error
and the average negative log-likelihood are denoted by MSE and NLL.
All estimators indicate consistently that the trained models suffer from overfitting
after around 1000 training iterations.

Additionally, we form ensembles of the ten individual models at every training
iteration. The evaluations for the ensembles are visualized in \cref{fig:friedman1} as
well. Apart from the unbiased estimates of $\mathrm{SKCE}_k$, the estimates of the
ensembles are consistently better than the average estimates of the ensemble
members. For the mean squared error and the negative log-likelihood this behaviour
is guaranteed theoretically by the generalized mean inequality.

\begin{figure}[htp]
    \begin{center}
        \includegraphics[scale=0.35]{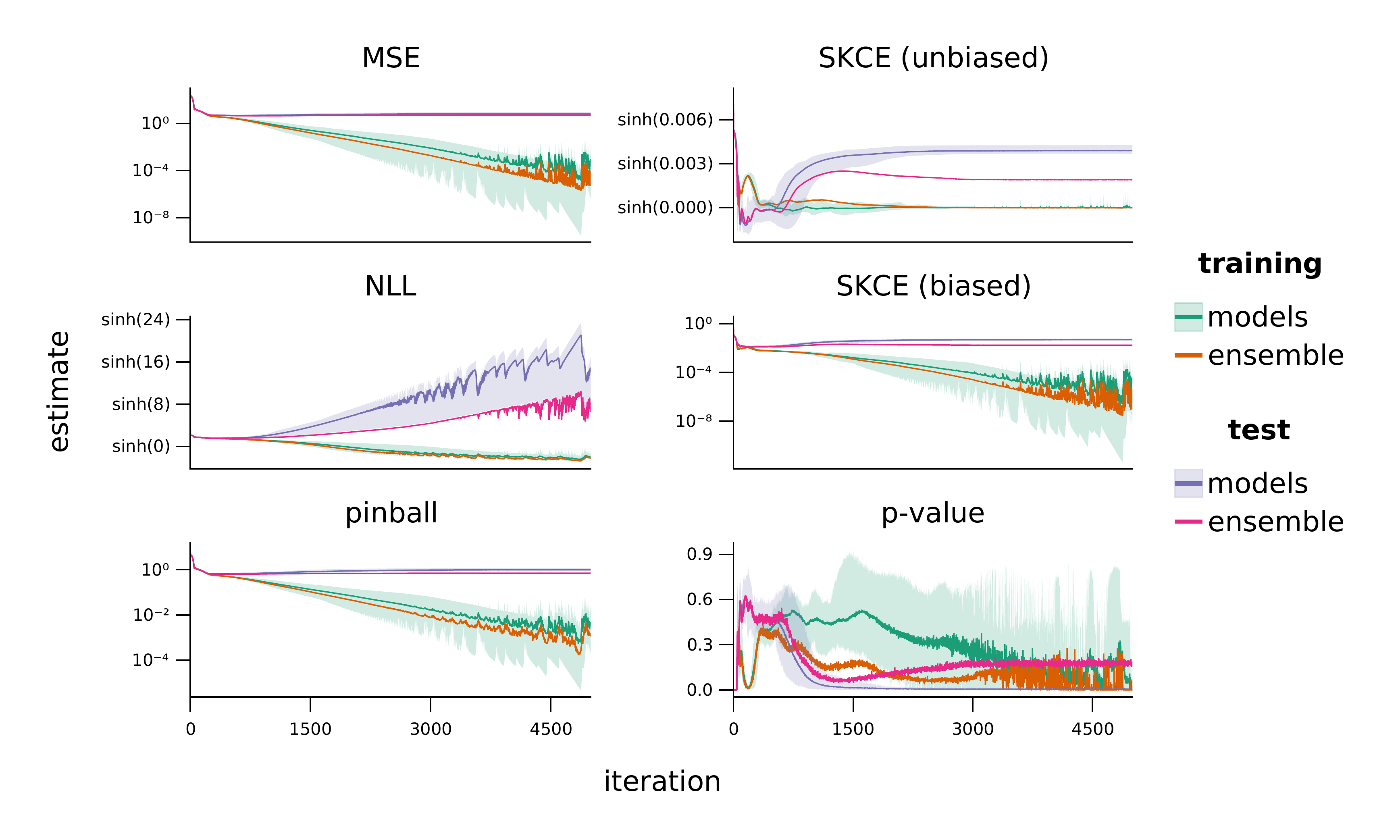}
        \caption{Estimates of different accuracy and calibration measures of ten
        Gaussian predictive models for the Friedman~1 regression problem versus
        the number of training iterations.
        Evaluations on the training data set (100 samples) are displayed in green
        and orange, and on the test data set (50 samples) in blue and purple.
        The green and blue line and their surrounding bands represent
        the mean and the range of the evaluations of the ten models. The orange and
        purple lines visualize the evaluations of their ensemble.}
        \label{fig:friedman1}
    \end{center}
\end{figure}

\section{Theory}
\label{app:theory}

\subsection{General setting}

Let $(\Omega, \mathcal{A}, \Prob)$ be a probability space. Define the random
variables $X \colon (\Omega, \mathcal{A}) \to (\mathcal{X}, \Sigma_X)$
and $Y \colon (\Omega, \mathcal{A}) \to (\mathcal{Y}, \Sigma_Y)$
such that $\Sigma_X$ contains all singletons, and denote a version of the regular
conditional distribution of $Y$ given $X = x$ by $\Prob(Y|X = x)$ for all
$x \in \mathcal{X}$.

Let $P \colon (\mathcal{X}, \Sigma_X) \to \big(\mathcal{P}, \mathcal{B}(\mathcal{P})\big)$
be a measurable function that maps features in $\mathcal{X}$ to
probability measures in $\mathcal{P}$ on the target space $\mathcal{Y}$.
We call $P$ a probabilistic model, and denote by $P_x \coloneqq P(x)$ its
output for feature $x \in \mathcal{X}$. This gives rise to the random
variable $P_X \colon (\Omega, \mathcal{A}) \to \big(\mathcal{P}, \mathcal{B}(\mathcal{P})\big)$
as $P_X \coloneqq P(X)$. We denote a version of the regular conditional distribution
of $Y$ given $P_X = P_x$ by $\Prob(Y| P_X = P_x)$ for all $P_x \in \mathcal{P}$.

\subsection{Expected and maximum calibration error}
\label{app:ece_mce}

The common definition of the expected and maximum calibration error~\citep{Naeini2015,Guo2017,Vaicenavicius2019,Kull2019}
for classification models can be generalized to arbitrary predictive
models.

\begin{definition}\label{def:ece_mce}
Let $d(\cdot, \cdot)$ be a distance measure of probability distributions
of target $Y$, and let $\mu$ be the law of $P_X$. Then we call
\begin{equation*}
    \mathrm{ECE}_d = \Exp d\big(\Prob(Y|P_X), P_X\big) \qquad \text{and} \qquad
    \mathrm{MCE}_d = \mu\text{-}\esssup d\big(\Prob(Y|P_X), P_X\big)
\end{equation*}
the expected calibration error~(ECE) and the maximum calibration error~(MCE)
of model $P$ with respect to measure $d$, respectively.
\end{definition}

\subsection{Kernel calibration error}

Recall the general notation: Let
$k \colon (\mathcal{P} \times \mathcal{Y}) \times (\mathcal{P} \times \mathcal{Y}) \to \mathbb{R}$
be a kernel, amd denote its corresponding RKHS by $\mathcal{H}$.

If not stated otherwise, we assume that
\begin{itemize}
    \item[(K1)] $k(\cdot, \cdot)$ is Borel-measurable.
    \item[(K2)] $k$ is integrable with respect to the distributions
    of $(P_X,Y)$ and $(P_X,Z_X)$, i.e.,
    \begin{equation*}
    \Exp_{P_X,Y} k^{1/2}\big((P_X, Y), (P_X, Y)\big) < \infty
    \end{equation*}
    and
    \begin{equation*}
    \Exp_{P_X,Z_X} k^{1/2}\big((P_X, Z_X), (P_X, Z_X)\big) < \infty.
    \end{equation*}
\end{itemize}

\begin{lemma}\label{lemma:mean_embedding}
There exist kernel mean embeddings
$\mu_{P_X Y}, \mu_{P_X Z_X} \in \mathcal{H}$
such that for all $f \in \mathcal{H}$
\begin{equation*}
    \langle f, \mu_{P_X Y} \rangle_{\mathcal{H}} = \Exp_{P_X,Y} f(P_X, Y) \qquad \text{and} \qquad    
    \langle f, \mu_{P_X Z_X} \rangle_{\mathcal{H}} = \Exp_{P_X,Z_X} f(P_X, Z_X).
\end{equation*}
This implies that
\begin{equation*}
    \mu_{P_X Y} = \Exp_{P_X,Y} k(\cdot, (P_X, Y)) \qquad \text{and} \qquad
    \mu_{P_X Z_X} = \Exp_{P_X,Z_X} k(\cdot, (P_X, Z_X)).
\end{equation*}
\end{lemma}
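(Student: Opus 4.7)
The plan is to obtain $\mu_{P_XY}$ and $\mu_{P_XZ_X}$ as Riesz representers of bounded linear functionals on $\mathcal{H}$, and then identify them explicitly as Bochner integrals in $\mathcal{H}$. I will focus on the pair $(P_X, Y)$; the argument for $(P_X, Z_X)$ is identical after replacing $Y$ by $Z_X$.

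First I would define the functional $T \colon \mathcal{H} \to \mathbb{R}$ by $T(f) \coloneqq \Exp_{P_X,Y} f(P_X, Y)$ and verify it is well-defined and bounded. By the reproducing property, $f(p,y) = \langle f, k(\cdot,(p,y))\rangle_{\mathcal{H}}$, so Cauchy--Schwarz gives
\begin{equation*}
    |f(p,y)| \;\leq\; \|f\|_{\mathcal{H}}\, k^{1/2}\big((p,y),(p,y)\big).
\end{equation*}
Taking expectations and using assumption (K2) yields $\Exp_{P_X,Y} |f(P_X,Y)| \leq \|f\|_{\mathcal{H}} \,\Exp_{P_X,Y} k^{1/2}((P_X,Y),(P_X,Y)) < \infty$, so $T(f)$ is finite and $|T(f)| \leq C \|f\|_{\mathcal{H}}$ with $C \coloneqq \Exp k^{1/2}((P_X,Y),(P_X,Y))$. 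The linearity of $T$ is immediate from the linearity of expectation, and measurability of $f(P_X,Y)$ follows from (K1) together with continuity of the evaluation $f \in \mathcal{H}$ against $k$. By the Riesz representation theorem there exists a unique $\mu_{P_XY} \in \mathcal{H}$ with $T(f) = \langle f, \mu_{P_XY}\rangle_{\mathcal{H}}$ for every $f \in \mathcal{H}$, which is the first assertion.

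Next I would identify $\mu_{P_XY}$ with a Bochner integral. The map $(p,y) \mapsto k(\cdot,(p,y))$ into $\mathcal{H}$ is Borel-measurable by (K1) (it suffices to check weak measurability against each $f \in \mathcal{H}$ and apply Pettis's theorem since $\mathcal{H}$ is separable under the usual standing assumptions, or otherwise restrict to the separable closed span of its image). Its norm satisfies $\|k(\cdot,(p,y))\|_{\mathcal{H}} = k^{1/2}((p,y),(p,y))$, so (K2) gives $\Exp_{P_X,Y} \|k(\cdot,(P_X,Y))\|_{\mathcal{H}} < \infty$, hence the Bochner integral $\Exp_{P_X,Y} k(\cdot,(P_X,Y)) \in \mathcal{H}$ exists. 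By the defining property of the Bochner integral, for every $f \in \mathcal{H}$,
\begin{equation*}
    \big\langle f,\, \Exp_{P_X,Y} k(\cdot,(P_X,Y))\big\rangle_{\mathcal{H}} = \Exp_{P_X,Y} \langle f, k(\cdot,(P_X,Y))\rangle_{\mathcal{H}} = \Exp_{P_X,Y} f(P_X,Y).
\end{equation*}
By uniqueness of the Riesz representer I conclude $\mu_{P_XY} = \Exp_{P_X,Y} k(\cdot,(P_X,Y))$, which is the second assertion.

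The only subtle step is the measurability required for Bochner integrability of $k(\cdot,(P_X,Y))$; this is where one uses (K1) plus separability of (a suitable subspace of) $\mathcal{H}$, together with Pettis's theorem. Everything else is a routine application of reproducing property, Cauchy--Schwarz, and Riesz representation, so I expect no further obstacles.
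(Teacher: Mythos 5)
Your proof is correct, and the first assertion is established exactly as in the paper: bound the functional $T(f)=\Exp_{P_X,Y}f(P_X,Y)$ via the reproducing property, Cauchy--Schwarz, and assumption (K2), then invoke the Riesz representation theorem. Where you diverge is in the identification $\mu_{P_XY}=\Exp_{P_X,Y}k(\cdot,(P_X,Y))$. You read the right-hand side as a Bochner integral in $\mathcal{H}$, establish its existence from (K2) and a Pettis/separability argument, and then appeal to uniqueness of the Riesz representer. The paper instead evaluates the already-constructed representer pointwise: by the reproducing property, $\mu_{P_XY}(p,y)=\langle k((p,y),\cdot),\mu_{P_XY}\rangle_{\mathcal{H}}=T\big(k((p,y),\cdot)\big)=\Exp_{P_X,Y}k((p,y),(P_X,Y))$ for every $(p,y)$, which identifies $\mu_{P_XY}$ as a function without ever forming a vector-valued integral. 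The paper's route is more elementary and sidesteps the measurability caveat you correctly flag (it needs no separability of $\mathcal{H}$ and no Pettis theorem); your route buys a genuinely stronger conclusion, namely that the expectation converges as an element of $\mathcal{H}$ in the Bochner sense, at the cost of that extra hypothesis. Both are standard and both are valid here; just be aware that the separability assumption you lean on is not among the paper's stated assumptions (K1)--(K2), so strictly speaking the pointwise identification is the one that follows from the stated hypotheses alone.
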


\begin{proof}
The linear operators $T_{P_X Y} f \coloneqq \Exp_{P_X, Y} f(P_X, Y)$
and $T_{P_X Z_X} f \coloneqq \Exp_{P_X, Z_X} f(P_X, Z_X)$
for all $f \in \mathcal{H}$ are bounded since
\begin{equation*}
    \begin{split}
        |T_{P_X Y} f| &= |\Exp_{P_X,Y} f(P_X, Y) | \leq \Exp_{P_X,Y} |f(P_X, Y)| = \Exp_{P_X,Y} |\langle k((P_X, Y), \cdot), f\rangle_{\mathcal{H}}| \\
        &\leq \Exp_{P_X,Y} \| k((P_X, Y), \cdot) \|_{\mathcal{H}} \|f\|_{\mathcal{H}}] = \|f\|_{\mathcal{H}} \Exp_{P_X,Y} k^{1/2}((P_X, Y), (P_X, Y))
    \end{split}
\end{equation*}
and similarly
\begin{equation*}
     |T_{P_X Z_X} f| \leq \|f\|_{\mathcal{H}} \Exp_{P_X,Z_X} k^{1/2}((P_X, Z_X), (P_X, Z_X)).
\end{equation*}
Thus Riesz representation theorem implies that there exist
$\mu_{P_X Y}, \mu_{P_X Z_X} \in \mathcal{H}$ such that
$T_{P_X Y} f = \langle f,  \mu_{P_X Y} \rangle_{\mathcal{H}}$ and
$T_{P_X Z_X} f = \langle f,  \mu_{P_X Z_X} \rangle_{\mathcal{H}}$. The reproducing
property of $\mathcal{H}$ implies
\begin{equation*}
    \mu_{P_X Y}(p, y) = \langle k((p, y), \cdot), \mu_{P_X Y} \rangle_{\mathcal{H}} = \Exp_{P_X,Y} k((p, y), (P_X, Y))
\end{equation*}
for all $(p, y) \in \mathcal{P} \times \mathcal{Y}$, and similarly
$ \mu_{P_X Z_X}(p, y) = \Exp_{P_X,Z_X} k((p, y), (P_X, Z_X))$.
\end{proof}

\begin{lemma}\label{lemma:skce}
The squared kernel calibration error~(SKCE) with respect to kernel $k$, defined
as $\mathrm{SKCE}_k \coloneqq \mathrm{KCE}_k^2$, is given by
\begin{equation*}
    \begin{split}
        \mathrm{SKCE}_k ={}& \Exp_{P_X,Y,P_{X'},Y'} k\big((P_X, Y), (P_{X'}, Y')\big)
        - 2 \Exp_{P_X,Y,P_{X'},Z_{X'}} k\big((P_X,Y),(P_{X'},Z_{X'})\big) \\
        &+ \Exp_{P_X,Z_X,P_{X'},Z_{X'}} k\big((P_X, Z_X), (P_{X'}, Z_{X'})\big),
    \end{split}
\end{equation*}
where $(P_{X'}, Y', Z_{X'})$ is independently distributed according to the law
of $(P_X, Y, Z_X)$
\end{lemma}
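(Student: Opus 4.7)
The plan is to reduce the statement to the standard maximum mean discrepancy expansion by leveraging the kernel mean embeddings $\mu_{P_X Y}$ and $\mu_{P_X Z_X}$ furnished by \cref{lemma:mean_embedding}. The key observation is that, because the supremum in \cref{def:kce} is over the unit ball $\mathcal{F}_k$ of the RKHS $\mathcal{H}$, each expectation inside the absolute value can be rewritten as an inner product against a fixed element of $\mathcal{H}$.

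Concretely, I would first apply \cref{lemma:mean_embedding} to write
\begin{equation*}
\Exp_{P_X,Y} f(P_X, Y) - \Exp_{P_X, Z_X} f(P_X, Z_X) = \langle f,\, \mu_{P_X Y} - \mu_{P_X Z_X} \rangle_{\mathcal{H}}
\end{equation*}
for every $f \in \mathcal{H}$. Cauchy--Schwarz gives $|\langle f, \mu_{P_X Y} - \mu_{P_X Z_X}\rangle_{\mathcal{H}}| \leq \|f\|_{\mathcal{H}} \|\mu_{P_X Y} - \mu_{P_X Z_X}\|_{\mathcal{H}}$, and equality is attained (up to sign) on the unit ball by taking $f$ proportional to $\mu_{P_X Y} - \mu_{P_X Z_X}$ (or $f = 0$ if the difference vanishes). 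Hence $\mathrm{KCE}_k = \|\mu_{P_X Y} - \mu_{P_X Z_X}\|_{\mathcal{H}}$, and squaring yields
\begin{equation*}
\mathrm{SKCE}_k = \langle \mu_{P_X Y}, \mu_{P_X Y}\rangle_{\mathcal{H}} - 2 \langle \mu_{P_X Y}, \mu_{P_X Z_X}\rangle_{\mathcal{H}} + \langle \mu_{P_X Z_X}, \mu_{P_X Z_X}\rangle_{\mathcal{H}}.
\end{equation*}

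Next, I would translate each of these three inner products into a double expectation of $k$. Using the reproducing property together with the explicit forms $\mu_{P_X Y} = \Exp k(\cdot,(P_X,Y))$ and $\mu_{P_X Z_X} = \Exp k(\cdot,(P_X,Z_X))$ from \cref{lemma:mean_embedding}, one obtains, e.g.,
\begin{equation*}
\langle \mu_{P_X Y}, \mu_{P_X Y}\rangle_{\mathcal{H}} = \Exp_{P_{X'}, Y'}\, \mu_{P_X Y}(P_{X'}, Y') = \Exp_{P_X, Y, P_{X'}, Y'}\, k\big((P_X, Y),(P_{X'}, Y')\big),
\end{equation*}
and analogously for the cross term and the $Z$-$Z$ term, where the pair $(P_{X'}, Y', Z_{X'})$ is an independent copy of $(P_X, Y, Z_X)$. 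Summing the three contributions with the appropriate signs gives exactly the stated expression.

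The main obstacle is justifying the two interchanges: (i) pulling an expectation inside an $\mathcal{H}$-inner product when realizing $\mu_{P_X Y}$ and $\mu_{P_X Z_X}$ as Bochner integrals, and (ii) applying Fubini to swap the two independent expectations in each term. Both follow from the integrability assumption (K2): by Cauchy--Schwarz in $\mathcal{H}$, $|k((p,y),(p',y'))| \leq k^{1/2}((p,y),(p,y)) k^{1/2}((p',y'),(p',y'))$, so (K2) ensures that the relevant absolute integrals are finite, legitimizing both the Bochner-integral representation used in the proof of \cref{lemma:mean_embedding} and the application of Fubini's theorem needed to produce the product-measure expectations in the final formula.
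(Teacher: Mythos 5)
Your proposal is correct and follows essentially the same route as the paper: invoke \cref{lemma:mean_embedding} to write the difference of expectations as $\langle f, \mu_{P_X Y} - \mu_{P_X Z_X}\rangle_{\mathcal{H}}$, identify the supremum over the unit ball with the dual norm $\|\mu_{P_X Y} - \mu_{P_X Z_X}\|_{\mathcal{H}}$, and expand the squared norm into the three double expectations using the explicit Bochner-integral forms of the embeddings. Your added remarks on attainment of the Cauchy--Schwarz bound and on the Fubini/Bochner interchanges under (K2) only make explicit what the paper leaves implicit.
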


\begin{proof}
From \cref{lemma:mean_embedding} we know that there exist
kernel mean embeddings
$ \mu_{P_X Y},  \mu_{P_X Z_X} \in \mathcal{H}$ that satisfy
\begin{equation*}
    \begin{split}
        \langle f,  \mu_{P_X Y} -  \mu_{P_X Z_X} \rangle_{\mathcal{H}}
        &= \langle f,  \mu_{P_X Y} \rangle_{\mathcal{H}} - \langle f,  \mu_{P_X Z_X} \rangle_{\mathcal{H}} \\
        &= \Exp_{P_X,Y} f(P_X, Y) - \Exp_{P_X,Z_X} f(P_X, Z_X)
    \end{split}
\end{equation*}
for all $f \in \mathcal{H}$. Hence by the definition of the dual norm
\begin{equation*}
    \begin{split}
        \mathrm{CE}_{\mathcal{F}_k} &= \sup_{f \in \mathcal{F}_k} \big|\Exp_{P_X,Y} f(P_X, Y) - \Exp_{P_X,Z_X} f(P_X, Z_X) \big| \\
        &= \sup_{f \in \mathcal{F}_k} \big|\langle f,  \mu_{P_X,Y} -  \mu_{P_X,Z_X} \rangle_{\mathcal{H}}\big|
        = \|\mu_{P_X,Y} -  \mu_{P_X,Z_X}\|_{\mathcal{H}},
    \end{split}
\end{equation*}
which implies
\begin{equation*}
    \mathrm{SKCE}_k = \langle \mu_{P_X Y} -  \mu_{P_X Z_X}, \mu_{P_X Y} -  \mu_{P_X Z_X} \rangle_{\mathcal{H}}.
\end{equation*}
From \cref{lemma:mean_embedding} we obtain
\begin{equation*}
    \begin{split}
        \mathrm{SKCE}_k ={}& \Exp_{P_X,Y,P_{X'},Y'} k\big((P_X,Y), (P_{X'}, Y')\big) - 2 \Exp_{P_X,Y,P_{X'},Z_{X'}} k\big((P_X, Y), (P_{X'}, Z'_X)\big) \\
        &+ \Exp_{P_X,Z_X,P_{X'},Z'_X} k\big((P_X, Z_X), (P_{X'}, Z'_X)\big),
    \end{split}
\end{equation*}
which yields the desired result.
\end{proof}

Recall that $(P_{X_1}, Y_1), \ldots, (P_{X_n}, Y_n)$ is
a validation data set that is sampled i.i.d.\ according to
the law of $(P_X, Y)$ and that for all
$(p, y), (p', y') \in \mathcal{P} \times \mathcal{Y}$
\begin{multline*}
    h((p, y), (p', y')) \coloneqq k((p, y), (p', y')) - \Exp_{Z \sim p} k((p, Z), (p', y')) \\
    - \Exp_{Z' \sim p'} k((p, y), (p', Z')) + \Exp_{Z \sim p, Z' \sim p'} k((p, Z), (p', Z')).
\end{multline*}

\begin{lemma}\label{lemma:hbound}
    For all $i, j = 1, \ldots, n$,
    \begin{equation*}
        \big|h\big((P_{X_i}, Y_i), (P_{X_j}, Y_j)\big)\big| < \infty
    \end{equation*}
    almost surely.
\end{lemma}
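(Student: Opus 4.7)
The plan is to bound each of the four terms defining $h\bigl((P_{X_i},Y_i),(P_{X_j},Y_j)\bigr)$ separately and then combine them by the triangle inequality. The two tools are the Cauchy--Schwarz inequality in the RKHS $\mathcal{H}$, which gives
\begin{equation*}
    \bigl|k\bigl((p,y),(p',y')\bigr)\bigr| \le k^{1/2}\bigl((p,y),(p,y)\bigr)\, k^{1/2}\bigl((p',y'),(p',y')\bigr)
\end{equation*}
for all $(p,y),(p',y') \in \mathcal{P}\times\mathcal{Y}$, and the integrability assumption (K2), which together with the tower property will ensure the relevant quantities are almost surely finite.

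First I would record two consequences of (K2). Since $\Exp_{P_X,Y} k^{1/2}\bigl((P_X,Y),(P_X,Y)\bigr)$ is finite, the integrand is finite almost surely, so $k^{1/2}\bigl((P_{X_i},Y_i),(P_{X_i},Y_i)\bigr) < \infty$ almost surely for each $i$. Analogously, since $Z_X \mid P_X \sim P_X$ and $\Exp_{P_X,Z_X} k^{1/2}\bigl((P_X,Z_X),(P_X,Z_X)\bigr)$ is finite, the tower property yields
\begin{equation*}
    \Exp_{Z\sim P_{X_i}} k^{1/2}\bigl((P_{X_i},Z),(P_{X_i},Z)\bigr) < \infty \qquad \text{almost surely},
\end{equation*}
since the outer expectation of this (nonnegative) random variable with respect to $P_{X_i}$ coincides with the finite quantity in (K2).

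Second, I would bound the four terms. For the first term, the RKHS Cauchy--Schwarz inequality gives $\bigl|k\bigl((P_{X_i},Y_i),(P_{X_j},Y_j)\bigr)\bigr| \le k^{1/2}\bigl((P_{X_i},Y_i),(P_{X_i},Y_i)\bigr)\,k^{1/2}\bigl((P_{X_j},Y_j),(P_{X_j},Y_j)\bigr)$, a product of two almost surely finite factors. For the second and third (mixed) terms I would pass the absolute value inside the conditional expectation and apply Cauchy--Schwarz pointwise, e.g.
\begin{equation*}
    \Bigl|\Exp_{Z \sim P_{X_i}} k\bigl((P_{X_i},Z),(P_{X_j},Y_j)\bigr)\Bigr| \le \Bigl(\Exp_{Z \sim P_{X_i}} k^{1/2}\bigl((P_{X_i},Z),(P_{X_i},Z)\bigr)\Bigr)\, k^{1/2}\bigl((P_{X_j},Y_j),(P_{X_j},Y_j)\bigr),
\end{equation*}
which is a product of two almost surely finite factors by the previous step. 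The fourth term is handled analogously using Fubini to factorize the double expectation, bounding it by a product of two factors of the form $\Exp_{Z \sim P_{X_\ell}} k^{1/2}\bigl((P_{X_\ell},Z),(P_{X_\ell},Z)\bigr)$, again almost surely finite.

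Finally I would combine the four bounds by the triangle inequality, obtaining $\bigl|h\bigl((P_{X_i},Y_i),(P_{X_j},Y_j)\bigr)\bigr| < \infty$ almost surely. There is no real obstacle here; the only subtlety is to make sure that each ``almost surely'' refers to the right null set, which is harmless because the statement concerns finitely many indices $i,j$ and hence a finite union of null sets remains null.
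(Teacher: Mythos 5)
Your proposal is correct and follows essentially the same route as the paper's proof: bound each of the four terms of $h$ via the kernel Cauchy--Schwarz inequality $|k(u,v)| \le k^{1/2}(u,u)\,k^{1/2}(v,v)$, use (K2) to get almost sure finiteness of the relevant factors, and combine with the triangle inequality. You are in fact slightly more explicit than the paper in justifying, via the tower property, why $\Exp_{Z\sim P_{X_i}} k^{1/2}\bigl((P_{X_i},Z),(P_{X_i},Z)\bigr)$ is almost surely finite, which the paper uses implicitly.
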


\begin{proof}
Let $i, j \in \{1, \ldots, n\}$. By assumption (K2) we know
that
\begin{equation*}
    \big| k\big((P_{X_i}, Y_i), (P_{X_j}, Y_j)\big) \big| \leq
    k^{1/2}\big((P_{X_i}, Y_i), (P_{X_i}, Y_i)\big) k^{1/2}\big((P_{X_j}, Y_j), (P_{X_j}, Y_j)\big) < \infty
\end{equation*}
almost surely. Moreover,
\begin{multline*}
    \big|\Exp_{Z_{X_i}} k\big((P_{X_i}, Z_{X_i}), (P_{X_j}, Y_j)\big) \big|
    \leq \Exp_{Z_{X_i}} \big| k\big((P_{X_i}, Z_{X_i}), (P_{X_j}, Y_j)\big) \big| \\
    \leq \Exp_{Z_{X_i}} \bigg(k^{1/2}\big((P_{X_i}, Z_{X_i}), (P_{X_i}, Z_{X_i})\big) k^{1/2}\big((P_{X_j}, Y_j), (P_{X_j}, Y_j)\big) \bigg) < \infty
\end{multline*}
almost surely, and similarly $\big|\Exp_{Z_{X_i}, Z_{X_j}} k\big((P_{X_i}, Z_{X_i}), (P_{X_j}, Z_{X_j})\big) \big| < \infty$
almost surely. Thus
\begin{multline*}
    \big|h\big((P_{X_i}, Y_i), (P_{X_j}, Y_j)\big)\big| \leq \big| k\big((P_{X_i}, Y_i), (P_{X_j}, Y_j)\big) \big| + \big|\Exp_{Z_{X_i}} k\big((P_{X_i}, Z_{X_i}), (P_{X_j}, Y_j)\big) \big| \\
    + \big|\Exp_{Z_{X_j}} k\big((P_{X_i}, Y_i), (P_{X_j}, Z_{X_j})\big) \big| + \big|\Exp_{Z_{X_i}, Z_{X_j}} k\big((P_{X_i}, Z_{X_i}), (P_{X_j}, Z_{X_j})\big) \big| < \infty
\end{multline*}
almost surely.
\end{proof}

\lemmaskceb*

\begin{proof}
From \cref{lemma:skce} we know that $\mathrm{KCE}_k < \infty$,
and \cref{lemma:hbound} implies that
$\widehat{\mathrm{SKCE}}_k < \infty$ almost surely.

For $i = 1,\ldots, n$, the linear operators
$T_i f \coloneqq \Exp_{Z_{X_i}} f(P_{X_i}, Z_{X_i})$ for
$f \in \mathcal{H}$ are bounded almost surely since
\begin{equation*}
    \begin{split}
        |T_i f| &= \big|\Exp_{Z_{X_i}} f(P_{X_i}, Z_{X_i}) \big| \leq \Exp_{Z_{X_i}} \big|f(P_{X_i}, Z_{X_i})\big| = \Exp_{Z_{X_i}} \big| \langle k\big((P_{X_i}, Z_{X_i}), \cdot\big), f \rangle_{\mathcal{H}}\big| \\
        &\leq \Exp_{Z_{X_i}} \bigg(\big\|k\big((P_{X_i}, Z_{X_i}), \cdot\big) \big\|_{\mathcal{H}} \|f\|_{\mathcal{H}}\bigg) = \|f\|_{\mathcal{H}} \Exp_{Z_{X_i}} k^{1/2}\big((P_{X_i}, Z_{X_i}), (P_{X_i}, Z_{X_i})\big).
    \end{split}
\end{equation*}
Hence Riesz representation theorem implies that there exist
$\rho_i \in \mathcal{H}$ such that $T_i f = \langle f, \rho_i \rangle_{\mathcal{H}}$
almost surely. From the reproducing property of $\mathcal{H}$
we deduce that
$\rho_i(p, y) = \langle k\big((p, y), \cdot\big), \rho_i \rangle_{\mathcal{H}} = \Exp_{Z_{X_i}} k\big((p, y), (P_{X_i}, Z_{X_i})\big)$ for all $(p, y) \in \mathcal{P} \times \mathcal{Y}$
almost surely.

Thus by the definition of the dual norm the plug-in estimator
$\widehat{\mathrm{KCE}}_k$ satisfies
\begin{equation*}
    \begin{split}
        \widehat{\mathrm{KCE}}_k &= \sup_{f \in \mathcal{F}_k} \frac{1}{n} \Bigg| \sum_{i=1}^n \big(f(P_{X_i}, Y_i) - \Exp_{Z_{X_i}} f(P_{X_i}, Z_{X_i}) \big) \Bigg| \\
        &= \sup_{f \in \mathcal{F}_k} \frac{1}{n} \Bigg| \sum_{i=1}^n \big\langle k\big((P_{X_i}, Y_i), \cdot\big) - \rho_i, f \big\rangle_{\mathcal{H}} \Bigg| \\
        &= \sup_{f \in \mathcal{F}_k} \frac{1}{n} \Bigg| \bigg\langle \sum_{i=1}^n \big(k\big((P_{X_i}, Y_i), \cdot\big) - \rho_i\big), f \bigg\rangle_{\mathcal{H}} \Bigg|\\
        &= \frac{1}{n} \Bigg\| \sum_{i=1}^n \bigg(k\big((G_i, Y_i), \cdot\big) - \rho_i\bigg) \Bigg\|_{\mathcal{H}} \\
        &= \frac{1}{n} \Bigg(\bigg\langle \sum_{i=1}^n k\big((P_{X_i}, Y_i), \cdot\big) - \rho_i, \sum_{i=1}^n k\big((P_{X_i}, Y_i), \cdot\big) - \rho_i \bigg\rangle_{\mathcal{H}}\Bigg)^{1/2} \\
        &= \frac{1}{n} \Bigg(\sum_{i,j=1}^n h\big((P_{X_i}, Y_i), (P_{X_j}, Y_j)\big)\Bigg)^{1/2}
        = \widehat{\mathrm{SKCE}}^{1/2}_k < \infty
    \end{split}
\end{equation*}
almost surely, and hence indeed
$\widehat{\mathrm{SKCE}}_k^{1/2}$ is the plug-in
estimator of $\mathrm{KCE}_k$.

Since
$(P_X, Y), (P_{X'}, Y'), (P_{X_1}, Y_1), \ldots, (P_{X_n}, Y_n)$ are
identically distributed and pairwise independent, we obtain
\begin{equation}\label{eq:expected_skceb}
    \begin{split}
        n^2 \Exp \widehat{\mathrm{SKCE}}_k &= \sum_{\substack{i,j=1,\\i \neq j}}^n \Exp_{P_{X_i}, Y_i, P_{X_j}, Y_j} h\big((P_{X_i}, Y_i), (P_{X_j}, Y_j)\big) \\
        &\qquad +
        \sum_{i=1}^n \Exp_{P_{X_i}, Y_i} h\big((P_{X_i}, Y_i), (P_{X_i}, Y_i)\big) \\
        &= n(n-1) \Exp_{P_X,Y,P_{X'},Y'} h\big((P_X, Y), (P_{X'}, Y')\big) + n \Exp_{P_X, Y} h\big((P_X, Y), (P_X, Y)\big) \\
        &= n(n-1) \mathrm{SKCE}_k + n \Exp_{P_X, Y} h\big((P_X, Y), (P_X, Y)\big).
    \end{split}
\end{equation}

With the same reasoning as above, there exist $\rho, \rho' \in \mathcal{H}$ such
that for all $f \in \mathcal{H}$ $\Exp_{Z_X} f(P_X, Z_X) = \langle f, \rho \rangle_{\mathcal{H}}$
and $\mathbb{E}_{Z_{X'}} f(P_{X'}, Z_{X'}) = \langle f, \rho' \rangle_{\mathcal{H}}$
almost surely. Thus we obtain
\begin{equation*}
    h\big((P_X, Y), (P_{X'}, Y')\big)
    = \langle k\big((P_X, Y), \cdot\big) - \rho, k\big((P_{X'}, Y'), \cdot\big) - \rho' \rangle_{\mathcal{H}}
\end{equation*}
almost surely, and therefore by \cref{lemma:skce} and the
Cauchy-Schwarz inequality
\begin{equation*}
    \begin{split}
        \mathrm{SKCE}_k &= \Exp_{P_X,Y,P_{X'},Y'} h\big((P_X, Y), (P_{X'}, Y')\big) \\
        &= \Exp_{P_X,Y,P_{X'},Y'} \langle k\big((P_X, Y), \cdot\big) - \rho, k\big((G', Y'), \cdot\big) - \rho' \rangle_{\mathcal{H}} \\
        &\leq \Exp_{P_X,Y,P_{X'},Y'} \big|\langle k\big((P_X, Y), \cdot\big) - \rho, k\big((P_{X'}, Y'), \cdot\big) - \rho' \rangle_{\mathcal{H}}\big| \\
        &\leq \Exp_{P_X,Y,P_{X'},Y'} \big\|k\big((P_X, Y), \cdot\big) - \rho\big\|_{\mathcal{H}} \big\|k\big((P_{X'}, Y'), \cdot\big) - \rho' \big\|_{\mathcal{H}} \\
        &\leq \Exp_{P_X,Y}^{1/2} \big\|k\big((P_X, Y), \cdot\big) - \rho\big\|^2_{\mathcal{H}} \Exp_{P_{X'},Y'}^{1/2}\big\|k\big((P_{X'}, Y'), \cdot\big) - \rho' \big\|^2_{\mathcal{H}}.
\end{split}
\end{equation*}
Since $(P_X,Y)$ and $(P_{X'},Y')$ are identically distributed, we obtain
\begin{equation*}
    \begin{split}
        \mathrm{SKCE}_k &\leq \Exp_{P_X,Y} \big\|k\big((P_X, Y), \cdot\big) - \rho\big\|^2_{\mathcal{H}} = \Exp_{P_X,Y} h\big((P_X,Y),(P_X,Y)\big).
    \end{split}
\end{equation*}
Thus together with \cref{eq:expected_skceb} we get
\begin{equation*}
    n^2 \Exp \widehat{\mathrm{SKCE}}_k \geq n (n - 1) \mathrm{SKCE}_k + n \mathrm{SKCE}_k = n^2 \mathrm{SKCE}_k,
\end{equation*}
and hence $\widehat{\mathrm{SKCE}}_k$ has a non-negative bias.
\end{proof}

\lemmaskceblock*

\begin{proof}
From \cref{lemma:skce} we know that $\mathrm{SKCE}_k < \infty$,
and \cref{lemma:hbound} implies that
$\widehat{\mathrm{SKCE}}_{k,B} < \infty$ almost surely.

For $b \in \{1, \ldots, \lfloor n / B \rfloor\}$, let
\begin{equation}\label{eq:etab}
    \widehat{\eta}_b \coloneqq \binom{B}{2}^{-1} \sum_{(b - 1) B < i < j \leq bB} h\big((P_{X_{i}}, Y_i), (P_{X_j}, Y_j)\big)
\end{equation}
be the estimator of the $b$th block. From \cref{lemma:hbound} it follows
that $\widehat{\eta}_b < \infty$ almost surely for all $b$.
Moreover, for all $b$, $\widehat{\eta}_b$ is a so-called U-statistic of
$\mathrm{SKCE}_k$ and hence satisfies
$\Exp \widehat{\eta}_b = \mathrm{SKCE}_k$~\citep[see, e.g.,][]{Vaart1998}.
Since $(P_{X_1}, Y_1), \ldots, (P_{X_n}, Y_n)$ are pairwise independent,
this implies that $\widehat{\mathrm{SKCE}}_{k,B}$ is an unbiased
estimator of $\mathrm{SKCE}_k$.
\end{proof}

\subsection{Calibration tests}

\begin{lemma}\label{lemma:eta_variance}
    Let $B \in \{2, \ldots, n\}$.
    If $\Var_{P_X,Y,P_{X'},Y'} h\big((P_X, Y), (P_{X'}, Y')\big) < \infty$,
    then for all $b \in \{1, \ldots, \lfloor n / B \rfloor\}$
    \begin{equation*}
        \Var \widehat{\eta}_b = \sigma^2_B \coloneqq \binom{B}{2}^{-1} \Big(2(B - 2) \zeta_1
        + \Var_{P_X,Y,P_{X'},Y'} h\big((P_X, Y), (P_{X'}, Y')\big)\Big),
    \end{equation*}
    where $\widehat{\eta}_b$ is defined according to \cref{eq:etab} and
    \begin{equation}\label{eq:def_zeta1}
        \zeta_1 \coloneqq \Exp_{P_X,Y} \Exp^2_{P_{X'},Y'} h\big((P_X, Y), (P_{X'}, Y')\big) - \mathrm{SKCE}^2_k.
    \end{equation}

    If model $P$ is calibrated, it simplifies to
    \begin{equation*}
        \sigma^2_B = \binom{B}{2}^{-1} \Exp_{P_X,Y,P_{X'},Y'} h^2\big((P_X, Y), (P_{X'}, Y')\big).
    \end{equation*}
\end{lemma}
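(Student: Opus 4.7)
The plan is to recognise $\widehat{\eta}_b$ from \cref{eq:etab} as a U-statistic of order two, based on the $B$ i.i.d.\ observations $\{(P_{X_i},Y_i) : (b-1)B < i \leq bB\}$ with symmetric kernel $h$. Symmetry of $h$ follows immediately from symmetry of the underlying kernel $k$, and the proof of \cref{lemma:skceblock} already established $\Exp h\big((P_X,Y),(P_{X'},Y')\big) = \mathrm{SKCE}_k$. Once the added hypothesis $\Var h < \infty$ is in place, the classical variance formula for a second-order U-statistic~\citep{Hoeffding1948,Vaart1998} applies and gives
\begin{equation*}
    \Var \widehat{\eta}_b = \binom{B}{2}^{-1}\bigl[2(B-2)\zeta_1 + \zeta_2\bigr],
\end{equation*}
with $\zeta_2 = \Var h\big((P_X,Y),(P_{X'},Y')\big)$ and $\zeta_1 = \Var\bigl(\Exp[h((P_X,Y),(P_{X'},Y')) \mid P_X, Y]\bigr)$. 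Rewriting the latter via $\Var U = \Exp U^2 - (\Exp U)^2$ together with the tower property yields exactly the expression declared in \cref{eq:def_zeta1}, which settles the general statement.

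For the simplification under $H_0$, the plan is to show $\zeta_1 = 0$ by proving that the inner conditional expectation is identically zero. Under calibration $(P_X,Y) \stackrel{d}{=} (P_X,Z_X)$, so for any fixed $(p,y)\in\mathcal{P}\times\mathcal{Y}$, integrating the first and third summands in the definition of $h$ against the law of $(P_{X'},Y')$ gives
\begin{equation*}
    \Exp_{P_{X'},Y'} k\big((p,y),(P_{X'},Y')\big) = \Exp_{P_{X'}}\Exp_{Z'\sim P_{X'}} k\big((p,y),(P_{X'},Z')\big),
\end{equation*}
and the analogous identity holds for the second and fourth summands after additionally averaging over $Z\sim p$. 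The four terms therefore cancel in pairs, so $\Exp_{P_{X'},Y'} h\big((p,y),(P_{X'},Y')\big) = 0$ and hence $\zeta_1 = 0$. Calibration also gives $\mathrm{SKCE}_k = 0$, whence $\Var h = \Exp h^2$, and the formula collapses to $\sigma^2_B = \binom{B}{2}^{-1}\Exp h^2\big((P_X,Y),(P_{X'},Y')\big)$.

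The main subtleties are bookkeeping rather than conceptual. Applicability of Fubini's theorem for the various interchanges of expectation follows from assumption~(K2) via the Cauchy--Schwarz bounds used in the proof of \cref{lemma:hbound}, and symmetry plus integrability are exactly what the Hoeffding decomposition requires. The calibrated-case identity is the only place where the structural assumption $(P_X,Y)\stackrel{d}{=}(P_X,Z_X)$ is actually used; everything else is a direct specialisation of standard U-statistic theory to the block $b$.
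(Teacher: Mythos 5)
Your proposal is correct and follows essentially the same route as the paper: identify $\widehat{\eta}_b$ as a second-order U-statistic with symmetric kernel $h$ and mean $\mathrm{SKCE}_k$, invoke Hoeffding's variance formula to obtain the general expression, and under calibration use $(P_X,Y)\stackrel{d}{=}(P_X,Z_X)$ to show the conditional expectation $\Exp_{P_{X'},Y'} h\big((p,y),(P_{X'},Y')\big)$ vanishes termwise, giving $\zeta_1 = 0$ and $\mathrm{SKCE}_k = 0$. The only cosmetic difference is that you phrase $\zeta_1$ as $\Var\bigl(\Exp[h\mid P_X,Y]\bigr)$ and then convert it to the paper's form, which is a standard equivalent presentation.
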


\begin{proof}
Let $b \in \{1, \ldots, \lfloor n / B \rfloor \}$. Since
$\Var_{P_X,Y,P_{X'},Y'} h\big((P_X, Y), (P_{X'}, Y')\big) < \infty$,
the Cauchy-Schwarz inequality implies $\Var \widehat{\eta}_b < \infty$
as well.

As mentioned in the proof of \cref{lemma:skceblock}
above, $\widehat{\eta}_b$ is a U-statistic of $\mathrm{SKCE}_k$.
From the general formula of the variance of a
U-statistic~\citep[see, e.g.,][p.~298--299]{Hoeffding1948}
we obtain
\begin{equation*}
    \begin{split}
        \Var \widehat{\eta}_b &= \binom{B}{2}^{-1} \bigg(\binom{2}{1} \binom{B-2}{2-1} \zeta_1
        + \binom{2}{2} \binom{B-2}{2-2} \Var_{P_X,Y,P_{X'},Y'} h\big((P_X, Y), (P_{X'}, Y')\big)\bigg) \\
        &= \binom{B}{2}^{-1} \Big(2(B-2) \zeta_1 + \Var_{P_X,Y,P_{X'},Y'} h\big((P_X, Y), (P_{X'}, Y')\big)\Big),
    \end{split}
\end{equation*}
where
\begin{equation*}
    \zeta_1 = \Exp_{P_X,Y} \Exp^2_{P_{X'},Y'} h\big((P_X, Y), (P_{X'}, Y')\big) - \mathrm{SKCE}^2_k.
\end{equation*}

If model $P$ is calibrated, then $(P_X, Y) \stackrel{d}{=} (P_X, Z)$,
and hence for all $(p, y) \in \mathcal{P} \times \mathcal{Y}$
\begin{equation*}
    \begin{split}
        \Exp_{P_X,Y} h\big((p, y), (P_X, Y)\big) ={}& \Exp_{P_X,Y} k\big((p, y), (P_X, Y)\big) - \Exp_{Z' \sim p} \Exp_{P_X,Y} k\big((p, Z'), (P_X, Y)\big) \\
        &- \Exp_{P_X,Z} k\big((p, y), (P_X, Z)\big) + \Exp_{Z' \sim p} \Exp_{P_X,Z} k\big((p, Z'), (P_X, Y)\big) \\
        ={}& 0.
    \end{split}
\end{equation*}
This implies $\zeta_1 = \Exp_{P_X,Y} \Exp^2_{P_{X'},Y'} h\big((P_X, Y), (P_{X'},Y')\big) = 0$
and $\mathrm{SKCE}^2_k = 0$ due to \cref{lemma:skce}. Thus
\begin{equation*}
    \sigma^2_B = \binom{B}{2}^{-1} \Exp_{P_X,Y,P_{X'},Y'} h^2\big((P_X, Y), (P_{X'}, Y')\big),
\end{equation*}
as stated above.
\end{proof}

\begin{corollary}\label{corr:skceblock_variance}
    Let $B \in \{2, \ldots, n\}$.
    If $\Var_{P_X,Y,P_{X'},Y'} h\big((P_X, Y), (P_{X'}, Y')\big) < \infty$,
    then
    \begin{equation*}
        \Var \widehat{SKCE}_{k,B} = \lfloor n / B \rfloor^{-1} \sigma^2_B.
    \end{equation*}
    where $\sigma^2_B$ is defined according to \cref{lemma:eta_variance}.
\end{corollary}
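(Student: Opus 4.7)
The plan is to exploit the block structure that makes $\widehat{\mathrm{SKCE}}_{k,B}$ an average of independent, identically distributed block estimators, and then apply the per-block variance from \cref{lemma:eta_variance}.

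First, I would rewrite the estimator as
\begin{equation*}
    \widehat{\mathrm{SKCE}}_{k,B} = \lfloor n / B \rfloor^{-1} \sum_{b=1}^{\lfloor n / B \rfloor} \widehat{\eta}_b,
\end{equation*}
where $\widehat{\eta}_b$ is the per-block U-statistic defined in \cref{eq:etab}. The crucial observation is that for each $b$, $\widehat{\eta}_b$ is a function only of the samples $(P_{X_i}, Y_i)$ with $(b-1)B < i \leq bB$, and these index sets are pairwise disjoint across $b$. Since $(P_{X_1}, Y_1), \ldots, (P_{X_n}, Y_n)$ are i.i.d., this implies that $\widehat{\eta}_1, \ldots, \widehat{\eta}_{\lfloor n / B \rfloor}$ are mutually independent, and moreover identically distributed.

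Next, under the hypothesis $\Var_{P_X, Y, P_{X'}, Y'} h\big((P_X, Y), (P_{X'}, Y')\big) < \infty$, \cref{lemma:eta_variance} yields $\Var \widehat{\eta}_b = \sigma^2_B$ for every $b$. Combining mutual independence with identical distributions,
\begin{equation*}
    \Var \widehat{\mathrm{SKCE}}_{k,B}
    = \lfloor n / B \rfloor^{-2} \sum_{b=1}^{\lfloor n / B \rfloor} \Var \widehat{\eta}_b
    = \lfloor n / B \rfloor^{-2} \cdot \lfloor n / B \rfloor \cdot \sigma^2_B
    = \lfloor n / B \rfloor^{-1} \sigma^2_B,
\end{equation*}
which is exactly the claimed identity.

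There is no genuine obstacle here: the corollary is essentially a direct consequence of \cref{lemma:eta_variance} together with the i.i.d.\ block construction. The only point that warrants care is verifying that the disjointness of index blocks, combined with the i.i.d.\ assumption on the raw samples, does indeed make the $\widehat{\eta}_b$ independent; but this is immediate from the fact that each $\widehat{\eta}_b$ is a measurable function of disjoint coordinates of an i.i.d.\ sequence.
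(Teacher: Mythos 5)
Your proposal is correct and matches the paper's own argument: the paper likewise observes that the block estimators $\widehat{\eta}_1,\ldots,\widehat{\eta}_{\lfloor n/B\rfloor}$ are (pairwise) independent because they depend on disjoint subsets of the i.i.d.\ sample, and then invokes \cref{lemma:eta_variance} to sum the per-block variances. The only cosmetic difference is that you establish mutual independence while the paper only needs pairwise independence (indeed, uncorrelatedness would suffice for the variance identity).
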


\begin{proof}
Since the estimators $\widehat{\eta}_1, \ldots, \widehat{\eta}_{\lfloor n/B \rfloor}$
in each block are pairwise independent, this is an immediate consequence of
\cref{lemma:eta_variance}.
\end{proof}

\begin{corollary}\label{corr:skceblock_clt}
    Let $B \in \{2, \ldots, n\}$.
    If $\Var_{P_X,Y,P_{X'},Y'} h\big((P_X, Y), (P_{X'}, Y')\big) < \infty$,
    then
    \begin{equation*}
        \sqrt{\lfloor n / B \rfloor} \big(\widehat{\mathrm{SKCE}}_{k,B} - \mathrm{SKCE}_k\big) \xrightarrow{d} \mathcal{N}\big(0, \sigma^2_B\big) \qquad \text{as } n \to \infty,
    \end{equation*}
    where block size $B$ is fixed and $\sigma^2_B$ is
    defined according to \cref{lemma:eta_variance}.
\end{corollary}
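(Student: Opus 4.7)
The plan is to reduce the statement to the classical Lindeberg–Lévy central limit theorem for i.i.d.\ random variables. First I would recall from the definition of $\widehat{\mathrm{SKCE}}_{k,B}$ that it is precisely the sample average
\begin{equation*}
    \widehat{\mathrm{SKCE}}_{k,B} = \lfloor n/B \rfloor^{-1} \sum_{b=1}^{\lfloor n/B \rfloor} \widehat{\eta}_b,
\end{equation*}
where $\widehat{\eta}_b$ is the per-block U-statistic defined in \cref{eq:etab}. Because the blocks index disjoint subsets of $\{1,\ldots,n\}$ and $(P_{X_1},Y_1),\ldots,(P_{X_n},Y_n)$ are i.i.d., the block estimators $\widehat{\eta}_1,\ldots,\widehat{\eta}_{\lfloor n/B \rfloor}$ are themselves i.i.d.

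Next I would collect the needed moment information. \Cref{lemma:skceblock} gives $\Exp \widehat{\eta}_b = \mathrm{SKCE}_k$, and \cref{lemma:eta_variance} gives $\Var \widehat{\eta}_b = \sigma_B^2 < \infty$ under the assumed finiteness of $\Var_{P_X,Y,P_{X'},Y'} h((P_X,Y),(P_{X'},Y'))$. With finite variance in hand, I would apply the standard Lindeberg–Lévy CLT to the i.i.d.\ sequence $\widehat{\eta}_b$ and conclude that, as $n \to \infty$ with $B$ held fixed,
\begin{equation*}
    \sqrt{\lfloor n/B \rfloor}\bigl(\widehat{\mathrm{SKCE}}_{k,B} - \mathrm{SKCE}_k\bigr)
    = \sqrt{\lfloor n/B \rfloor}\,\biggl(\lfloor n/B \rfloor^{-1} \sum_{b=1}^{\lfloor n/B \rfloor} \widehat{\eta}_b - \mathrm{SKCE}_k\biggr)
    \xrightarrow{d} \mathcal{N}\bigl(0,\sigma_B^2\bigr).
\end{equation*}

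There is essentially no hard step here: independence of the blocks is immediate from the construction, unbiasedness and the variance formula are delivered by the preceding lemmas, and the only analytic input is the textbook CLT. The one point I would double-check is that $\lfloor n/B \rfloor \to \infty$ as $n \to \infty$ (which is trivial since $B$ is fixed), so that the CLT genuinely applies; otherwise the corollary is really just a bookkeeping consequence of \cref{lemma:skceblock} and \cref{lemma:eta_variance}.
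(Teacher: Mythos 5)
Your proposal is correct and matches the paper's argument exactly: the paper's own proof simply invokes \cref{lemma:skceblock}, \cref{lemma:eta_variance}, and the classical CLT for the i.i.d.\ block estimators $\widehat{\eta}_1,\ldots,\widehat{\eta}_{\lfloor n/B\rfloor}$, which is precisely the reduction you spell out.
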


\begin{proof}
    The result follows from \cref{lemma:skceblock}, \cref{lemma:eta_variance},
    and the central limit theorem~\citep[see, e.g.,][Theorem~A in Section~1.9]{Serfling1980}.
\end{proof}

\begin{remark}\label{remark:skceb_fixed}
    \Cref{corr:skceblock_clt} shows that $\widehat{\mathrm{SKCE}}_{k,B}$ is a consistent
    estimator of $\mathrm{SKCE}_k$ in the large sample limit as $n \to \infty$ with
    fixed number $B$ of samples per block. In particular, for the linear estimator with
    $B = 2$ we obtain
    \begin{equation*}
        \sqrt{\lfloor n / 2 \rfloor} \big(\widehat{\mathrm{SKCE}}_{k,2} - \mathrm{SKCE}_k\big) \xrightarrow{d} \mathcal{N}\big(0, \sigma^2_2\big) \qquad \text{as } n \to \infty.
    \end{equation*}

    Moreover, \cref{lemma:eta_variance,corr:skceblock_clt}
    show that the $p$-value of the null hypothesis that model $P$ is calibrated can be
    estimated by
    \begin{equation*}
        \Phi\bigg(-\frac{ \sqrt{\lfloor n / B \rfloor} \widehat{\mathrm{SKCE}}_{k,B}}{\widehat{\sigma}_B} \bigg),
    \end{equation*}
    where $\Phi$ is the cumulative distribution function of the standard normal
    distribution and $\widehat{\sigma}_B$ is the empirical standard deviation of the
    block estimates $\widehat{\eta}_1,\ldots, \widehat{\eta}_{\lfloor n / B \rfloor}$,
    and
    \begin{equation*}
        \Phi\bigg(- \frac{\sqrt{\lfloor n / B \rfloor B (B-1)}\widehat{\mathrm{SKCE}}_{k,B}}{\sqrt{2}\widehat{\sigma}}\bigg),
    \end{equation*}
    where $\widehat{\sigma}^2$ is an estimate of $\Exp_{P_X,Y,P_{X'},Y'} h^2\big((P_X,Y), (P_{X'},Y')\big)$.
    Similar $p$-value approximations for the two-sample test with blocks of fixed
    size were used by \citet{Chwialkowski2015}.
\end{remark}

\begin{corollary}\label{corr:eta_limit}
Assume $\Var_{P_X,Y,P_{X'},Y'} h\big((P_X, Y), (P_{X'}, Y')\big) < \infty$.
Let $s \in \{1, \ldots, \lfloor n / 2 \rfloor \}$. Then for all
$b \in \{1, \ldots, s\}$
\begin{equation}\label{eq:etab_asymptotics_general}
    \sqrt{B} \big(\widehat{\eta}_{b} - \mathrm{SKCE}_k\big)
    \xrightarrow{d} \mathcal{N}(0, 4 \zeta_1) \qquad \text{as } B \to \infty,
\end{equation}
where $\widehat{\eta}_b$ is defined according to \cref{eq:etab} with
$n = B s$, the number $s$ of equally-sized blocks is fixed,
and $\zeta_1$ is defined according to \cref{eq:def_zeta1}.

If model $P$ is calibrated, then
$\sqrt{B}\big(\widehat{\eta}_{b} - \mathrm{SKCE}_k\big)
= \sqrt{B} \widehat{\eta}_{b}$
is
asymptotically tight since $\zeta_1 = 0$, and
\begin{equation}\label{eq:etab_asymptotics_calibrated}
    B \widehat{\eta}_{b} \xrightarrow{d} \sum_{i=1}^\infty \lambda_i (Z_i - 1) \qquad \text{as } B \to \infty,
\end{equation}
where $Z_i$ are independent $\chi^2_1$ distributed random variables
and $\lambda_i \in \mathbb{R}$ are eigenvalues of the Hilbert-Schmidt integral
operator
\begin{equation*}
    K f(p, y) \coloneqq \Exp_{P_X,Y}\big(h((p, y), (P_X, Y)) f(P_X, Y)\big)
\end{equation*}
for Borel-measurable functions
$f \colon \mathcal{P} \times \mathcal{Y} \to \mathbb{R}$
with $\Exp_{P_X,Y} f^2(P_X, Y) < \infty$.
\end{corollary}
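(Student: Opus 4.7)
The plan is to recognise each block estimator $\widehat{\eta}_b$ as a U-statistic of order two with symmetric kernel $h$, based on the $B$ i.i.d.\ observations $(P_{X_{(b-1)B+1}}, Y_{(b-1)B+1}), \ldots, (P_{X_{bB}}, Y_{bB})$, and then to invoke the classical asymptotic theory of U-statistics in its non-degenerate and first-order degenerate forms. Symmetry of $h$ in its two arguments is immediate from symmetry of the kernel $k$, and by \cref{lemma:skce} its mean equals $\mathrm{SKCE}_k$.

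For \eqref{eq:etab_asymptotics_general} I would appeal to Hoeffding's central limit theorem for U-statistics \citep[e.g.,][Section~5.5.1]{Serfling1980}. Under the assumption $\Var h < \infty$, for a symmetric degree-$m$ U-statistic this theorem gives $\sqrt{B}\big(\widehat{\eta}_b - \theta\big) \xrightarrow{d} \mathcal{N}(0, m^2 \zeta_1)$; specialising to $m=2$ and $\theta = \mathrm{SKCE}_k$ yields the asserted $\mathcal{N}(0, 4\zeta_1)$. The variance formula in \cref{lemma:eta_variance} provides a useful sanity check, since $B \sigma_B^2 = \tfrac{2}{B-1}\big(2(B-2)\zeta_1 + \Var h\big) \to 4 \zeta_1$ as $B \to \infty$, matching the asymptotic variance predicted by the limit.

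For \eqref{eq:etab_asymptotics_calibrated}, I would first observe, exactly as in the proof of \cref{lemma:eta_variance}, that calibration forces $\mathrm{SKCE}_k = 0$, $\Exp_{P_{X'}, Y'} h\big((p, y), (P_{X'}, Y')\big) = 0$ for $\mu_{P_X Y}$-almost every $(p, y)$, and hence $\zeta_1 = 0$. This is precisely the first-order degeneracy condition required by the classical limit theorem for degenerate U-statistics of order two \citep[e.g.,][Section~5.5.2]{Serfling1980}. Given $\Exp h^2 < \infty$, the symmetric kernel $h$ induces a Hilbert--Schmidt integral operator $K$ on $L^2(\mu_{P_X Y})$ with a real, discrete spectrum $\{\lambda_i\}_{i \ge 1}$, and that theorem yields $B \widehat{\eta}_b \xrightarrow{d} \sum_{i=1}^\infty \lambda_i (\xi_i^2 - 1)$ with i.i.d.\ $\xi_i \sim \mathcal{N}(0, 1)$; renaming $Z_i := \xi_i^2 \sim \chi_1^2$ recovers the stated form. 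Asymptotic tightness of $\sqrt{B}\widehat{\eta}_b$ under calibration is an immediate consequence since a sequence converging in distribution on a faster rate is trivially tight on the slower rate.

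The main obstacle I anticipate is not the two U-statistic limit theorems themselves, whose application is routine once the setting is identified, but verifying that their analytic preconditions are met. One needs that $h$ is measurable and square-integrable with respect to the joint distribution of $(P_X, Y, P_{X'}, Y')$ and that the conditional expectations appearing in its definition and in the degeneracy check are well-defined almost surely. These facts follow from our standing assumptions (K1)--(K2) and $\Var h < \infty$ together with the arguments already used in \cref{lemma:hbound} and in the proof of \cref{lemma:eta_variance}, so the Hilbert--Schmidt property of $K$ and the convergence of the associated spectral series both follow without additional probabilistic work.
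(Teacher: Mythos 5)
Your proposal is correct and follows essentially the same route as the paper: identify $\widehat{\eta}_b$ as an order-two U-statistic with symmetric kernel $h$ and mean $\mathrm{SKCE}_k$, apply the classical non-degenerate U-statistic CLT for \eqref{eq:etab_asymptotics_general}, and use the degeneracy $\zeta_1 = 0$ under calibration (established in the proof of \cref{lemma:eta_variance}) to invoke the limit theorem for degenerate U-statistics, yielding the Gaussian-chaos limit in \eqref{eq:etab_asymptotics_calibrated}. Your additional checks (symmetry of $h$, the variance sanity check $B\sigma_B^2 \to 4\zeta_1$, and the integrability preconditions via \cref{lemma:hbound}) are consistent with, and slightly more explicit than, the paper's argument.
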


\begin{proof}
Let $s \in \{1, \ldots, \lfloor n / 2 \rfloor \}$ and $b \in \{1, \ldots, s\}$.
As mentioned above in the proof of \cref{lemma:skceblock}, the estimator
$\widehat{\eta}_b$, defined according to \cref{eq:etab}, is a so-called
U-statistic of $\mathrm{SKCE}_k$~\citep[see, e.g.,][]{Vaart1998}. Thus
\cref{eq:etab_asymptotics_general} follows from the asymptotic behaviour
of U-statistics~\citep[see, e.g.,][Theorem~12.3]{Vaart1998}.

If $P$ is calibrated, then we know from the proof of \cref{lemma:eta_variance}
that $\zeta_1 = 0$, and hence $\widehat{\eta}_b$ is a so-called degenerate-
U-statistic~\citep[see, e.g.,][Section~12.3]{Vaart1998}. From the theory
of degenerate U-statistics it follows that the sequence
$B \widehat{\eta}_b$ converges in distribution
to the limit distribution in \cref{eq:etab_asymptotics_calibrated},
which is known as Gaussian chaos.
\end{proof}

\begin{corollary}\label{corr:skceb_limit}
Assume $\Var_{P_X,Y,P_{X'},Y'} h\big((P_X, Y), (P_{X'}, Y')\big) < \infty$.
Let $s \in \{1, \ldots, \lfloor n / 2 \rfloor \}$. Then
\begin{equation*}
    \sqrt{B} \big(\widehat{\mathrm{SKCE}}_{k,B} - \mathrm{SKCE}_k\big)
    \xrightarrow{d} \mathcal{N}(0, 4 s^{-1} \zeta_1) \qquad \text{as } B \to \infty,
\end{equation*}
where the number $s$ of equally-sized blocks is fixed, $n = Bs$, 
and $\zeta_1$ is defined according to \cref{eq:def_zeta1}.

If model $P$ is calibrated, then
$\sqrt{B}\big(\widehat{\mathrm{SKCE}}_{k,B} - \mathrm{SKCE}_k\big)
= \sqrt{B} \widehat{\mathrm{SKCE}}_{k,B}$ is
asymptotically tight since $\zeta_1 = 0$, and
\begin{equation*}
    B \widehat{\mathrm{SKCE}}_{k,B} \xrightarrow{d} s^{-1} \sum_{i=1}^\infty \lambda_i (Z_{i} - s) \qquad \text{as } B \to \infty,
\end{equation*}
where $Z_{i} $ are independent $\chi^2_s$ distributed random variables
and $\lambda_i \in \mathbb{R}$ are eigenvalues of the Hilbert-Schmidt integral
operator
\begin{equation*}
    K f(p, y) \coloneqq \Exp_{P_X,Y}\big(h((p, y), (P_X, Y)) f(P_X, Y)\big)
\end{equation*}
for Borel-measurable functions
$f \colon \mathcal{P} \times \mathcal{Y} \to \mathbb{R}$
with $\Exp_{P_X,Y} f^2(P_X, Y) < \infty$.
\end{corollary}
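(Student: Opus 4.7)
The plan is to derive \Cref{corr:skceb_limit} as a direct consequence of \Cref{corr:eta_limit} by exploiting the fact that $\widehat{\mathrm{SKCE}}_{k,B}$ is the average of $s$ independent copies of $\widehat{\eta}_b$. Concretely, by construction of the block estimator, $\widehat{\mathrm{SKCE}}_{k,B} = s^{-1} \sum_{b=1}^s \widehat{\eta}_b$, where $\widehat{\eta}_1, \ldots, \widehat{\eta}_s$ are i.i.d.\ because they are built from disjoint sub-samples of the i.i.d.\ data $(P_{X_1}, Y_1), \ldots, (P_{X_n}, Y_n)$ (with independent internal $Z_{X_i}$-marginalizations).

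For the general CLT part, I would first apply \Cref{corr:eta_limit} to each block to obtain $\sqrt{B}(\widehat{\eta}_b - \mathrm{SKCE}_k) \xrightarrow{d} \mathcal{N}(0, 4\zeta_1)$ as $B \to \infty$ with $s$ fixed. Since the blocks are independent, the joint distribution of $\bigl(\sqrt{B}(\widehat{\eta}_b - \mathrm{SKCE}_k)\bigr)_{b=1}^s$ converges to a vector of $s$ independent $\mathcal{N}(0, 4\zeta_1)$ random variables (by factorization of characteristic functions). Applying the continuous mapping theorem to the linear functional $(x_1, \ldots, x_s) \mapsto s^{-1}\sum_{b=1}^s x_b$ then yields
\[
\sqrt{B}(\widehat{\mathrm{SKCE}}_{k,B} - \mathrm{SKCE}_k) = s^{-1} \sum_{b=1}^s \sqrt{B}(\widehat{\eta}_b - \mathrm{SKCE}_k) \xrightarrow{d} \mathcal{N}(0, 4 s^{-1} \zeta_1),
\]
since the sum of $s$ independent $\mathcal{N}(0, 4\zeta_1)$ variables scaled by $s^{-1}$ has variance $4\zeta_1/s$.

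For the calibrated case, I would apply the degenerate U-statistic limit in \Cref{corr:eta_limit} to each block, giving $B\widehat{\eta}_b \xrightarrow{d} \sum_{i=1}^\infty \lambda_i (Z_i^{(b)} - 1)$ with $\{Z_i^{(b)}\}_{i \geq 1}$ independent $\chi_1^2$ random variables. Independence of the blocks again yields joint convergence to independent copies indexed by $b=1,\ldots,s$, so by the continuous mapping theorem
\[
B\widehat{\mathrm{SKCE}}_{k,B} = s^{-1} \sum_{b=1}^s B \widehat{\eta}_b \xrightarrow{d} s^{-1} \sum_{b=1}^s \sum_{i=1}^\infty \lambda_i (Z_i^{(b)} - 1).
\]
Swapping the two summations (which can be justified via the same truncation argument that appears in the proof of \Cref{corr:eta_limit} for the Gaussian chaos limit) and defining $Z_i \coloneqq \sum_{b=1}^s Z_i^{(b)}$, which is $\chi_s^2$ distributed as a sum of $s$ independent $\chi_1^2$ variables and independent across $i$, gives the stated limit $s^{-1}\sum_{i=1}^\infty \lambda_i (Z_i - s)$.

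The main obstacle is rigorously justifying the interchange of the infinite sum over $i$ with the finite sum over $b$ and the distributional limit. For fixed $s$ this reduces to showing joint convergence of truncations $\sum_{i=1}^N \lambda_i (Z_i^{(b)} - 1)$ across $b$ (which is immediate from independence) and controlling the tail $\sum_{i>N}$ uniformly in $B$ using $\sum_i \lambda_i^2 < \infty$, which holds because $K$ is Hilbert-Schmidt; this is the same machinery already invoked to establish \Cref{corr:eta_limit}, so no new technical tools are needed beyond carefully tracking independence across blocks.
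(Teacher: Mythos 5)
Your proposal is correct and takes essentially the same route as the paper: the paper's proof is a one-line observation that the block estimators $\widehat{\eta}_1,\ldots,\widehat{\eta}_s$ are independent, so the result is an immediate consequence of \cref{corr:eta_limit}; your argument is just a careful expansion of that step (joint convergence of independent blocks, continuous mapping applied to the average, and summing $s$ independent $\chi^2_1$ variables into a $\chi^2_s$). The details you supply, including the variance computation $s^{-2}\cdot s\cdot 4\zeta_1 = 4s^{-1}\zeta_1$ and the rearrangement of the Gaussian chaos series, are all sound.
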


\begin{proof}
Since the estimators $\widehat{\eta}_1, \ldots, \widehat{\eta}_{s}$ in each block are
pairwise independent, this is an immediate consequence of \cref{corr:eta_limit}.
\end{proof}

\begin{remark}\label{remark:skceb_quadratic}
\Cref{corr:skceb_limit} shows that $\widehat{\mathrm{SKCE}}_{k,B}$ is a consistent
estimator of $\mathrm{SKCE}_k$ in the large sample limit as $B \to \infty$ with
fixed number $\lfloor n / B \rfloor$ of blocks. Moreover, for the minimum variance
unbiased estimator with $B = n$, \cref{corr:skceb_limit}
shows that under the null hypothesis that model $P$ is calibrated
\begin{equation*}
    n \widehat{\mathrm{SKCE}}_{k,n} \xrightarrow{d} \sum_{i=1}^\infty \lambda_i (Z_{i} - 1) \qquad \text{as } n \to \infty,
\end{equation*}
where $Z_i$ are independent $\chi^2_1$ distributed random variables. Unfortunately
quantiles of the limit distribution of $\sum_{i=1}^\infty \lambda_i (Z_{i} - 1)$
(and hence the $p$-value of the null hypothesis that model $P$ is calibrated) can not
be computed analytically but have to be estimated by, e.g.,
bootstrapping~\citep{Arcones1992}, using a Gram matrix spectrum~\citep{Gretton2009},
fitting Pearson curves~\citep{Gretton2007}, or using a Gamma
approximation~\citep[p.~343, p.~359]{Johnson1994}.

\end{remark}

\begin{corollary}\label{corr:skceb_clt}
Assume $\Var_{P_X,Y,P_{X'},Y'} h\big((P_X, Y), (P_{X'}, Y')\big) < \infty$.
Then
\begin{equation}\label{eq:skceb_asymptotics_general}
    \sqrt{\lfloor n / B \rfloor B} \big(\widehat{\mathrm{SKCE}}_{k,B} - \mathrm{SKCE}_k\big)
    \xrightarrow{d} \mathcal{N}(0, 4 \zeta_1) \qquad \text{as } B \to \infty \text{ and } \lfloor n / B \rfloor \to \infty,
\end{equation}
where $B$ is the block size and $s$ is the number of equally-sized blocks, $n = Bs$,
and $\zeta_1$ is defined according to \cref{eq:def_zeta1}.

If model $P$ is calibrated, then
$\sqrt{\lfloor n / B \rfloor B}\big(\widehat{\mathrm{SKCE}}_{k,B} - \mathrm{SKCE}_k\big)
= \sqrt{\lfloor n / B \rfloor B} \widehat{\mathrm{SKCE}}_{k,B}$
is
asymptotically tight since $\zeta_1 = 0$, and
\begin{equation*}
    \sqrt{\lfloor n / B \rfloor} B\widehat{\mathrm{SKCE}}_{k,B} \xrightarrow{d} \mathcal{N}\bigg(0, \sum_{i=1}^\infty \lambda_i^2\bigg) \qquad \text{as } B \to \infty \text{ and } \lfloor n / B \rfloor \to \infty,
\end{equation*}
where $\lambda_i \in \mathbb{R}$ are eigenvalues of the Hilbert-Schmidt integral
operator
\begin{equation*}
    K f(p, y) \coloneqq \Exp_{P_X,Y}\big(h((p, y), (P_X, Y)) f(P_X, Y)\big)
\end{equation*}
for Borel-measurable functions
$f \colon \mathcal{P} \times \mathcal{Y} \to \mathbb{R}$
with $\Exp_{P_X,Y} f^2(P_X, Y) < \infty$.
\end{corollary}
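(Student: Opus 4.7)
The strategy is to exploit the block structure: $\widehat{\mathrm{SKCE}}_{k,B} = s^{-1}\sum_{b=1}^s \widehat{\eta}_b$ with $s \coloneqq \lfloor n / B \rfloor$, where the per-block U-statistics $\widehat{\eta}_b$ (defined in \cref{eq:etab}) are i.i.d.\ because the blocks consist of disjoint subsets of the data. The claim is therefore a CLT for a triangular array of i.i.d.\ summands whose distribution itself depends on the within-block size $B$, and I would handle the two cases separately.

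For the non-degenerate case $\zeta_1 > 0$, I would apply Hoeffding's decomposition to each block,
\begin{equation*}
    \widehat{\eta}_b - \mathrm{SKCE}_k = \frac{2}{B}\sum_{i\in\text{block}_b} g(P_{X_i}, Y_i) + R_b,
\end{equation*}
where $g(p,y) \coloneqq \Exp_{P_{X'},Y'} h\big((p,y), (P_{X'}, Y')\big) - \mathrm{SKCE}_k$ has mean $0$ and variance $\zeta_1$, and $R_b$ is the degenerate second-order remainder. Averaging over blocks and multiplying by $\sqrt{sB}$ yields
\begin{equation*}
    \sqrt{sB}\,\big(\widehat{\mathrm{SKCE}}_{k,B} - \mathrm{SKCE}_k\big) = \frac{2}{\sqrt{n}}\sum_{i=1}^n g(P_{X_i}, Y_i) \;+\; \sqrt{B/s}\,\sum_{b=1}^s R_b,
\end{equation*}
with $n = sB$. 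The first term converges in distribution to $\mathcal{N}(0, 4\zeta_1)$ by the classical CLT on the i.i.d.\ sequence $\{g(P_{X_i}, Y_i)\}$. For the remainder, an orthogonality computation analogous to the one in the proof of \cref{lemma:eta_variance} gives $\Var R_b = O(B^{-2})$, hence $\Var\big(\sqrt{B/s}\sum_b R_b\big) = (B/s)\cdot s\cdot O(B^{-2}) = O(B^{-1}) \to 0$, so the remainder is asymptotically negligible and Slutsky's lemma delivers the first claim.

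For the calibrated case ($\zeta_1 = 0$), the linear projection in the Hoeffding decomposition vanishes and each $\widehat{\eta}_b$ is fully degenerate. Here I would work with the rescaled summands $B\widehat{\eta}_b$: by \cref{corr:eta_limit} they converge in distribution to the Gaussian chaos $\xi \coloneqq \sum_i \lambda_i(Z_i - 1)$ as $B \to \infty$, and \cref{lemma:eta_variance} already yields $\Var(B\widehat{\eta}_b) = \tfrac{2B}{B-1}\,\Exp h^2$, which has a finite limit expressible through the spectrum of $K$ by Mercer's identity $\Exp h^2 = \sum_i \lambda_i^2$. The proof then reduces to a triangular-array CLT for the i.i.d.\ array $\{B\widehat{\eta}_b\}_{b=1}^s$ as both $B$ and $s$ diverge, which is most cleanly carried out via characteristic functions: writing $\varphi_B(t) \coloneqq \Exp\exp(itB\widehat{\eta}_b)$ and using $\varphi_B(0)=1$, $\varphi_B'(0)=0$, $\varphi_B''(0) = -\Var(B\widehat{\eta}_b)$, a Taylor expansion gives $s\log\varphi_B(t/\sqrt{s}) \to -\tfrac12 t^2 \sigma_\infty^2$, with $\sigma_\infty^2$ the announced limiting variance, so the characteristic function of $s^{-1/2}\sum_b B\widehat{\eta}_b$ converges pointwise to that of the claimed Gaussian.

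The main obstacle is the degenerate case: one needs a uniform-in-$B$ control of the Taylor remainder in the expansion of $\varphi_B$ at the origin, which is equivalent to verifying a Lindeberg-type condition for the triangular array. The cleanest route is to establish a uniform fourth-moment bound $\sup_B \Exp(B\widehat{\eta}_b)^4 < \infty$, which follows from standard moment inequalities for degenerate U-statistics under an $L^4$ hypothesis on $h$; under only the second-moment assumption stated in the corollary, one must truncate $B\widehat{\eta}_b$ at a slowly growing level and control the tails separately using the variance identity of \cref{lemma:eta_variance}. Once this uniform moment control is in place, the identification of the limiting variance with $\sum_i \lambda_i^2$ is routine via the spectral expansion of $h$ in the eigenbasis of $K$.
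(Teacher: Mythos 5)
The paper's own proof of this corollary is a one-line appeal to \cref{corr:eta_limit} together with the central limit theorem, treating the block estimates $\widehat{\eta}_1,\ldots,\widehat{\eta}_s$ as independent summands. Your handling of the non-degenerate case is genuinely different and, in fact, tighter: by passing to the H\'ajek projection you replace a triangular-array CLT (whose summand distribution changes with $B$) by the classical CLT for the fixed i.i.d.\ sequence $g(P_{X_i},Y_i)$ over all $n$ observations, plus a remainder whose variance you correctly bound by $O(B^{-1})$ using the orthogonality underlying \cref{lemma:eta_variance}. This removes any need to verify a Lindeberg condition for \cref{eq:skceb_asymptotics_general} and is a cleaner route than the paper's. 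Your observation that the degenerate case is irreducibly a triangular-array problem requiring uniform control of the summands $B\widehat{\eta}_b$ is also correct and goes beyond what the paper records.

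Two concrete problems remain in your degenerate case. First, the limiting variance: your own computation gives $\Var(B\widehat{\eta}_b)=\tfrac{2B}{B-1}\,\Exp_{P_X,Y,P_{X'},Y'} h^2 \to 2\sum_i\lambda_i^2$, which is consistent with the variance of the Gaussian chaos $\sum_i\lambda_i(Z_i-1)$ since $\Var(Z_i)=2$ for $Z_i\sim\chi^2_1$; yet you then assert convergence of $s^{-1/2}\sum_b B\widehat{\eta}_b$ to ``the announced'' Gaussian, whose variance in the corollary is $\sum_i\lambda_i^2$. These differ by a factor of $2$ and you cannot have both: carried to completion, your argument produces $\mathcal{N}\big(0,2\sum_i\lambda_i^2\big)$, so you must either locate an error in your variance computation (there is none that I can see) or flag the corollary's constant as inconsistent with \cref{lemma:eta_variance}. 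Second, the Lindeberg/uniform-integrability step is only a program, not a proof: under the stated hypothesis, which controls only second moments of $h$, the uniform fourth-moment bound $\sup_B \Exp(B\widehat{\eta}_b)^4<\infty$ is not available, and the truncation argument you allude to is precisely where the remaining work lies. To be fair, the paper's one-line proof does not address this either, but since you chose the characteristic-function route you need to actually supply the uniform control of the Taylor remainder rather than defer it.
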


\begin{proof}
    The result follows from \cref{corr:eta_limit}
    and the central limit theorem~\citep[see, e.g.,][Theorem~A in Section~1.9]{Serfling1980}.    
\end{proof}

\begin{remark}\label{remark:skceb_increasing}
    \Cref{corr:skceb_clt} shows that $\widehat{\mathrm{SKCE}}_{k,B}$ is a consistent estimator
    of $\mathrm{SKCE}_k$ in the large sample limit as $B \to \infty$ and
    $\lfloor n / B \rfloor \to \infty$, i.e., as both the number of samples per block and the
    number of blocks go to infinity. Moreover, \cref{corr:eta_limit,corr:skceb_clt} show that
    the $p$-value of the null hypothesis that $P$ is calibrated can be estimated by
    \begin{equation*}
        \Phi\bigg(- \frac{\sqrt{\lfloor n / B \rfloor}\widehat{\mathrm{SKCE}}_{k,B}}{\widehat{\sigma}_B} \bigg),
    \end{equation*}
    where $\widehat{\sigma}_B$ is the empirical standard deviation of the
    block estimates $\widehat{\eta}_1,\ldots, \widehat{\eta}_{\lfloor n / B \rfloor}$.
    Similar $p$-value approximations for the two-sample problem with blocks of increasing
    size were proposed and applied by \citet{Zaremba2013}.
\end{remark}

\section{Calibration mean embedding}
\label{app:ucme}

\subsection{Definition}

Similar to the unnormalized mean embedding~(UME) proposed by
\citet{Chwialkowski2015} in the standard MMD setting, instead of the
calibration error
$\mathrm{CE}_{\mathcal{F}_k} = \|\mu_{P_X Y} - \mu_{P_X Z_X}\|_{\mathcal{H}}$
we can consider the unnormalized calibration mean embedding~(UCME).

\begin{definition}\label{def:ucme}
Let $J \in \mathbb{N}$. The unnormalized calibration mean embedding~(UCME)
for kernel $k$ with $J$ test locations is defined
as the random variable
\begin{equation*}
    \begin{split}
        \mathrm{UCME}^2_{k,J} &= J^{-1} \sum_{j = 1}^J \big(\mu_{P_X Y}(T_j) - \mu_{P_X Z_X}(T_j)\big)^2 \\
        &= J^{-1} \sum_{j=1}^J \big(\Exp_{P_X,Y} k(T_j, (P_X, Y)) - \Exp_{P_X,Z_X} k(T_j, (P_X, Z_X))\big)^2,
    \end{split}
\end{equation*}
where $T_1, \ldots, T_J$ are i.i.d.\ random variables (so-called
test locations) whose distribution is absolutely continuous with
respect to the Lebesgue measure on $\mathcal{P} \times \mathcal{Y}$.
\end{definition}

As mentioned above, in many machine learning applications we actually
have $\mathcal{P} \times \mathcal{Y} \subset \mathbb{R}^d$ (up to
some isomorphism). In such a case, if $k$ is an analytic, integrable,
characteristic kernel, then for each $J \in \mathbb{N}$
$\mathrm{UCME}_{k,J}$ is a random metric between the distributions of
$(P_X, Y)$ and $(P_X, Z_X)$, as shown by
\citet[Theorem~2]{Chwialkowski2015}. In particular, this implies that
$\mathrm{UCME}_{k,J} = 0$ almost surely if and only if the two
distributions are equal.

\subsection{Estimation}

Again we assume $(P_{X_1}, Y_1), \ldots, (P_{X_n}, Y_n)$ is a 
validation data set of predictions and targets, which are i.i.d.\
according to the law of $(P_X, Y)$. The consistent, but biased,
plug-in estimator of $\mathrm{UCME}^2_{k,J}$ is given by
\begin{equation*}
     \widehat{\mathrm{UCME}}^2_{k,J} = {J}^{-1} \sum_{j=1}^J {\Bigg(n^{-1} \sum_{i=1}^n \Big( k\big(T_j, (P_{X_i}, Y_i)\big) - \Exp_{Z_{X_i}} k\big(T_j, (P_{X_i}, Z_{X_i})\big)\Big) \Bigg)}^2.
\end{equation*}

\subsection{Calibration mean embedding test}

As \citet{Chwialkowski2015} note, if model $P$ is calibrated,
for every fixed sequence of unique test locations
$\sqrt{n} \widehat{\mathrm{UCME}}^2_{k,J}$ converges in distribution
to a sum of correlated $\chi^2$ random variables, as $n \to \infty$.
The estimation of this asymptotic distribution, and its quantiles
required for hypothesis testing, requires a bootstrap or permutation
procedure, which is computationally expensive. Hence
\citet{Chwialkowski2015} proposed the following test based on
Hotelling's $T^2$-statistic~\citep{Hotelling1931}.

For $i = 1, \ldots, n$, let
\begin{equation*}
    Z_i \coloneqq \begin{pmatrix} k\big(T_1, (P_{X_i}, Y_i)\big) - \Exp_{Z_{X_i}} k\big(T_1, (P_{X_i}, Z_{X_i})\big) \\ \vdots \\ k\big(T_J, (P_{X_i}, Y_i)\big) - \Exp_{Z_{X_i}} k\big(T_J, (P_{X_i}, Z_{X_i})\big) \end{pmatrix} \in \mathbb{R}^J,
\end{equation*}
and denote the empirical mean and covariance matrix of
$Z_1, \ldots, Z_n$ by $\overline{Z}$ and $S$, respectively.
If $\mathrm{UCME}_{k,J}$ is a random metric between the distributions
of $(P_X, Y)$ and $(P_X, Z_X)$, then the test statistic
\begin{equation*}
    Q_n \coloneqq n \overline{Z}^T S^{-1} \overline{Z}
\end{equation*}
is almost surely asymptotically $\chi^2$ distributed with
$J$ degrees of freedom if model $P$ is calibrated, as
$n \to \infty$ with $J$
fixed; moreover, if model $P$ is uncalibrated, then for
any fixed $r \in \mathbb{R}$ almost surely $\Prob(Q_n > r) \to 1$ as $n \to \infty$~\citep[Proposition~2]{Chwialkowski2015}.
We call the resulting calibration test calibration mean
embedding (CME) test.

\section{Kernel choice}
\label{app:kernel}

A natural choice for the kernel
$k \colon (\mathcal{P} \times \mathcal{Y}) \times (\mathcal{P} \times \mathcal{Y}) \to \mathbb{R}$
on the product space of predicted distributions $\mathcal{P}$ and targets
$\mathcal{Y}$ is a tensor product kernel of the form $k = k_{\mathcal{P}} \otimes k_{\mathcal{Y}}$, i.e.,
a kernel of the form
\begin{equation*}
    k\big((p, y), (p', y')\big) = k_{\mathcal{P}}(p, p') k_{\mathcal{Y}}(y, y'),
\end{equation*}
where $k_{\mathcal{P}}\colon \mathcal{P} \times \mathcal{P} \to \mathbb{R}$ and
$k_{\mathcal{Y}} \colon \mathcal{Y} \times \mathcal{Y} \to \mathbb{R}$ are kernels
on the spaces of predicted distributions and targets, respectively.

As discussed in \cref{sec:kernel_choice}, if kernel $k$ is characteristic,
then the kernel calibration error $\mathrm{KCE}_k$ of model $P$ is zero
if and only if $P$ is calibrated.
Unfortunately, as shown by~\citet[Example~1]{Szabo2018}, for kernels $k_{\mathcal{P}}$ and $k_{\mathcal{Y}}$ their characteristic property is necessary but generally not sufficient for the tensor product kernel $k = k_{\mathcal{P}} \otimes k_{\mathcal{Y}}$ to be characteristic.
Hence in general stronger requirements on $k_{\mathcal{P}}$ and $k_{\mathcal{Y}}$ are needed.
For instance, if $k_{\mathcal{P}}$ and $k_{\mathcal{Y}}$ are characteristic, continuous, bounded, and translation-invariant kernels on $\mathbb{R}^{d_i}$ ($i = 1,2$), then $k_{\mathcal{P}} \otimes k_{\mathcal{Y}}$ is characteristic~\citep[Theorem~4]{Szabo2018}.
More generally, if $k_{\mathcal{P}}$ and $k_{\mathcal{Y}}$ are universal kernels on locally compact Polish spaces, then $k_{\mathcal{P}} \otimes k_{\mathcal{Y}}$ is characteristic~\citep[Theorem~5]{Szabo2018}.
In classification even the reverse implication holds, and $k_{\mathcal{P}} \otimes k_{\mathcal{Y}}$ is characteristic if and only if $k_{\mathcal{P}}$ and $k_{\mathcal{Y}}$ are universal~\citep[Corollary~3.15]{Steinwart2021}.

Many common kernels such as the Gaussian and Laplacian kernel on $\mathbb{R}^d$ are universal (and characteristic), and thus are an appropriate choice for kernel $k_{\mathcal{Y}}$ for real-valued target spaces.
In classification, kernel $k_{\mathcal{Y}}$ is universal if and only if it is strictly positive definite~\citep[Section~3.3]{Sriperumbudur2011} (note that target space $\mathcal{Y}$ is finite).

The choice of $k_{\mathcal{P}}$ might be less obvious
since $\mathcal{P}$ is a space of probability distributions. Intuitively
one might want to generalize Gaussian and Laplacian kernels and use kernels of the form
\begin{equation}\label{eq:kernelform}
    k_{\mathcal{P}}\big(p, p'\big) = \exp{\big(- \lambda d^\nu_{\mathcal{P}}(p, p')\big)},
\end{equation}
where $d_{\mathcal{P}} \colon \mathcal{P} \times \mathcal{P} \to \mathbb{R}$ is a
metric on $\mathcal{P}$ and $\nu, \lambda > 0$ are kernel hyperparameters.

Unfortunately, this construction does not necessarily yield valid kernels.
Most prominently, the Wasserstein distance does not lead to valid kernels $k_{\mathcal{P}}$ in general~\citep[Chapter~8.3]{Peyre2018}. However, if $d_{\mathcal{P}}(\cdot, \cdot)$
is a Hilbertian metric, i.e., a metric of the form
\begin{equation*}
    d_{\mathcal{P}}(p, p') = \big\|\phi(p) - \phi(p') \big\|_{H}
\end{equation*}
for some Hilbert space $H$ and mapping $\phi \colon \mathcal{P} \to H$, then
$k_{\mathcal{P}}$ in \cref{eq:kernelform} is a valid kernel for all $\lambda > 0$ and
$\nu \in (0, 2]$~\citep[Corollary~3.3.3, Proposition~3.2.7]{Berg1984}.

Non-constant radial kernels on $\mathbb{R}^d$ are universal~\citep[Theorem~17]{Micchelli2006}, and hence, with a similar reasoning as in the proof of \citet[Theorem~2.2]{Christmann2010}, $k_{\mathcal{P}}$ is universal on the metric space $(\mathcal{P}, d_{\mathcal{P}})$ for all $\lambda > 0$ and $\nu \in (0, 2]$ if $\phi$ is an injective embedding and $H = \mathbb{R}^d$ for some $d$.
More generally, if $\phi$ is injective and $H$ is a separable Hilbert space, $k_{\mathcal{P}}$ is known to be universal on $(\mathcal{P}, d_{\mathcal{P}})$ for $\lambda > 0$ and $\nu = 2$~\citep[Theorem~2.2]{Christmann2010}.
Note that in both cases it is required that $\phi$ is continuous.
Continuity of $\phi$ is implied by the choice of space $(\mathcal{P}, d_{\mathcal{P}})$ but has to be shown separately if $\mathcal{P}$ is equipped with a different metric.

In many machine learning applications such a mapping $\phi$ arises naturally from the parameterization of the predicted distributions.
For instance, if a regression model predicts normal distributions of real targets, i.e., if $\mathcal{Y} = \mathbb{R}$ and $\mathcal{P} = \{ \mathcal{N}(\mu, \sigma^2) \colon \mu \in \mathbb{R}, \sigma \in \mathbb{R}_{\geq 0} \}$, one may use $H = \mathbb{R}^2$ and define
\begin{equation*}
    \phi\big(\mathcal{N}(\mu, \sigma^2)\big) = [\mu, \sigma]^\mathsf{T}.
\end{equation*}

\subsection{Normal distributions}
\label{app:normal}

Assume that $\mathcal{Y} = \mathbb{R}^d$ and
$\mathcal{P} = \{\mathcal{N}(\mu, \Sigma) \colon \mu \in \mathbb{R}^d, \Sigma \in \mathbb{R}^{d \times d} \text{ psd}\}$,
i.e., the model outputs normal distributions
$P_X = \mathcal{N}(\mu_X, \Sigma_X)$.
The distribution of these outputs is defined by the distribution
of their mean $\mu_X$ and covariance matrix $\Sigma_X$.

Let $P_x = \mathcal{N}(\mu_x, \Sigma_x) \in \mathcal{P}$,
$y \in \mathcal{Y} = \mathbb{R}^d$, and $\gamma > 0$. We obtain
\begin{multline*}
    \Exp_{Z_x \sim P_x} \exp{\Big(- \gamma {\|Z_x - y\|}^2_2\Big)} \\
    = {\big|\mathbf{I}_{d} + 2\gamma \Sigma_x \big|}^{-1/2} \exp{\Big(-\gamma {(\mu_x - y)}^{\mathsf{T}} {\big(\mathbf{I}_d + 2\gamma \Sigma_x
    \big)}^{-1} {(\mu_x - y)}\Big)}
\end{multline*}
from \citet[Theorem~3.2.a.3]{mathai1992}. In particular, if
$\Sigma_x = \mathrm{diag}{(\Sigma_{x,1}, \ldots, \Sigma_{x,d})}$, then
\begin{multline*}
    \Exp_{Z_x \sim P_x} \exp{\Big(- \gamma {\|Z_x - y\|}^2_2\Big)} \\
    = \prod_{i=1}^{d} \bigg[{\big(1 + 2\gamma \Sigma_{x,i}\big)}^{-1/2} \exp{\Big(-\gamma {\big(1 + 2\gamma \Sigma_{x,i}\big)}^{-1} {\big(\mu_{x,i} - y_i\big)}^2 \Big)}\bigg].
\end{multline*}

Let $P_{x'} = \mathcal{N}(\mu_{x'}, \Sigma_{x'})$ be another
normal distribution. Then we have
\begin{equation*}
    \begin{split}
        &\Exp_{Z_x \sim P_x, Z_{x'} \sim P_{x'}} \exp{\Big(- \gamma {\|Z_x - Z_{x'}\|}^2_2\Big)} \\
        &\, = \big|\mathbf{I}_{d} + 2\gamma \Sigma_x\big|^{-1/2} \Exp_{Z_{x'} \sim P_{x'}} \exp{\Big(-\gamma {\big(\mu_x - Z_{x'}\big)}^\mathsf{T} {\big(\mathbf{I}_{d} + 2\gamma \Sigma_x\big)}^{-1} {\big(\mu_x - Z_{x'}\big)}\Big)} \\
        &\, = \big|\mathbf{I}_{d} + 2\gamma (\Sigma_x + \Sigma_{x'})\big|^{-1/2} \exp{\Big(-\gamma {\big(\mu_x - \mu_{x'}\big)}^\mathsf{T} {\big(\mathbf{I}_{d} + 2\gamma(\Sigma_x + \Sigma_{x'})\big)}^{-1} {\big(\mu_x - \mu_{x'}\big)}\Big)}.
    \end{split}
\end{equation*}
Thus if $\Sigma_x = \mathrm{diag}{(\Sigma_{x,1}, \ldots, \Sigma_{x,d})}$
and $\Sigma_{x'} = \mathrm{diag}{\big(\Sigma_{x',1}, \ldots, \Sigma_{x',d}\big)}$, then
\begin{multline*}
    \Exp_{Z_x \sim P_x, Z_{x'} \sim P_{x'}} \exp{\Big(- \gamma {\|Z_x - Z_{x'}\|}^2_2\Big)} \\
    = \prod_{i=1}^{d} \bigg[{\big(1 + 2\gamma (\Sigma_{x,i} + \Sigma_{x',i})\big)}^{-1/2} \exp{\Big(-\gamma{\big(1 + 2\gamma(\Sigma_{x,i} + \Sigma_{x',i})\big)}^{-1} {\big(\mu_{x,i} - \mu_{x',i}\big)}^2 \Big)} \bigg].
\end{multline*}

Hence we see that a Gaussian kernel
\begin{equation*}
    k_{\mathcal{Y}}(y, y') = \exp{\big(-\gamma {\|y - y'\|}_2^2\big)}
\end{equation*}
with inverse length scale $\gamma > 0$ on the space of
targets $\mathcal{Y} = \mathbb{R}^d$
allows us to compute $\Exp_{Z_x \sim P_x} k_{\mathcal{Y}}(Z_x, y)$ and
$\Exp_{Z_x \sim P_x, Z_{x'} \sim P_{x'}} k_{\mathcal{Y}}(Z_x, Z_{x'})$
analytically. Moreover, the Gaussian kernel is universal and characteristic on $\mathbb{R}^d$~\citep{Micchelli2006,Fukumizu2008}. Hence, as discussed above,
by choosing an universal kernel $k_{\mathcal{P}}$ we can
guarantee that $\mathrm{KCE}_k = 0$ if and only if model $P$ is
calibrated.

On the space of normal distributions, the 2-Wasserstein distance with
respect to the Euclidean distance between
$P_x = \mathcal{N}(\mu_x, \Sigma_x)$ and
$P_{x'} = \mathcal{N}(\mu_{x'}, \Sigma_{x'})$ is given by
\begin{equation*}
    W^2_2\big(P_x, P_{x'}\big)
    = {\|\mu_x - \mu_{x'}\|}_2^2 + \mathrm{Tr}\bigg(\Sigma_x + \Sigma_{x'} - 2{\Big({\Sigma_{x'}}^{1/2} \Sigma_x {\Sigma_{x'}}^{1/2}\Big)}^{1/2}\bigg).
\end{equation*}
If $\Sigma_x \Sigma_{x'} = \Sigma_{x'} \Sigma_x$, such as for univariate normal distributions (\cref{app:ols,app:friedman}) or normal distributions with diagonal covariance matrices (\cref{app:efficiency}), this can be simplified to
\begin{equation*}
    W^2_2\big(P_x, P_{x'}\big)
    = {\big\|\mu_x - \mu_{x'}\big\|}_2^2 + {\Big\|\Sigma_{x}^{1/2} - \Sigma_{x'}^{1/2}\Big\|}_{\mathrm{Frob}}^2.
\end{equation*}
In this case the 2-Wasserstein distance is a Hilbertian metric on the space of normal distributions via the injective embedding
\begin{equation*}
    \phi\big(\mathcal{N}(\mu, \Sigma)\big) \coloneqq \big[\mu_1, \ldots, \mu_d, (\Sigma^{1/2})_{1,1}, (\Sigma^{1/2})_{1,2}, \ldots, (\Sigma^{1/2})_{d,d}\big]^\mathsf{T} \in \mathbb{R}^{d(d + 1)}.
\end{equation*}
Hence as discussed above, if the covariance matrices commute, the choice
\begin{equation*}
    k_{\mathcal{P}}\big(P_x, P_{x'}\big) = \exp{\big( - \lambda W_2^\nu(P_x, P_{x'})\big)}
\end{equation*}
yields a valid kernel for all $\lambda > 0$ and $\nu \in (0, 2]$, and is universal on $(\mathcal{P}, W_2)$ for all such parameter choices.

Thus for all $\lambda, \gamma > 0$ and $\nu \in (0, 2]$
\begin{equation*}
    k\big((p, y), (p', y')\big) = \exp{\big(-\lambda W^\nu_2(p, p')\big)} \exp{\Big(-\gamma {\|y-y'\|}^2_2\Big)}
\end{equation*}
is a valid and characteristic kernel on the product space $(\mathcal{P}, W_2) \times (\mathcal{Y}, \|.\|_2)$ where $\mathcal{Y} \subset \mathbb{R}^d$ and $\mathcal{P}$ is a space of normal distributions on $\mathbb{R}^d$ with commuting covariance matrices.
For these kernels $k$, $h\big((p, y), (p', y')\big)$ can be evaluated analytically and it is guaranteed that $\mathrm{KCE}_k = 0$ if and only if model $P$ is calibrated.

\subsection{Laplace distributions}
\label{app:laplace}

Assume that $\mathcal{Y} = \mathbb{R}$ and
$\mathcal{P} = \{\mathcal{L}(\mu, \beta) \colon \mu \in \mathbb{R}, \beta > 0\}$,
i.e., the model outputs Laplace distributions $P_X = \mathcal{L}(\mu_X, \beta_X)$ with
probability density function
\begin{equation*}
    p_X(y) = \frac{1}{2 \beta_X} \exp{\big(- \beta_X^{-1} |y - \mu_X| \big)}
\end{equation*}
for $y \in \mathcal{Y} = \mathbb{R}$. The distribution of these outputs is
defined by the distribution of their mean $\mu_X$ and scale parameter $\beta_X$.

Let $P_x = \mathcal{L}(\mu_x, \beta_x) \in \mathcal{P}$, $y \in \mathcal{Y} = \mathbb{R}$,
and $\gamma > 0$. If $\beta_x \neq \gamma^{-1}$, we have
\begin{multline*}
    \Exp_{Z_x \sim P_x} \exp{\big(- \gamma |Z_x - y|\big)}
    \\
    = {\big(\beta_x^2 \gamma^2 - 1\big)}^{-1} \Big(\beta_x \gamma
    \exp{\big(-\beta_x^{-1} |\mu_x - y| \big)} - \exp{\big(-\gamma |\mu_x-y| \big)}\Big).
\end{multline*}
Additionally, if $\beta_x = \gamma^{-1}$, the dominated convergence theorem implies
\begin{equation*}
    \begin{split}
        &\Exp_{Z_x \sim P_x} \exp{\big(-\gamma |Z_x - y| \big)} \\
        &\qquad = \lim_{\gamma \to \beta_x^{-1}} {\big(\beta_x^2 \gamma^2 - 1\big)}^{-1} \Big(\beta_x \gamma
        \exp{\big(-\beta_x^{-1}|\mu_x-y|\big)} - \exp{\big(-\gamma|\mu_x - y|\big)}\Big) \\
        &\qquad = \frac{1}{2}\big(1 + \gamma |\mu_x - y|\big)\exp{\big(-\gamma |\mu_x-y|\big)}.
    \end{split}
\end{equation*}

Let $P_{x'} = \mathcal{L}(\mu_{x'}, \beta_{x'})$ be
another Laplace distribution. If $\beta_x \neq \gamma^{-1}$,
$\beta_{x'} \neq \gamma^{-1}$, and $\beta_x \neq \beta_{x'}$,
we obtain
\begin{equation*}
    \begin{split}
        \Exp_{Z_x \sim P_x, Z_{x'} \sim P_{x'}} \exp{\big(-\gamma |Z_x-Z_{x'}|\big)} ={}& \frac{\gamma \beta_x^3 }{(\beta_x^2 \gamma^2 - 1)(\beta_x^2-{\beta_{x'}}^2)} \exp{\big(-\beta_x^{-1} |\mu_x-\mu_{x'}|\big)} \\
        &+ \frac{\gamma \beta_{x'}^3}{(\beta_{x'}^2 \gamma^2 - 1)({\beta_{x'}}^2-\beta_x^2)} \exp{\big(-{\beta_{x'}}^{-1} |\mu_x-\mu_{x'}|\big)} \\
        &+ \frac{1}{(\beta_x^2 \gamma^2 - 1)({\beta_{x'}}^2 \gamma^2 - 1)} \exp{\big(-\gamma |\mu_x-\mu_{x'}| \big)}.
    \end{split}
\end{equation*}
As above, all other possible cases can be deduced by applying
the dominated convergence theorem. More concretely,
\begin{itemize}
    \item if $\beta_x = \beta_{x'} = \gamma^{-1}$, then
    \begin{multline*}
        \Exp_{Z_x \sim P_x, Z_{x'} \sim P_{x'}} \exp{\big(-\gamma |Z_x-Z_{x'}|\big)} \\
        = \frac{1}{8}\Big(3 + 3 \gamma |\mu_x-\mu_{x'}|+ \gamma^2 {|\mu_x - \mu_{x'}|}^2\Big) \exp{\big(-\gamma |\mu_x-\mu_{x'}|\big)},
    \end{multline*}
    \item if $\beta_x = \beta_{x'}$ and $\beta_x \neq \gamma^{-1}$, then
    \begin{multline*}
        \Exp_{Z_x \sim P_x, Z_{x'} \sim P_{x'}} \exp{\big(-\gamma |Z_x-Z_{x'}|\big)}
        = \frac{1}{{(\beta_x^2 \gamma^2 - 1)}^2} \exp{\big(-\gamma |\mu_x-\mu_{x'}|\big)} \\
        + \bigg(\frac{\gamma \big(\beta_x + |\mu_x - \mu_{x'}|\big)}{2(\beta_x^2 \gamma^2 - 1)} - \frac{\beta_x \gamma}{{(\beta_x^2 \gamma^2 - 1)}^2} \bigg) \exp{\Big(-\beta_x^{-1} |\mu_x-\mu_{x'}|\Big)},
    \end{multline*}
    \item if $\beta_x \neq \beta_{x'}$ and $\beta_x = \gamma^{-1}$, then
    \begin{multline*}
        \Exp_{Z_x \sim P_x, Z_{x'} \sim P_{x'}} \exp{\big(-\gamma |Z_x-Z_{x'}| \big)}
        = \frac{{\beta_{x'}}^3 \gamma^3}{{({\beta_{x'}}^2 \gamma^2 - 1)}^2} \exp{\Big(- {\beta_{x'}}^{-1} |\mu_x-\mu_{x'}| \Big)} \\
        - \bigg(\frac{1 + \gamma |\mu_x - \mu_{x'}|}{2({\beta_{x'}}^2 \gamma^2 - 1)} + \frac{{\beta_{x'}}^2\gamma^2}{{({\beta_{x'}}^2 \gamma^2 - 1)}^2} \bigg) \exp{\big(-\gamma |\mu_x-\mu_{x'}| \big)},
    \end{multline*}
    \item and if $\beta_x \neq \beta_{x'}$ and $\beta_{x'} = \gamma^{-1}$, then
    \begin{multline*}
        \Exp_{Z_x \sim P_x, Z_{x'} \sim P_{x'}} \exp{\Big(-\gamma |Z_x - Z_{x'}| \Big)}
        = \frac{\beta_x^3 \gamma^3}{{(\beta_x^2 \gamma^2 - 1)}^2} \exp{\Big(-\beta_x^{-1} |\mu_x-\mu_{x'}| \Big)} \\
        - \bigg(\frac{1 + \gamma|\mu_x - \mu_{x'}|}{2(\beta_x^2 \gamma^2 - 1)} + \frac{\beta_x^2 \gamma^2}{{(\beta_x^2 \gamma^2 - 1)}^2} \bigg) \exp{\big(-\gamma |\mu_x-\mu_{x'}| \big)}.
    \end{multline*}
\end{itemize}

The calculations above show that by choosing a Laplacian
kernel
\begin{equation*}
    k_{\mathcal{Y}}\big(y, y'\big) = \exp{\big(-\gamma |y-y'|\big)}
\end{equation*}
with inverse length scale $\gamma > 0$ on the space of targets
$\mathcal{Y} = \mathbb{R}$, we can compute
$\Exp_{Z_x \sim P_x} k_{\mathcal{Y}}(Z_x, y)$ and
$\Exp_{Z_x \sim P_x, Z_{x'} \sim P_{x'}} k_{\mathcal{Y}}(Z_x, Z_{x'})$
analytically. Additionally, the Laplacian kernel is characteristic
on $\mathbb{R}$~\citep{Fukumizu2008}.

Since the Laplace distribution is an elliptically contoured
distribution, we know from \citet[Corollary~2]{Gelbrich1990} that
the 2-Wasserstein distance with respect to the Euclidean distance between
$P_x = \mathcal{L}(\mu_x, \beta_x)$ and
$P_{x'} = \mathcal{L}(\mu_{x'}, \beta_{x'})$ can be
computed in closed form and is given by
\begin{equation*}
    W^2_2\big(P_x, P_{x'}\big) = {(\mu_x - \mu_{x'})}^2 + 2{(\beta_x - \beta_{x'})}^2.
\end{equation*}
Thus we see that the 2-Wasserstein distance is a Hilbertian metric on the space of Laplace distributions via the embedding
\begin{equation*}
    \phi\big(\mathcal{L}(\mu, \beta)\big) \coloneqq [\mu, \sqrt{2}\beta]^\mathsf{T}.
\end{equation*}
Hence, as discussed above,
\begin{equation*}
    k_{\mathcal{P}}\big(P_x, P_{x'}\big) = \exp{\big(- \lambda W_2^\nu(P_x, P_{x'})\big)}
\end{equation*}
is a valid and universal kernel on $(\mathcal{P}, W_2)$ for $0 < \nu \leq 2$ and all $\lambda > 0$.

Therefore for all $\lambda, \gamma > 0$ and $\nu \in (0, 2]$
\begin{equation*}
    k\big((p, y), (p', y')\big) = \exp{\big(-\lambda W^\nu_2(p, p')\big)} \exp{\big(-\gamma |y-y'|\big)}
\end{equation*}
is a valid and characteristic kernel on the product space $(\mathcal{P}, W_2) \times (\mathcal{Y}, |.|)$ where $\mathcal{Y} \subset \mathbb{R}$ and $\mathcal{P}$ is a space of Laplace distributions.
For these kernels $k$, $h\big((p, y), (p', y')\big)$ can be evaluated analytically and guarantees that $\mathrm{KCE}_k = 0$ if and only if model $P$ is calibrated.

\subsection{Predicting mixtures of distributions}
\label{app:mixture}

Assume that the model predicts mixture distributions, possibly with different
numbers of components. A special case of this setting are ensembles of models,
in which each ensemble member predicts a component of the mixture model.

Let $p, p' \in \mathcal{P}$ with $p = \sum_i \pi_i p_i$ and
$p' = \sum_j \pi'_j p'_j$, where $\pi, \pi'$ are histograms and $p_i, p'_j$
are the mixture components. For kernel $k_{\mathcal{Y}}$ and $y \in \mathcal{Y}$ we obtain
\begin{equation*}
    \Exp_{Z \sim p} k_{\mathcal{Y}}(Z, y) = \sum_i \pi_i \Exp_{W \sim p_i} k_{\mathcal{Y}}(Z, y)
\end{equation*}
and
\begin{equation*}
    \Exp_{Z \sim p, Z' \sim p'} k_{\mathcal{Y}}(Z, Z') = \sum_{i,j} \pi_i \pi'_j \Exp_{Z \sim p_i, Z' \sim p'_j} k_{\mathcal{Y}}(Z, Z').
\end{equation*}
Of course, for these derivations to be meaningful, we require that they do not depend on the
choice of histograms $\pi, \pi'$ and mixture components $p_i, p'_j$.

\begin{definition}[{{see~\citet{Yakowitz1968}}}]\label{def:identifiable}
    A family $\mathcal{P}$ of finite mixture models is called identifiable if two mixtures
    $p = \sum_{i=1}^{K} \pi_i p_i \in \mathcal{P}$ and $p' = \sum_{j=1}^{K'} \pi'_j p'_j \in \mathcal{P}$,
    written such that all $p_i$ and all $p'_j$ are pairwise distinct, are equal if and only if $K = K'$ and
    the indices can be reordered such that for all $k \in \{1, \ldots, K\}$ there exists
    some $k' \in \{1, \ldots, K\}$ with $\pi_k = \pi'_{k'}$ and $p_k = p'_{k'}$.
\end{definition}

Clearly, if $\mathcal{P}$ is identifiable, then the derivations above do not depend on the choice
of histograms and mixture components. Prominent examples of identifiable mixture models are
Gaussian mixture models and mixture models of families of products of exponential
distributions~\citep{Yakowitz1968}.

Moreover, similar to optimal transport for Gaussian mixture models
by \citet{Delon2019,Chen2019,Chen2020}, we can consider metrics of the form
\begin{equation*}
    \inf_{w \in \Pi(\pi, \pi')} {\bigg(\sum_{i,j} w_{i,j} c^s(p_i, p'_j)\bigg)}^{1/s},
\end{equation*}
where
\begin{equation*}
    \Pi(\pi, \pi') = \bigg\{ w \colon \sum_i w_{i,j} = \pi'_j \land \sum_j w_{i,j} = \pi_i \land \forall i, j \colon w_{i,j} \geq 0\bigg\}
\end{equation*}
are the couplings of $\pi$ and $\pi'$, and $c(\cdot, \cdot)$ is a cost function
between the components of the mixture model.

\begin{theorem}
    Let $\mathcal{P}$ be a family of finite mixture models that is
    identifiable in the sense of \cref{def:identifiable}, and let $s \in [1, \infty)$.

    If $d(\cdot, \cdot)$ is a (Hilbertian) metric on the space of mixture components, then
    the Mixture Wasserstein distance of order $s$ defined by
    \begin{equation}\label{eq:mixture_distance}
        \mathrm{MW}_s(p, p') \coloneqq \inf_{w \in \Pi(\pi, \pi')} {\bigg(\sum_{i,j} w_{i,j} d^s(p_i, p'_j)\bigg)}^{1/s},
    \end{equation}
    is a (Hilbertian) metric on $\mathcal{P}$.
\end{theorem}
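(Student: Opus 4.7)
The plan is to recognize $\mathrm{MW}_s$ as a classical $s$-Wasserstein distance in disguise. To every mixture $p = \sum_i \pi_i p_i \in \mathcal{P}$, associate the discrete probability measure $\widehat{p} \coloneqq \sum_i \pi_i \delta_{p_i}$ on the metric space $(\mathcal{M}, d)$ of mixture components. With this identification, couplings $w \in \Pi(\pi, \pi')$ are precisely the couplings of $\widehat{p}$ and $\widehat{p}'$ supported on pairs of atoms, so
\begin{equation*}
    \mathrm{MW}_s(p, p') = W_s\big(\widehat{p}, \widehat{p}'\big),
\end{equation*}
where $W_s$ denotes the standard $s$-Wasserstein distance on probability measures over $(\mathcal{M}, d)$. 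The task then reduces to showing (i) that the assignment $p \mapsto \widehat{p}$ is a well-defined bijection between $\mathcal{P}$ and its image, and (ii) that $W_s$ inherits the (Hilbertian) metric structure from $d$.

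First I would verify that $p \mapsto \widehat{p}$ is well-defined and injective. By \cref{def:identifiable}, writing each $p \in \mathcal{P}$ in canonical form with pairwise distinct components and positive weights determines the multiset $\{(\pi_i, p_i)\}_i$ uniquely up to relabeling, so $\widehat{p}$ does not depend on the chosen representation. If $\widehat{p} = \widehat{p}'$ as measures, the atoms and weights coincide up to permutation, and identifiability then forces $p = p'$.

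Next, the metric axioms for $\mathrm{MW}_s$ transfer from $W_s$ on $(\mathcal{M}, d)$. Non-negativity and symmetry are immediate from \eqref{eq:mixture_distance}. The diagonal coupling $w_{i,j} = \pi_i \mathbbm{1}[i = j]$ shows $\mathrm{MW}_s(p, p) = 0$, and $\mathrm{MW}_s(p, p') = 0$ yields $W_s(\widehat{p}, \widehat{p}') = 0$, hence $\widehat{p} = \widehat{p}'$, and injectivity from the previous step gives $p = p'$. The triangle inequality follows from the standard gluing-lemma argument for Wasserstein distances: given optimal (or near-optimal) couplings between $\widehat{p}, \widehat{p}''$ and $\widehat{p}'', \widehat{p}'$, glue them along the common marginal $\widehat{p}''$ and apply Minkowski's inequality in $L^s$.

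The main obstacle is the Hilbertian claim. Given a Hilbert embedding $\phi \colon \mathcal{M} \to H$ with $d(p_i, p'_j) = \|\phi(p_i) - \phi(p'_j)\|_H$, the pushforward measures $\phi_\#\widehat{p}$ are finitely supported on $H$, so $\mathrm{MW}_s(p,p')$ becomes $W_s$ between such discrete measures in a Hilbert space. Since $W_s$ on a Hilbert space is not Hilbertian in general, I would exploit the finite-support structure: for any two mixtures $p, p'$, extend their histograms to a common refinement so that both measures become uniform measures on a finite multiset of equal cardinality, at which point $W_s$ admits a representation as the minimum of $\ell^s$ distances between sorted/permuted tuples in $H$. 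Averaging (or taking the quotient by) the symmetric group on this multiset representation yields a symmetric tensor-style embedding into a Hilbert space whose induced distance equals $\mathrm{MW}_s$. I expect this permutation-quotient construction to be the technically delicate step, and I would most naturally carry it out for $s = 2$, where the inner-product structure of $H$ and the Birkhoff polytope description of couplings mesh cleanly; the case of general $s \in [1,\infty)$ would then follow by the analogous $\ell^s$-type construction.
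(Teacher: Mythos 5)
Your reduction of $\mathrm{MW}_s$ to the classical $s$-Wasserstein distance between the discrete measures $\widehat{p}=\sum_i\pi_i\delta_{p_i}$ on the component space, together with your treatment of the metric axioms, is correct and essentially the same as the paper's argument: well-definedness and the identity of indiscernibles both reduce to identifiability plus inspection of the support of an optimal coupling, and the explicit coupling $w^{(13)}_{i,k}=\sum_{j}\hat w^{(12)}_{i,j}\hat w^{(23)}_{j,k}/\pi^{(2)}_j$ that the paper constructs for the triangle inequality is exactly the gluing you invoke.

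The Hilbertian claim is where there is a genuine gap, and it is precisely the step you flag as delicate. After refining to uniform measures on $n$-point multisets, $\mathrm{MW}_s$ does become a minimum over permutations of normalized $\ell^s$ distances between tuples in $H$ (Birkhoff--von Neumann), but neither of your two proposed ways of converting this into a Hilbert embedding works. Averaging over the symmetric group (or passing to any symmetrized-tensor or moment representation) yields a well-defined map into a Hilbert space, but the induced distance is \emph{not} the minimum over permutations --- symmetrization changes the metric. Taking the quotient metric on $H^n/S_n$ does preserve the distance, but that quotient is not a Hilbertian metric space in general: for $s=2$ and $H=\mathbb{R}^d$ it is exactly the restriction of $W_2$ to uniformly weighted finitely supported measures, which is known not to embed isometrically into a Hilbert space once $d\geq 3$ (this is the standard reason that Gaussian-of-$W_2$ kernels fail to be positive definite beyond dimension one; cf.\ the discussion in Peyr\'e and Cuturi, Chapter~8.3, which the paper itself cites). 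Since finite mixtures of Dirac components form an identifiable family on which $d(\delta_x,\delta_y)=\|x-y\|_2$ is Hilbertian, this also means that no per-pair or quotient construction of the kind you describe can close the argument in the stated generality. Finally, even granting a construction for $s=2$, the ``analogous $\ell^s$-type construction'' would at best produce an $L^s$ embedding, not a Hilbert one, so general $s$ needs a separate argument.

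The paper's route for this part is different and worth knowing: it never builds an embedding, but instead verifies Schoenberg's criterion, showing that $\sum_{i,j} r_i r_j\,\mathrm{MW}_s^s(p^{(i)},p^{(j)})\leq 0$ whenever $\sum_i r_i=0$, by expanding $d^s$ through $\phi$ and using only the marginal constraints of the optimal couplings; it then invokes Berg, Christensen, and Ressel (Corollary~3.3.3/3.2.10 and Proposition~3.3.2) to pass from negative definiteness of a power of $\mathrm{MW}_s$ to the Hilbertian property. If you want to complete your proof, switching to this negative-definiteness strategy is the natural move --- but you should scrutinize the key sign inequality there (the claim that $\sum_{i,j} r_i r_j \sum_{k,l}\hat w^{(i,j)}_{k,l}\langle\phi(p^{(i)}_k),\phi(p^{(j)}_l)\rangle\geq 0$) very carefully, since the weights $\hat w^{(i,j)}_{k,l}$ depend jointly on $i$ and $j$ and the expression does not factor into a squared norm; the Dirac-component example above indicates that some additional restriction on the component family is needed for the Hilbertian conclusion to hold.
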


\begin{proof}
    First of all, note that for all $p, p' \in \mathcal{P}$ an optimal
    coupling $\hat{w}$ exists~\citep[Theorem~4.1]{Villani2009}. Moreover,
    $\sum_{i,j} \hat{w}_{i,j} d^s(p_i, p'_j) \geq 0$, and hence $\mathrm{MW}_s(p, p')$ exists.
    Moreover, since $\mathcal{P}$ is identifiable, we see that $\mathrm{MW}_s(p, p')$ does not
    depend on the choice of histograms and mixture components. Thus $\mathrm{MW}_s$ is well-defined.

    Clearly, for all $p, p' \in \mathcal{P}$ we have $\mathrm{MW}_s(p, p') \geq 0$ and
    $\mathrm{MW}_s(p, p') = \mathrm{MW}_s(p', p)$. Moreover,
    \begin{equation*}
        \begin{split}
            \mathrm{MW}_s^s(p, p) &= \min_{w \in \Pi(\pi, \pi)} \sum_{i,j} w_{i,j} d^s(p_i, p_j)
            \leq \sum_{i,j} \pi_i \delta_{i,j} d^s(p_i, p_j) \\
            &= \sum_i \pi_i d^s(p_i, p_i) = \sum_i \pi_i 0^2 = 0,
        \end{split}
    \end{equation*}
    and hence $\mathrm{MW}_s(p, p) = 0$. On the other hand, let $p, p' \in \mathcal{P}$ with
    optimal coupling $\hat{w}$ with respect to $\pi$ and $\pi'$, and assume that
    $\mathrm{MW}_s(p, p') = 0$. We have
    \begin{equation*}
        p = \sum_i \pi_i p_i = \sum_{i,j} \hat{w}_{i,j} p_i = \sum_{i,j \colon \hat{w}_{i,j} > 0} \hat{w}_{i,j} p_i.
    \end{equation*}
    Since $\mathrm{MW}_s(p, p') = 0$, we have $\hat{w}_{i,j} d^s(p_i, p'_j) = 0$ for all $i,j$, 
    and hence $d^s(p_i, p'_j) = 0$ if $\hat{w}_{i,j} > 0$. Since $d$ is a metric,
    this implies $p_i = p'_j$ if $\hat{w}_{i,j} > 0$. Thus we get
    \begin{equation*}
        p = \sum_{i,j \colon \hat{w}_{i,j} > 0} \hat{w}_{i,j} p_i = \sum_{i,j \colon \hat{w}_{i,j} > 0} \hat{w}_{i,j} p'_j
        = \sum_{i,j} \hat{w}_{i,j} p'_j = \sum_j \pi'_j p'_j = p'.
    \end{equation*}

    Function $\mathrm{MW}_s$ also satisfies the triangle inequality, following
    a similar argument as \citet{Chen2019}. Let
    $p^{(1)}, p^{(2)}, p^{(3)} \in \mathcal{P}$ and denote the optimal coupling with respect
    to $\pi^{(1)}$ and $\pi^{(2)}$ by $\hat{w}^{(12)}$, and the optimal coupling with respect to
    $\pi^{(2)}$ and $\pi^{(3)}$ by $\hat{w}^{(23)}$. Define $w^{(13)}$ by
    \begin{equation*}
        w^{(13)}_{i,k} \coloneqq \sum_{j \colon \pi^{(2)}_j \neq 0} \frac{\hat{w}^{(12)}_{i,j} \hat{w}^{(23)}_{j,k}}{\pi^{(2)}_j}.
    \end{equation*}
    Clearly $w^{(13)}_{i,k} \geq 0$ for all $i,k$, and we see that
    \begin{equation*}
        \begin{split}
            \sum_i w^{(13)}_{i,k} &= \sum_i \sum_{j \colon \pi^{(2)}_j \neq 0} \frac{\hat{w}^{(12)}_{i,j} \hat{w}^{(23)}_{j,k}}{\pi^{(2)}_j}
            = \sum_{j \colon \pi^{(2)}_j \neq 0}  \sum_i \frac{\hat{w}^{(12)}_{i,j} \hat{w}^{(23)}_{j,k}}{\pi^{(2)}_j} \\
            &= \sum_{j \colon \pi^{(2)}_j \neq 0} \frac{\pi^{(2)}_j \hat{w}^{(23)}_{j,k}}{\pi^{(2)}_j}
            = \sum_{j \colon \pi^{(2)}_j \neq 0} \hat{w}^{(23)}_{j,k} = \pi^{(3)} - \sum_{j \colon \pi^{(2)}_j = 0} \hat{w}^{(23)}_{j,k}
        \end{split}
    \end{equation*}
    for all $k$. Since for all $j,k$, $\pi_j^{(2)} \geq \hat{w}^{(23)}_{j,k}$,
    we know that $\pi^{(2)}_j = 0$ implies $\hat{w}^{(23)}_{j,k} = 0$ for all $k$.
    Thus for all $k$
    \begin{equation*}
        \sum_i w^{(13)}_{i,k} = \pi^{(3)}.
    \end{equation*}
    Similarly we obtain for all $i$
    \begin{equation*}
        \sum_k w^{(13)}_{i,k} = \pi^{(1)}.
    \end{equation*}
    Thus $w^{(13)} \in \Pi(\pi^{(1)}, \pi^{(3)})$, and therefore by exploiting
    the triangle inequality for metric $d$ and the Minkowski inequality we get
    \begin{equation*}
        \begin{split}
        \mathrm{MW}_s\big(p^{(1)}, p^{(3)}\big) &\leq {\bigg(\sum_{i,k} w^{(13)}_{i,k} d^s\big(p^{(1)}_i, p^{(3)}_k\big)\bigg)}^{1/s}
        = {\bigg(\sum_{i,k} \sum_{j \colon \pi^{(2)}_j \neq 0} \frac{\hat{w}^{(12)}_{i,j} \hat{w}^{(23)}_{j,k}}{\pi^{(2)}_j} d^s\big(p^{(1)}_i, p^{(3)}_k\big)\bigg)}^{1/s} \\
        &\leq {\bigg(\sum_{i,k} \sum_{j \colon \pi^{(2)}_j \neq 0} \frac{\hat{w}^{(12)}_{i,j} \hat{w}^{(23)}_{j,k}}{\pi^{(2)}_j} \big(d(p^{(1)}_i, p^{(2)}_j) + d(p^{(2)}_j, p^{(3)}_k)\big)^s\bigg)}^{1/s} \\
        &\leq {\bigg(\sum_{i,k} \sum_{j \colon \pi^{(2)}_j \neq 0} \frac{\hat{w}^{(12)}_{i,j} \hat{w}^{(23)}_{j,k}}{\pi^{(2)}_j} d^s\Big(p^{(1)}_i, p^{(2)}_j\Big)\bigg)}^{1/s} \\
        &\qquad + {\bigg(\sum_{i,k} \sum_{j \colon \pi^{(2)}_j \neq 0} \frac{\hat{w}^{(12)}_{i,j} \hat{w}^{(23)}_{j,k}}{\pi^{(2)}_j} d^s\Big(p^{(2)}_j, p^{(3)}_k\Big)\bigg)}^{1/s} \\
        &= {\bigg(\sum_{i} \sum_{j \colon \pi^{(2)}_j \neq 0} \hat{w}^{(12)}_{i,j} d^s\Big(p^{(1)}_i, p^{(2)}_j\Big)\bigg)}^{1/s} \\
        &\qquad + {\bigg(\sum_{k} \sum_{j \colon \pi^{(2)}_j \neq 0} \hat{w}^{(23)}_{i,k} d^s\Big(p^{(2)}_j, p^{(3)}_k\Big)\bigg)}^{1/s} \\
        &\leq {\bigg(\sum_{i,j} \hat{w}^{(12)}_{i,j} d^s\Big(p^{(1)}_i, p^{(2)}_j\Big)\bigg)}^{1/s} + {\bigg(\sum_{j,k} \hat{w}^{(23)}_{i,k} d^s\Big(p^{(2)}_j, p^{(3)}_k\Big)\bigg)}^{1/s} \\
        &= \mathrm{MW}_s\big(p^{(1)}, p^{(2)}\big) + \mathrm{MW}_s\big(p^{(2)}, p^{(3)}\big).
        \end{split}
    \end{equation*}
    
    Thus $\mathrm{MW}_s$ is a metric, and it is just left to show that it is Hilbertian
    if $d$ is Hilbertian.
    Since $d$ is a Hilbertian metric, there exists a Hilbert
    space $\mathcal{H}$ and a mapping $\phi$ such that
    \begin{equation*}
        d(x, y) = \|\phi(x) - \phi(y)\|_{\mathcal{H}}.
    \end{equation*}
    Let $r_1, \ldots, r_n \in \mathbb{R}$ with $\sum_i r_i = 0$
    and $p^{(1)}, \ldots, p^{(n)} \in \mathcal{P}$. Denote
    the optimal coupling with respect to $\pi^{(i)}$ and
    $\pi^{(j)}$ by $\hat{w}^{(i,j)}$. Then we have
    \begin{equation}\label{eq:hilbert_zero_1}
        \begin{split}
            \sum_{i,j} r_i r_j \sum_{k,l} \hat{w}^{(i,j)}_{k,l} \|\phi(p^{(i)}_k)\|_\mathcal{H}^2
            &= \sum_{i,k} r_i \|\phi(p^{(i)}_k)\|_\mathcal{H}^2 \sum_j r_j \sum_l \hat{w}^{(i,j)}_{k,l} \\
            &= \sum_{i,k} r_i \|\phi(p^{(i)}_k)\|_\mathcal{H}^2 \sum_j r_j \pi^{(i)}_k \\
            &= \sum_{i,k} r_i \pi^{(i)}_k \|\phi(p^{(i)}_k)\|_\mathcal{H}^2 \sum_j r_j
            = 0,
        \end{split}
    \end{equation}
    and similarly
    \begin{equation}\label{eq:hilbert_zero_2}
        \sum_{i,j} r_i r_j \sum_{k,l} \hat{w}^{(i,j)}_{k,l} \|\phi(p^{(j)}_l)\|_\mathcal{H}^2 = 0.
    \end{equation}
    Moreover, for all $k,l$ we get
    \begin{equation*}
        \begin{split}
            \sum_{i,j} r_i r_j \hat{w}^{(i,j)}_{k,l} \Big\langle \phi\Big(p^{(i)}_k\Big), \phi\Big(p^{(j)}_l\Big) \Big\rangle_\mathcal{H}
            &= \Big\langle \sum_i r_i \sqrt{\hat{w}^{(i,j)}_{k,l}} \phi\Big(p^{(i)}_k\Big), \sum_j r_j \sqrt{\hat{w}^{(i,j)}_{k,l}}  \phi\Big(p^{(j)}_l\Big)\Big\rangle_\mathcal{H} \\
            &= \Big\| \sum_i r_i \sqrt{\hat{w}^{(i,j)}_{k,l}} \phi\Big(p^{(i)}_k\Big) \Big\|_{\mathcal{H}}^2 \geq 0,
        \end{split}
    \end{equation*}
    and hence
    \begin{equation}\label{eq:hilbert_nonneg_1}
        \sum_{i,j} r_i r_j \sum_{k,l} \hat{w}^{(i,j)}_{k,l} \Big\langle \phi(p^{(i)}_k), \phi(p^{(j)}_l) \Big\rangle_\mathcal{H} \geq 0,
    \end{equation}
    and similarly
    \begin{equation}\label{eq:hilbert_nonneg_2}
        \sum_{i,j} r_i r_j \sum_{k,l} \hat{w}^{(i,j)}_{k,l} \Big\langle \phi(p^{(j)}_l), \phi(p^{(i)}_k) \Big\rangle_\mathcal{H} \geq 0.
    \end{equation}
    Hence from \cref{eq:hilbert_zero_1,eq:hilbert_zero_2,eq:hilbert_nonneg_1,eq:hilbert_nonneg_2} we get
    \begin{equation*}
        \begin{split}
            \sum_{i,j} r_i r_j \mathrm{MW}_s^s(p^{(i)}, p^{(j)}) &= \sum_{i,j} r_i r_j \sum_{k,l} \hat{w}^{(i,j)}_{k,l} d^s\Big(p^{(i)}_k, p^{(j)}_l\Big) \\
            &= \sum_{i,j} r_i r_j \sum_{k,l} \hat{w}^{(i,j)}_{k,l} \Big\|\phi\Big(p^{(i)}_k\Big) - \phi\Big(p^{(j)}_l\Big)\Big\|_{\mathcal{H}}^2 \\
            &=  \sum_{i,j} r_i r_j \sum_{k,l} \hat{w}^{(i,j)}_{k,l} \Big\|\phi\Big(p^{(i)}_k\Big)\Big\|_\mathcal{H}^2 \\
            &\qquad -  \sum_{i,j} r_i r_j \sum_{k,l} \hat{w}^{(i,j)}_{k,l} \Big\langle \phi\Big(p^{(i)}_k\Big), \phi\Big(p^{(j)}_l\Big) \Big\rangle_\mathcal{H} \\
            &\qquad - \sum_{i,j} r_i r_j \sum_{k,l} \hat{w}^{(i,j)}_{k,l} \Big\langle \phi\Big(p^{(j)}_l\Big), \phi\Big(p^{(i)}_k\Big) \Big\rangle_\mathcal{H} \\
            &\qquad + \sum_{i,j} r_i r_j \sum_{k,l} \hat{w}^{(i,j)}_{k,l} \Big\|\phi\Big(p^{(j)}_l\Big)\Big\|_\mathcal{H}^2 \\
            &\leq 0,
        \end{split}
    \end{equation*}
    which shows that $\mathrm{MW}_s^s$ is a negative definite
    kernel~\citep[Definition~3.1.1]{Berg1984}. Since $0 < 1/s < \infty$,
    $\mathrm{MW}_s$ is a negative definite kernel as well~\citep[Corollary~3.2.10]{Berg1984},
    which implies that metric $\mathrm{MW}_s$ is Hilbertian~\citep[Proposition~3.3.2]{Berg1984}.
\end{proof}

Hence we can lift a Hilbertian metric for the mixture components to a Hilbertian
metric for the mixture models. For instance, if the mixture components are normal
distributions, then the 2-Wasserstein distance with respect to the Euclidean distance is
a Hilbertian metric for the mixture components. When we lift it to the space $\mathcal{P}$
of Gaussian mixture models we obtain the $\mathrm{MW}_2$ metric proposed by
\citet{Delon2019,Chen2019,Chen2020}. As shown by \citet{Delon2019},
the discrete formulation of $\mathrm{MW}_2$ obtained by our construction is
equivalent to the definition
\begin{equation}\label{eq:mw2}
    \mathrm{MW}^2_2(p, p') \coloneqq \inf_{\gamma \in \Pi(p, p') \cap \mathrm{GMM}_{2n}(\infty)}
    \int_{\mathbb{R}^n \times \mathbb{R}^n} d^2(y, y') \,\mathrm{d}\gamma(y,y')
\end{equation}
for two Gaussian mixtures $p, p'$ on $\mathbb{R}^n$, where $\Pi(p, p')$ are the
couplings of $p$ and $p'$ (not of the histograms!) and $\mathrm{GMM}_{2n}(\infty) =
\cup_{k \geq 0} \mathrm{GMM}_{2n}(k)$ is the set of all finite Gaussian mixture distributions
on $\mathbb{R}^{2n}$. The construction of the discrete formulation as a solution to
a constrained optimization problem similar to \cref{eq:mw2} can be generalized to
mixtures of $t$-distributions. However, it is not possible for arbitrary mixture models
such as mixtures of generalized Gaussian distributions, even though they are
elliptically contoured distributions~\citep{Deledalle2018,Delon2019}.

The optimal coupling of the discrete histograms
can be computed efficiently using techniques from linear programming and optimal
transport theory such as the network simplex algorithm and the Sinkhorn algorithm.
As discussed above, if metric $d_{\mathcal{P}}$ is of the form in
\cref{eq:mixture_distance}, functions of the form
\begin{equation*}
    k_{\mathcal{P}}(p, p') = \exp{\big(- \lambda d^\nu_{\mathcal{P}}(p, p')\big)}
\end{equation*}
are valid kernels on $\mathcal{P}$ for all $\lambda > 0$ and $\nu \in (0, 2]$.

Thus taken together, if $k_{\mathcal{Y}}$ is a kernel on the
target space $\mathcal{Y}$ and $d(\cdot, \cdot)$ is a Hilbertian
metric on the space of mixture components, then for all $s \in [1, \infty)$, $\lambda > 0$,
and $\nu \in (0, 2]$
\begin{equation*}
    k\big((p, y), (p', y')\big) = \exp{\big(-\lambda \mathrm{MW}_s^\nu(p, p')\big)} k_{\mathcal{Y}}(y, y')
\end{equation*}
is a valid kernel on the product space $\mathcal{P} \times \mathcal{Y}$
of mixture distributions and targets that allows to evaluate $h\big((p, y), (p', y')\big)$
analytically.
Moreover, if $k_\mathcal{Y}$ and $k_\mathcal{P}$ are universal kernels, then $k$ is characteristic and hence $\mathrm{KCE}_k = 0$ if and only if model $P$ is calibrated.

\section{Classification as a special case}
\label{app:classification}

We show that the calibration error introduced in
\cref{def:ce} is a generalization of the calibration error
for classification proposed by \citet{Widmann2019}.
Their formulation of the calibration error is based on
a weighted sum of class-wise discrepancies between the
left hand side and right hand side of \cref{def:calibration},
where the weights are output by a vector-valued function
of the predictions. Hence their framework can only be applied to
finite target spaces, i.e., if $|\mathcal{Y}| < \infty$.

Without loss of generality, we assume that
$\mathcal{Y} = \{1, \ldots, d\}$ for some
$d \in \mathbb{N} \setminus \{1\}$. In our notation, the
previously defined calibration error, denoted by
$\mathrm{CCE}$ (classification calibration error), with
respect to a function space
$\mathcal{G} \subset \{f \colon \mathcal{P} \to \mathbb{R}^d\}$
is given by
\begin{equation*}
    \mathrm{CCE}_{\mathcal{G}} \coloneqq \sup_{g \in \mathcal{G}} \bigg| \Exp_{P_X}\bigg(\sum_{y \in \mathcal{Y}} \big(\Prob(Y = y| P_X) - P_X(\{y\})\big) g_y\big(P_X\big) \bigg) \bigg|.
\end{equation*}
For the function class
\begin{equation*}
    \mathcal{F} \coloneqq \big\{f \colon \mathcal{P} \times \mathcal{Y} \to \mathbb{R}, (p, y) \mapsto g_y(p) \big| g \in \mathcal{G}\big\}
\end{equation*}
we get
\begin{equation*}
    \mathrm{CCE}_{\mathcal{G}}
    = \sup_{f \in \mathcal{F}} \big| \Exp_{P_X,Y} f(P_X, Y) - \Exp_{P_X,Z_X} f(P_X, Z_X)\big|
    = \mathrm{CE}_{\mathcal{F}}.
\end{equation*}
Similarly, for every function class
$\mathcal{F} \subset \{f \colon \mathcal{P} \times \mathcal{Y} \to \mathbb{R}\}$,
we can define the space
\begin{equation*}
    \mathcal{G} \coloneqq \Big\{g \colon \mathcal{P} \to \mathbb{R}^d,
    p \mapsto {\big(f(p, 1), \ldots, f(p, d)\big)^\mathsf{T}} \Big| f \in \mathcal{F} \Big\},
\end{equation*}
for which
\begin{equation*}
    \mathrm{CE}_{\mathcal{F}}
    = \sup_{g \in \mathcal{G}} \bigg| \Exp_{P_X}\bigg(\sum_{y \in \mathcal{Y}} \big(\Prob(Y = y| P_X) - P_X(\{y\})\big) g_y\big(P_X\big) \bigg) \bigg|
    = \mathrm{CCE}_{\mathcal{G}}.
\end{equation*}
Thus both definitions are equivalent for classification models but
the structure of the employed function classes differs. The
definition of $\mathrm{CCE}$ is based on vector-valued functions on
the probability simplex whereas the formulation presented in this paper uses
real-valued function on the product space of the probability simplex and
the targets.

An interesting theoretical aspect of this difference is that
in the case of $\mathrm{KCE}$ we consider real-valued kernels on
$\mathcal{P} \times \mathcal{Y}$ instead of
matrix-valued kernels on $\mathcal{P}$, as shown by the following
comparison. By $e_i \in \mathbb{R}^d$ we denote the $i$th unit vector,
and for a prediction $p \in \mathcal{P}$ its representation
$v_p \in \mathbb{R}^d$ in the probability simplex is defined as
\begin{equation*}
    {(v_p)}_y = p\big(\{y\}\big)
\end{equation*}
for all targets $y \in \mathcal{Y}$.

Let $k \colon (\mathcal{P} \times \mathcal{Y}) \times (\mathcal{P} \times \mathcal{Y}) \to \mathbb{R}$.
We define the matrix-valued function
$K \colon \mathcal{P} \times \mathcal{P} \to \mathbb{R}^{d \times d}$
by
\begin{equation*}
    \big[K(p, p')\big]_{y,y'} = k\big((p, y), (p', y')\big)
\end{equation*}
for all $y,y' \in \mathcal{Y}$ and $p, p' \in \mathcal{P}$.
From the positive definiteness of kernel $k$ it follows that
$K$ is a matrix-valued kernel~\citep[Definition~2]{Micchelli2005}.
We obtain
\begin{equation*}
    \begin{split}
        \mathrm{SKCE}_{k} ={}& \Exp_{P_X,Y,P_{X'},Y'} \big[K(P_X, P_{X'})\big]_{Y,Y'}
        - 2 \Exp_{P_X,Y,P_{X'},Z_{X'}} \big[K(P_X, P_{X'})\big]_{Y,Z_{X'}} \\
        &+ \Exp_{P_X,Z_X,P_{X'},Z_{X'}} \big[K(P_X, P_{X'})\big]_{Z_X,Z_{X'}} \\
        ={}& \Exp_{P_X,Y,P_{X'},Y'} e_Y^\mathsf{T} K(P_X, P_{X'}) e_{Y'}
        - 2 \Exp_{P_X,Y,P_{X'},Y'} e_Y^\mathsf{T} K(P_X, P_{X'}) v_{P_{X'}} \\
        & + \Exp_{P_X,Y,P_{X'},Y'} v_{P_X}^\mathsf{T} K(P_X, P_{X'}) v_{P_{X'}} \\
        ={}& \Exp_{P_X,Y,P_{X'},Y'}{(e_Y - v_{P_X})}^\mathsf{T} K(P_X, P_{X'}) (e_{Y'} - v_{P_{X'}}),
    \end{split}
\end{equation*}
which is exactly the result by \citet{Widmann2019}
for matrix-valued kernels.

As a concrete example, \citet{Widmann2019} used a
matrix-valued kernel of the form
$(p, p') \mapsto \exp{(- \gamma \|p - p'\|)} \mathbf{I}_d$
in their experiments. In our formulation this corresponds to the
real-valued tensor product kernel
$\big((p, y), (p', y')\big) \mapsto \exp{(- \gamma \|p - p'\|)} \delta_{y,y'}$.

\section{Temperature scaling}
\label{app:temperature}

Since many modern neural network models for classification have been
demonstrated to be uncalibrated~\citep{Guo2017}, it is
of high practical interest being able to improve calibration of
predictive models.
Generally, one distinguishes between calibration techniques that are
applied during training and post-hoc calibration methods that try to
calibrate an existing model after training.

Temperature scaling~\citep{Guo2017} is a simple calibration method 
for classification models with only one scalar parameter. Due to its
simplicity it can trade off calibration of different classes~\citep{Kull2019}, but
conveniently it does not change the most-confident prediction and
hence does not affect the accuracy of classification models with
respect to the 0-1 loss.

In regression, common post-hoc calibration methods are
based on quantile binning and hence insufficient for our framework.
\citet{Song2019} proposed a calibration method for regression models
with real-valued targets, based on a special case of \cref{def:calibration}.
This calibration method was shown to perform well empirically but is
computationally expensive and requires users to choose hyperparameters for
a Gaussian process model and its variational inference.
As a simpler alternative, we
generalize temperature scaling to arbitrary predictive models in the
following way.

\begin{definition}\label{def:temperature}
Let $P_x$ be the output of a probabilistic predictive model $P$ 
for feature $x$. If $P_x$ has probability density function $p_x$ with
respect to a reference measure $\mu$, then temperature scaling
with respect to $\mu$ with temperature $T > 0$ yields a new output
$Q_x$ whose probability density function $q_x$ with respect to $\mu$
satisfies
\begin{equation*}
    q_x \propto p_x^{1/T}.
\end{equation*}
\end{definition}

The notion for classification models given by \citet{Guo2017} can be
recovered by choosing the counting measure on the classes as reference
measure.

For some exponential families on $\mathbb{R}^d$ we obtain
particularly simple transformations with respect to the
Lebesgue measure $\lambda^d$ that keep the type of predicted
distribution and its mean invariant. Hence in contrast
to other calibration methods, for these models temperature
scaling yields analytically tractable distributions and does
not negatively impact the accuracy of the models with
respect to the mean squared error and the mean absolute
error.

For instance, temperature scaling of multivariate power exponential
distributions~\citep{Gomez1998} in $\mathbb{R}^d$,
of which multivariate normal distributions are a special case, with
respect to $\lambda^d$ corresponds to multiplication of their
scale parameter with $T^{1/\beta}$, where $\beta$ is the
so-called kurtosis parameter~\citep{GomezSanchezManzano2008}.
For normal distributions, this corresponds to multiplication
of the covariance matrix with $T$.

Similarly, temperature scaling of Beta and Dirichlet distributions
with respect to reference measure
\begin{equation*}
    \mu(\mathrm{d}x) \coloneqq x^{-1} (1-x)^{-1} \mathbbm{1}_{(0,1)}(x) \lambda^1(\mathrm{d}x)
\end{equation*}
and
\begin{equation*}
    \mu(\mathrm{d}x) \coloneqq \bigg(\prod_{i=1}^d x_i^{-1}\bigg) \mathbbm{1}_{(0,1)^d}(x) \lambda^d(\mathrm{d}x),
\end{equation*}
respectively, corresponds to division of the canonical parameters of these
distributions by $T$ without affecting the predicted mean value.

All in all, we see that temperature scaling for general
predictive models preserves some of the nice properties for classification
models. For some exponential families such as normal
distributions reference measure $\mu$ can be chosen such that temperature
scaling is a simple transformation of the parameters of the predicted
distributions (and hence leaves the considered model class invariant) that
does not affect accuracy of these models with respect to the mean squared error
and the mean absolute error.

\section{Expected calibration error for countably infinite discrete target spaces}
\label{app:ece_infinite}

In literature, $\mathrm{ECE}_d$ and $\mathrm{MCE}_d$ are defined for binary and multi-class
classification problems~\citep{Vaicenavicius2019,Guo2017,Naeini2015}. For common distance measures
on the probability simplex such as the total variation distance and the squared Euclidean distance,
$\mathrm{ECE}_d$ and $\mathrm{MCE}_d$ can be formulated as a calibration error in the framework of
\citet{Widmann2019}, which is a special case of the framework proposed in this paper for binary and
multi-class classification problems.

In contrast to previous approaches, our framework handles countably infinite discrete target spaces
as well. For every problem with countably infinitely many targets, such as, e.g., Poisson regression,
there exists an equivalent regression problem on the set of natural numbers. Hence without loss of generality
we assume $\mathcal{Y} = \mathbb{N}$. Denote the space of probability distributions on $\mathbb{N}$,
the infinite dimensional probability simplex, with $\Delta^\infty$. Clearly, $\Delta^\infty$ can be
viewed as a subspace of the sequence space $\ell^1$ that consists of all sequences $x = (x_n)_{n \in \mathbb{N}}$
with $x_n \geq 0$ for all $n \in \mathbb{N}$ and $\|x\|_{1} = 1$.

\begin{theorem}\label{thm:ece_infinite}
    Let $1 < p < \infty$ with Hölder conjugate $q$. If
    \begin{equation*}
        \mathcal{F} \coloneqq \{f \colon \Delta^{\infty} \times \mathbb{N} \to \mathbb{R} \mid
    \Exp_{P_X} \|\big(f(P_X, n)\big)_{n \in \mathbb{N}}\|_{p}^p \leq 1 \},
    \end{equation*}
    then
    \begin{equation*}
        \mathrm{CE}^q_{\mathcal{F}} = \Exp_{P_X} \|\Prob(Y|P_X) - P_X\|_{q}^q.
    \end{equation*}

    Let $\mu$ be the law of $P_X$. If $\mathcal{F} \coloneqq \{f \colon \Delta^{\infty} \times \mathbb{N} \to \mathbb{R} \mid \Exp_{P_X} \|(f(P_X,n))_{n \in \mathbb{N}}\|_1 \leq 1\}$,
    then
    \begin{equation*}
        \mathrm{CE}_{\mathcal{F}} = \mu\text{-}\esssup_{\xi \in \Delta^{\infty}} \sup_{y \in \mathbb{N}} |\Prob(Y=y|P_X=\xi) - \xi(\{y\})|.
    \end{equation*}
    Moreover, if
    $\mathcal{F} = \{f \colon \Delta^{\infty} \times \mathbb{N} \to \mathbb{R} \mid \mu\text{-}\esssup_{\xi \in \Delta^\infty} \sup_{y \in \mathbb{N}} |f(\xi, y)| \leq 1\}$,
    then
    \begin{equation*}
        \mathrm{CE}_{\mathcal{F}} = \Exp_{P_X} \|\Prob(Y|P_X) - P_X\|_1.
    \end{equation*}
\end{theorem}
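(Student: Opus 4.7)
The plan is to recast $\mathrm{CE}_{\mathcal{F}}$ as the supremum of a bilinear pairing against the pointwise defect $\Delta(\xi, n) \coloneqq \Prob(Y = n \mid P_X = \xi) - \xi(\{n\})$, and then to bound this pairing by a double Hölder inequality: first inside each fibre over $P_X$ using $\ell^p$/$\ell^q$ duality, and then in the outer expectation using $L^p(\mu)$/$L^q(\mu)$ duality. Since $Z_X$ given $P_X = \xi$ is distributed as $\xi$, the tower property gives
$$\Exp_{P_X, Y} f(P_X, Y) - \Exp_{P_X, Z_X} f(P_X, Z_X) = \Exp_{P_X} \sum_{n \in \mathbb{N}} f(P_X, n)\, \Delta(P_X, n),$$
so all three statements reduce to computing $\sup_{f \in \mathcal{F}} \bigl|\Exp_{P_X} \langle f(P_X, \cdot), \Delta(P_X, \cdot)\rangle\bigr|$ under the respective constraint on $f$. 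Each assertion simply identifies $\Delta$ in the dual of a combined Lebesgue--sequence space: $L^q(\mu; \ell^q)$ for the first, $L^\infty(\mu; \ell^\infty)$ for the second, and $L^1(\mu; \ell^1)$ for the third.

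For the first statement, applying Hölder in $\ell^p$/$\ell^q$ and then in $L^p(\mu)$/$L^q(\mu)$ gives
$$\bigl|\Exp_{P_X} \langle f(P_X, \cdot), \Delta(P_X, \cdot)\rangle\bigr| \leq \bigl(\Exp_{P_X} \|f(P_X, \cdot)\|_p^p\bigr)^{1/p} \bigl(\Exp_{P_X} \|\Delta(P_X, \cdot)\|_q^q\bigr)^{1/q},$$
whence $\mathrm{CE}_{\mathcal{F}}^q \leq \Exp_{P_X} \|\Delta(P_X, \cdot)\|_q^q$. I match this lower bound with
$$f(\xi, n) \coloneqq c\, \operatorname{sign}\bigl(\Delta(\xi, n)\bigr)\, |\Delta(\xi, n)|^{q-1}, \qquad c \coloneqq \bigl(\Exp_{P_X} \|\Delta(P_X, \cdot)\|_q^q\bigr)^{-1/p},$$
for which $p(q - 1) = q$ forces equality in both Hölder steps while $c$ saturates the constraint $\Exp_{P_X} \|f(P_X, \cdot)\|_p^p = 1$. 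For the second statement, the inner $\ell^1$/$\ell^\infty$ bound combined with pulling the constant $M \coloneqq \mu\text{-}\esssup \|\Delta(P_X, \cdot)\|_\infty$ out of the expectation gives the upper bound; saturation is obtained, for each $\varepsilon > 0$, by taking $E_\varepsilon \coloneqq \{\xi : \|\Delta(\xi, \cdot)\|_\infty > M - \varepsilon\}$, defining $y^\ast(\xi) \coloneqq \min\{n : |\Delta(\xi, n)| > M - \varepsilon\}$ on $E_\varepsilon$, setting $f(\xi, n) \coloneqq \mu(E_\varepsilon)^{-1} \mathbbm{1}_{E_\varepsilon}(\xi)\, \operatorname{sign}\bigl(\Delta(\xi, y^\ast(\xi))\bigr)\, \mathbbm{1}_{\{n = y^\ast(\xi)\}}$, and letting $\varepsilon \downarrow 0$. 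The third statement is the simplest: $|f(\xi, n)| \leq 1$ almost surely gives $|\langle f(\xi, \cdot), \Delta(\xi, \cdot)\rangle| \leq \|\Delta(\xi, \cdot)\|_1$, and $f(\xi, n) \coloneqq \operatorname{sign}(\Delta(\xi, n))$ attains the bound.

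The main obstacle is not the Hölder computation but the measurability and approximation details needed to certify each extremizer as an admissible element of $\mathcal{F}$. In the second case the coordinate selector $y^\ast$ must be Borel-measurable; this is granted by the countable-infimum definition together with the Borel-measurability of each $\Delta(\cdot, n)$, which in turn follows from the existence of a regular conditional distribution for $Y$ given $P_X$. In the first and third cases the candidate $f$ is a pointwise algebraic function of $\Delta$, so measurability is immediate; one merely needs to set $f \equiv 0$ on the null set where $\Delta(\xi, \cdot)$ vanishes (in the first case) and to verify that the normalizing constant $c$ is finite, which follows from $\|\Delta(\xi, \cdot)\|_q^q \leq \|\Delta(\xi, \cdot)\|_1 \leq 2$ for $q \geq 1$, with the degenerate case $\Exp_{P_X} \|\Delta(P_X, \cdot)\|_q^q = 0$ rendering the conclusion trivial.
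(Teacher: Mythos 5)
Your proof is correct and follows essentially the same route as the paper's: both reduce $\mathrm{CE}_{\mathcal{F}}$ to the pairing of $f$ against the defect $\delta(\xi,y)=\Prob(Y=y\mid P_X=\xi)-\xi(\{y\})$ and then invoke $L^p$/$L^q$ duality with respect to the product of $\mu$ and counting measure on $\mathbb{N}$. The only difference is presentational --- the paper cites the extremal equality in H\"older's inequality for $\sigma$-finite measures as a black box, whereas you prove that equality by exhibiting the (near-)extremizers explicitly, together with the measurability and degeneracy checks, which is a sound and self-contained substitute.
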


\begin{proof}
    Let $1 \leq p \leq \infty$, and let $\mu$ be the law of $P_X$ and $\nu$ be the counting measure on $\mathbb{N}$.
    Since both $\mu$ and $\nu$ are $\sigma$-finite measures, the product measure $\mu \otimes \nu$ is uniquely
    determined and $\sigma$-finite as well. Using these definitions, we can reformulate $\mathcal{F}$ as
    \begin{equation*}
        \mathcal{F} =  \{f \in L^p(\Delta^\infty \times \mathbb{N}; \mu \otimes \nu) \mid \|f\|_{p; \mu \otimes \nu} \leq 1\}.
    \end{equation*}

    Define the function $\delta \colon \Delta^{\infty} \times \mathbb{N} \to \mathbb{R}$ $(\mu \otimes \nu)$-almost
    surely by
    \begin{equation*}
        \delta(\xi, y) \coloneqq \Prob(Y = y \,|\, P_X = \xi) - \xi(\{y\}).
    \end{equation*}
    Note that $\delta$ is well-defined since we assume that all singletons on $\Delta^{\infty}$ are
    $\mu$-measurable. Moreover, $\delta \in L^q(\Delta^\infty \times \mathbb{N}; \mu \otimes \nu)$,
    which follows from $(\xi, y) \mapsto \Prob(Y = y \,|\, P_X = \xi)$
    and $(\xi, y) \mapsto \xi(\{y\})$ being functions in $L^q(\Delta^\infty \times \mathbb{N}; \mu \otimes \nu)$.

    Since $\mu \otimes \nu$ is a $\sigma$-finite measure, the extremal equality of Hölder's inequality implies that
    \begin{equation*}
        \begin{split}
            \mathrm{CE}_{\mathcal{F}} &= \sup_{f \in \mathcal{F}} \Exp_{P_X,Y} f(P_X,Y) - \Exp_{P_X,Z_X} f(P_X, Z_X) \\
            &= \sup_{f \in \mathcal{F}} \bigg| \Exp_{P_X,Y} f(P_X,Y) - \Exp_{P_X,Z_X} f(P_X, Z_X) \bigg| \\
            &= \sup_{f \in \mathcal{F}} \bigg| \int_{\Delta^{\infty} \times \mathbb{N}} f(\xi, y) \delta(\xi, y) \,(\mu \otimes \nu)(\mathrm{d}(\xi, y)) \bigg| \\
            &= \|\delta\|_{q;\mu \otimes \nu}.
        \end{split}
    \end{equation*}
    Note that the second equality follows from the symmetry of the function spaces $\mathcal{F}$: for every $f \in \mathcal{F}$, also $-f \in \mathcal{F}$.

    Hence for $1 < p \leq \infty$, we obtain
    \begin{equation*}
        \begin{split}
            \mathrm{CE}_{\mathcal{F}}^q &= \int_{\Delta^\infty \times \mathbb{N}} |\delta(\xi, y)|^{q} \,(\mu \otimes \nu)(\mathrm{d}(\xi, y)) \\
            &= \Exp_{P_X} \|(\delta(P_X, y))_{y \in \mathbb{N}}\|_q^q = \Exp_{P_X} \|\Prob(Y |P_X) - P_X\|_q^q.
        \end{split}
    \end{equation*}
    For $p = 1$, we get
    \begin{equation*}
        \begin{split}
            \mathrm{CE}_{\mathcal{F}} = \mu\text{-}\esssup_{\xi \in \Delta^{\infty}} \sup_{y \in \mathbb{N}} |\delta(\xi, y)| = \mu\text{-}\esssup_{\xi \in \Delta^{\infty}} \sup_{y \in \mathbb{N}} |\Prob(Y=y|P_X=\xi) - \xi(\{y\})|,
        \end{split}
    \end{equation*}
    which concludes the proof.
\end{proof}

We see that our framework deals with countably infinite discrete target spaces seamlessly whereas the previously
proposed framework by \citet{Widmann2019} is not applicable to such spaces. It is mathematically pleasing to see
that for countably infinite discrete targets the calibration errors obtained in \cref{thm:ece_infinite} within
our framework coincide with the natural generalization of $\mathrm{ECE}_d$ and $\mathrm{MCE}_d$ given in
\cref{app:ece_mce}.

\end{document}